\newtheorem{definition}{Definition}
\newtheorem{assumption}{Assumption}
\newtheorem{theorem}{Theorem}
\newtheorem{proposition}{Proposition}
\newtheorem{lemma}{Lemma}
\newtheorem{claim}{Claim}
\newcommand{\bx}{{\bm{x}}}
\newcommand{\by}{{\bm{y}}}
\newcommand{\bfI}{{\mathbf{I}}}
\newcommand{\bfJ}{{\mathbf{J}}}
\newcommand{\bbI}{{\mathbbm{1}}}
\newcommand{\bbN}{{\mathbb{N}}}
\newcommand{\bbR}{{\mathbb{R}}}
\DeclareMathOperator{\Conv}{Conv}
\DeclareMathOperator{\cl}{cl}
\DeclareMathOperator{\argmax}{arg\,max}
\newcommand{\hyt}[1]{{\rm(\hypertarget{#1}{#1})}}
\newcommand{\hyl}[1]{{\rm(\hyperlink{#1}{#1})}}
\begin{document}
\title{Convergence Analysis of Mean Shift}
\author{Ryoya Yamasaki, and Toshiyuki Tanaka, \IEEEmembership{Member, IEEE}%
\IEEEcompsocitemizethanks{\IEEEcompsocthanksitem%
Ryoya Yamasaki and Toshiyuki Tanaka are with the Department of Informatics, 
Graduate School of Informatics, Kyoto University, Kyoto, 606-8501, Japan. 
E-mail: yamasaki@sys.i.kyoto-u.ac.jp, tt@i.kyoto-u.ac.jp.
This work has been submitted to the IEEE for possible publication. Copyright may be transferred without notice, after which this version may no longer be accessible.}}
\markboth{Preprint Version}
{Shell \MakeLowercase{\textit{et al.}}: Convergence Analysis of Mean Shift}
\IEEEtitleabstractindextext{%
\begin{abstract}
The mean shift (MS) algorithm seeks a mode of the kernel density estimate (KDE).
This study presents a convergence guarantee of the mode estimate sequence 
generated by the MS algorithm 
and an evaluation of the convergence rate, under fairly mild conditions, 
with the help of the argument concerning the {\L}ojasiewicz inequality.
Our findings extend existing ones covering analytic kernels and the Epanechnikov kernel.
Those are significant in that they cover the biweight kernel, 
which is optimal among non-negative kernels in terms of 
the asymptotic statistical efficiency for the KDE-based mode estimation.
\end{abstract}
\begin{IEEEkeywords}
Mean shift, convergence, convergence rate, {\L}ojasiewicz inequality, biweight kernel
\end{IEEEkeywords}}
\maketitle
\IEEEdisplaynontitleabstractindextext
\IEEEpeerreviewmaketitle
\IEEEraisesectionheading{%
\section{Introduction}
\label{sec:Intro}}
\IEEEPARstart{T}{he} mean shift (MS) algorithm \cite{fukunaga1975estimation, 
cheng1995mean, comaniciu2002mean} has been widely used 
in various fields such as computer vision, image processing, 
pattern recognition, and statistics.
One of its popular applications is data clustering
\cite{wu2007mean, chacon2018mixture}, 
where the MS algorithm is advantageous in that 
it does not need to specify the number of clusters in advance.
Other advantages of the MS-based clustering
compared with the $k$-means clustering are that
it does not require proper initialization of cluster centers,
as well as that it can cope with arbitrary cluster shapes.
Other applications of the MS algorithm include 
image segmentation \cite{comaniciu2002mean, tao2007color}, 
edge detection \cite{guo2005mean, zhu2009edge}, 
object tracking \cite{Coma2003OT, yang2005efficient}, 
and mode estimation \cite{parzen1962estimation, yamasaki2023optimal},
to mention a few.

The MS algorithm is an iterative algorithm that seeks a mode (local maximizer)
of the kernel density estimate (KDE).
Applications of the MS algorithm, such as data clustering and mode estimation, 
require the convergence of the mode estimate sequence generated by the MS algorithm.
It is therefore important to theoretically study 
convergence properties of the MS algorithm. 
However, as will be reviewed in Section~\ref{sec:Related},
available theoretical convergence guarantees of the MS algorithm 
which are applicable to practically relevant situations are quite limited:
As dynamical behaviors of the MS algorithm depend on the kernel
to be used in constructing the KDE,
convergence properties should also depend on the choice of the kernel. 
To the best of the authors' knowledge, the MS algorithm 
for multi-dimensional data has been shown to converge when
the Epanechnikov kernel \cite{comaniciu1999mean, Huang2018} 
or an analytic kernel \cite{yamasaki2019ms} is used.
These results do not cover practically relevant cases where
a piecewise polynomial kernel other than the Epanechnikov kernel is used.
Furthermore, little is known about the convergence rate of the MS algorithm.

In this paper we study convergence properties 
of the MS algorithm under some generic assumptions on the kernel. 
From a technical point of view, 
we follow a line similar to that of \cite{yamasaki2019ms} that focused on 
the {\L}ojasiewicz property \cite{lojasiewicz1965ensembles, absil2005convergence}:
this property ensures that a function under consideration is not too flat around its critical point,
and allows us to transfer a simpler convergence analysis of a sequence of KDE values 
for mode estimates into a convergence analysis of the mode estimate sequence itself.
More concretely, we make use of more advanced results 
\cite{kurdyka1994wf, attouch2013convergence, frankel2015splitting} about that property, 
to further extend the convergence analysis \cite{yamasaki2019ms} for analytic kernels to 
that for kernels characterized in terms of subanalyticity~\cite{kurdyka1994wf} in relation to the {\L}ojasiewicz property:
this extension allows us to obtain novel results, which include a convergence guarantee of 
the mode estimate sequence (Theorems~\ref{thm:MS-GCG} and \ref{thm:MS-CG}) 
and a worst-case bound of the convergence rate (Theorems~\ref{thm:MS-Rate} 
and \ref{thm:MS-Upper}) of the MS algorithm for a wider class of the kernels. 
%
Our contributions are of significance as the class of the kernels
we focus on in this study contains the biweight kernel,
which is known to be optimal among non-negative kernels 
in terms of the asymptotic statistical efficiency for 
the KDE-based estimation of a non-degenerate mode 
\cite{granovsky1991optimizing, yamasaki2023optimal}.

This paper is organized as follows.
We formulate the MS algorithm in Section~\ref{sec:Target},
and review related work on the convergence analysis 
of the MS algorithm in Section~\ref{sec:Related}.
In Section~\ref{sec:Prelim}, we describe the {\L}ojasiewicz property, 
and summarize the class of functions having that property.
On the basis of these preliminaries and abstract convergence theorems 
by \cite{attouch2013convergence, frankel2015splitting}, we provide 
a novel sufficient condition to ensure the convergence of the MS algorithm
and an evaluation of the convergence rate in Section~\ref{sec:Main}.
In Section~\ref{sec:Concl}, we conclude this paper,
and furthermore, we mention variants of the MS algorithm 
to which the analysis of this paper can be applied similarly, 
and possible directions for future research.
Supplementary material provides proofs of the theoretical results.

\section{MS Algorithm}
\label{sec:Target}
Various applications of the MS algorithm stem from
the characterization that the MS algorithm is an optimization algorithm
seeking a local maximizer of the KDE. 
Given $n$ data points $\bx_1,\ldots,\bx_n\in\bbR^d$, 
the KDE is constructed as
\begin{align}
\label{eq:KDE}%
	f(\bx)\coloneq
	\frac{1}{n h^d}\sum_{i=1}^n K\biggl(\frac{\bx-\bx_i}{h}\biggr),
\end{align}
where $K:\bbR^d\to\bbR$ and $h>0$ are called the 
kernel and the bandwidth parameter, respectively. 
Throughout this paper, for the kernel $K$
we adopt the following assumption, which is common in
studies of the MS algorithm: 
\begin{assumption}
\label{asm:RS}%
The kernel $K$ is bounded, continuous, non-negative, 
normalized, and radially symmetric.
\end{assumption}
The assumption of radial symmetry of the kernel $K$ 
leads to its alternative representation
\begin{align}
\label{eq:Pro-Ker}
	K(\bx)=\hat{K}(\|\bx\|^2/2)
\end{align} 
with what is called the profile $\hat{K}:[0,\infty)\to\bbR$ 
of $K$ and the Euclidean norm $\|\cdot\|$ in $\bbR^d$.

As mentioned by \cite{fashing2005mean, lange2016mm}, 
the MS algorithm can be seen as an example of 
the minorize-maximize (MM) algorithm under a certain condition.
The MM algorithm solves a hard original optimization problem
by iteratively performing construction of what is called a minorizer
of the original objective function and optimization of the minorizer. 
Let us write the right and left derivatives of $\hat{K}$,
if exist, as
\begin{align}
	\hat{K}'(u\pm)=\lim_{\nu\to u\pm0}\frac{\hat{K}(\nu)-\hat{K}(u)}{\nu-u}.
\end{align}
We make the following assumption for the profile $\hat{K}$ of the kernel $K$: 
\begin{assumption}
\label{asm:QM}%
The kernel $K$ has a convex and non-increasing profile $\hat{K}$ 
satisfying $\hat{K}'(0+)>-\infty$.
\end{assumption}
For a real-valued function $g$ defined on $S\subseteq\bbR$,
the subdifferential $\partial g(u)$ of $g$ at $u\in S$ is defined as
the set of values $c\in\bbR$ such that $g(v)-g(u)\ge c(v-u)$ holds 
for any $v\in S$. 
Under Assumption~\ref{asm:QM}, since the profile $\hat{K}$ is convex,
the subdifferential $\partial\hat{K}(u)$ is non-empty
for any $u\in(0,\infty)$ and given by $[\hat{K}'(u-),\hat{K}'(u+)]$.
Note that $\partial\hat{K}(0)=(-\infty,\hat{K}'(0+)]$ is non-empty as well 
under the assumption $\hat{K}'(0+)>-\infty$.
Since $\partial\hat{K}(u)$ is non-empty for any $u\in[0,\infty)$, 
one can show that the subdifferential $\partial\hat{K}(u)$
is non-decreasing in the sense that for $0\le u<v$ one has
$\max\partial\hat{K}(u)\le\min\partial\hat{K}(v)$:
Indeed, for any $u,v$ with $0\le u<v$,
take any $c_u\in\partial\hat{K}(u)$ and $c_v\in\partial\hat{K}(v)$.
From the definition of the subdifferential, one has 
$\hat{K}(v)-\hat{K}(u)\ge c_u(v-u)$ and $\hat{K}(u)-\hat{K}(v)\ge c_v(u-v)$,
which are summed up to $0\ge(c_u-c_v)(v-u)$, yielding $c_u\le c_v$.
See also \cite[Section 24]{rockafellar1997convex} for these properties
of subdifferentials of functions on $\bbR$.
Furthermore, as the profile $\hat{K}$ is non-increasing,
for any $u\in[0,\infty)$ one has $\max\partial\hat{K}(u)\le0$. 
Thus, defining a function $\check{K}$ on $[0,\infty)$ via 
\begin{align}
\label{eq:SD-Pro-Ker}%
	\check{K}(u)
	\begin{cases}
	\coloneq -\hat{K}'(0+)&\text{if }u=0,\\
	\in -\partial\hat{K}(u)&\text{if }u>0,
	\end{cases}
\end{align}
it is non-increasing, non-negative, and bounded since
$\check{K}(u)\le\check{K}(0)=-\hat{K}'(0+)<\infty$ for any $u\in[0,\infty)$
due to Assumption~\ref{asm:QM}.

As $-\check{K}(u)\in\partial\hat{K}(u)$, 
the definition of the subdifferential yields
$\hat{K}(v)-\hat{K}(u)\ge-\check{K}(u)(v-u)$ for any $u,v\in[0,\infty)$.
Substituting $(u,v)=(\|\bx'\|^2/2,\|\bx\|^2/2)$ into this inequality, one has 
\begin{align}
\label{eq:Mino-Ker}%
	K(\bx)\ge\bar{K}(\bx|\bx')\coloneq 
	K(\bx')+\frac{\check{K}(\|\bx'\|^2/2)}{2}(\|\bx'\|^2-\|\bx\|^2)
\end{align}
for any $\bx,\bx'\in\bbR^d$.
One also has $K(\bx')=\bar{K}(\bx'|\bx')$.
These properties imply that, under Assumptions~\ref{asm:RS} and \ref{asm:QM}, 
$\bar{K}(\bx|\bx')$ is a minorizer of the kernel $K$ at $\bx'$.

It should be noted that there is arbitrariness
in the definition~\eqref{eq:SD-Pro-Ker} of $\check{K}(u)$
at those values of $u$ at which $\partial\hat{K}(u)$ contains
more than a single value.
For example, the profile of the Epanechnikov kernel is given by
$\hat{K}(u)=C(1-u)_+$ with $C>0$, where $(\cdot)_+\coloneq\max\{\cdot,0\}$,
and thus $\partial\hat{K}(1)=[-C,0]$.
In this case one may adopt any value in the interval $[0,C]$ as
$\check{K}(1)$. 
Indeed, \cite{comaniciu1999mean} adopted $\check{K}(1)=C$,
whereas \cite{Huang2018} adopted $\check{K}(1)=0$.
We would like to note here that the following analysis is not affected
by how $\check{K}(u)$ is defined at such points.

The MS algorithm given a $t$th estimate $\by_t\in\bbR^d$ 
builds a minorizer of the KDE $f$ at $\by_t$ as
\begin{align}
\label{eq:Mino-KDE1}%
	\begin{split}
	\bar{f}(\bx|\by_t)
	&\coloneq\frac{1}{n h^d}\sum_{i=1}^n
	\bar{K}\biggl(\frac{\bx-\bx_i}{h}\biggl|\frac{\by_t-\bx_i}{h}\biggr)\\
	&=-\frac{1}{2 n h^{d+2}}\sum_{i=1}^n
	\check{K}\biggl(\biggl\|\frac{\by_t-\bx_i}{h}\biggr\|^2\biggr/2\biggr)\|\bx-\bx_i\|^2\\
	&+\text{($\bx$-independent constant)},
	\end{split}
\end{align}
which satisfies $\bar{f}(\by_t|\by_t)=f(\by_t)$ and 
$\bar{f}(\bx|\by_t)\le f(\bx)$ for any $\bx\in\bbR^d$.
Introduce a function
\begin{align}
\label{eq:coeff2}
	\check{f}(\bx)
	\coloneq\frac{1}{n h^d}\sum_{i=1}^n
	\check{K}\biggl(\biggl\|\frac{\bx-\bx_i}{h}\biggr\|^2\biggr/2\biggr),
\end{align}
with which the coefficient of the quadratic term $\|\bx\|^2$
in $\bar{f}(\bx|\by_t)$ is expressed as $-\check{f}(\by_t)/(2h^2)$. 
Assumption~\ref{asm:QM} ensures that $\check{f}(\bx)$ is non-negative
due to the non-negativity of $\check{K}(u)$. 
Furthermore, if $\check{f}(\by_t)=0$, then all the summands
on the right-hand side of \eqref{eq:coeff2} are zero
and hence the function $\bar{f}(\cdot|\by_t)$ is constant.
If $\check{f}(\by_t)>0$, on the other hand,
then the function $\bar{f}(\cdot|\by_t)$ is quadratic
and has a unique maximizer.

The MS algorithm then calculates the next estimate $\by_{t+1}$ as 
$\by_{t+1}\in\argmax_{\bx\in\bbR^d}\bar{f}(\bx|\by_t)$. 
More specifically, the MS algorithm calculates $\by_{t+1}$ via 
\begin{align}
\label{eq:MS-Iter}%
	\by_{t+1}
	=\by_t+\bm{m}(\by_t),
\end{align}
where
\begin{align}
	\bm{m}(\by)
	\coloneq\begin{cases}
	\bm{0}
	&\text{if }\check{f}(\by)=0,\\
	-\frac{\sum_{i=1}^n\check{K}(\|\frac{\by-\bx_i}{h}\|^2/2)(\by-\bx_i)}
	{\sum_{i=1}^n\check{K}(\|\frac{\by-\bx_i}{h}\|^2/2)}
	&\text{if }\check{f}(\by)\neq0,
	\end{cases}
\end{align}
with the all-zero vector $\bm{0}\in\bbR^d$.
The MS algorithm iterates the update rule \eqref{eq:MS-Iter} 
starting from a given initial estimate $\by_1\in\bbR^d$ while 
incrementing the subscript $t\in\bbN$.
Therefore, the MS algorithm can be regarded as an instance
of the MM algorithm.

Here, the update rule when $\check{f}(\by_t)=0$ is an exception-handling rule to
avoid the MS algorithm to be ill-defined due to the denominator ($=nh^d\check{f}(\by_t)$) 
of the ordinary update rule being zero. 
Under Assumptions~\ref{asm:RS} and \ref{asm:QM},
if $\check{f}(\by_t)=0$ then the gradient of the KDE $f$ also vanishes, 
that is, $\by_t$ is a critical point of $f$.
Therefore, the exception-handling rule ensures
the MS algorithm to stop at a critical point.
Also, the following proposition shows that no such exception 
occurs if one selects an initial estimate $\by_1$ properly:
\begin{proposition}
\label{prop:fw}
Assume Assumptions~\ref{asm:RS} and \ref{asm:QM}.
Let $(\by_t)_{t\in\bbN}$ be the mode estimate sequence
obtained by the MS algorithm~\eqref{eq:MS-Iter} starting from $\by_1$
with $f(\by_1)>0$.
Then, there exists a constant $C>0$ such that 
$\check{K}(\|\frac{\by_t-\bx_{i_t}}{h}\|^2/2)\ge C$ for some $t$-dependent 
index $i_t\in[n]\coloneq\{1,\ldots,n\}$, and consequently $\check{f}(\by_t)\ge\frac{C}{n h^d}$ for any $t\in\bbN$.
\end{proposition}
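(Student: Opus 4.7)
The plan is to combine the minorization-maximization structure with a convexity argument on the profile. First, I would show by induction that $f(\by_t) \ge f(\by_1) > 0$ for all $t \in \bbN$: the minorizer relations $\bar{f}(\bx|\by_t) \le f(\bx)$ and $\bar{f}(\by_t|\by_t) = f(\by_t)$, combined with $\by_{t+1} \in \argmax_{\bx} \bar{f}(\bx|\by_t)$ (or $\by_{t+1} = \by_t$ in the exceptional case $\check{f}(\by_t) = 0$), yield $f(\by_{t+1}) \ge \bar{f}(\by_{t+1}|\by_t) \ge \bar{f}(\by_t|\by_t) = f(\by_t)$. Writing $u_{t,i} \coloneq \|(\by_t - \bx_i)/h\|^2/2$, the identity $\sum_{i=1}^n \hat{K}(u_{t,i}) = n h^d f(\by_t)$ and the pigeonhole principle then furnish an index $i_t$ with $\hat{K}(u_{t,i_t}) \ge c_1 \coloneq h^d f(\by_1) > 0$.

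Second, I would bound $u_{t,i_t}$ uniformly in $t$. Since $\hat{K}$ is non-increasing and non-negative, it admits a limit $L \ge 0$ at infinity with $L \le \hat{K}(u)$ for every $u$; if $L > 0$, then $K(\bx) \ge L > 0$ on $\bbR^d$ would contradict normalization, so $L = 0$ and the super-level set $\{u \ge 0 : \hat{K}(u) \ge c_1\}$ is bounded. Setting $u^* \coloneq \sup\{u \ge 0 : \hat{K}(u) \ge c_1\} < \infty$, continuity of $\hat{K}$ gives $\hat{K}(u^*) = c_1 > 0$ and $u_{t,i_t} \le u^*$. The decisive auxiliary fact is that, under Assumptions~\ref{asm:RS} and \ref{asm:QM}, $\hat{K}(u_0) > 0$ implies $\hat{K}'(u_0+) < 0$: otherwise convexity would force $\hat{K}'(v+) \ge 0$ for $v \ge u_0$ while non-increase forces $\hat{K}'(v+) \le 0$, making $\hat{K}$ constant at $\hat{K}(u_0) > 0$ on $[u_0, \infty)$ and contradicting $L = 0$.

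Applying this at $u^*$, and combining $\check{K}(u) \ge -\hat{K}'(u+)$ (which follows from $\check{K}(u) \in -\partial\hat{K}(u)$ together with the structure of $\partial\hat{K}$ recalled in the excerpt) with the monotonicity of $\check{K}$, I obtain $\check{K}(u_{t,i_t}) \ge \check{K}(u^*) \ge -\hat{K}'(u^*+) > 0$. Taking $C \coloneq -\hat{K}'(u^*+)$ proves the first assertion, and the non-negativity of $\check{K}$ then yields $\check{f}(\by_t) \ge \check{K}(u_{t,i_t})/(n h^d) \ge C/(n h^d)$. The main obstacle is the auxiliary fact: a pointwise lower bound on $\hat{K}$ does not \emph{a priori} entail a lower bound on its subdifferential, since a convex non-increasing function could in principle plateau at a positive value; it is the combination of convexity, monotonicity, and integrability that excludes this pathology, and everything else is routine MM bookkeeping and pigeonhole.
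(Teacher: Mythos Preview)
Your proposal is correct and follows essentially the same route as the paper: the paper likewise uses the ascent property to obtain $\hat{K}(u_{t,i_t})\ge h^d f(\by_1)$ via pigeonhole, then invokes an auxiliary lemma stating that $\{u:\hat{K}(u)\ge C_1\}\subseteq\{u:\check{K}(u)\ge C_2\}$ for some $C_2>0$, whose proof is precisely your ``plateau-contradicts-normalization'' argument. The only cosmetic difference is that you inline that lemma and name the constant explicitly as $C=-\hat{K}'(u^*+)$, whereas the paper writes $C=\check{K}(a)$ with $a=u^*$; since $\check{K}(u^*)\ge -\hat{K}'(u^*+)$, your constant is possibly slightly smaller but equally valid.
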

For example, in the data clustering with the MS algorithm \cite{wu2007mean, chacon2018mixture},
one adopts each data point $\bx_i$ as the initial estimate $\by_1$, 
and hence the additional assumption $f(\by_1)>0$ definitely holds.

The above construction of the MS algorithm as the MM algorithm shows
the ascent property $f(\by_t)=\bar{f}(\by_t|\by_t)\le\bar{f}(\by_{t+1}|\by_t)
\le f(\by_{t+1})$ of the density estimate sequence $(f(\by_t))_{t\in\bbN}$, 
and the boundedness of the KDE $f$ (due to Assumption~\ref{asm:RS}) 
guarantees the convergence of that sequence:
\begin{proposition}[{Theorem 1 in \cite{yamasaki2019ms}}]
\label{prop:dens}
Assume Assumptions~\ref{asm:RS} and \ref{asm:QM}.
For the mode estimate sequence $(\by_t)_{t\in\bbN}$ obtained by 
the MS algorithm \eqref{eq:MS-Iter} starting from any $\by_1\in\bbR^d$,
the density estimate sequence $(f(\by_t))_{t\in\bbN}$ is non-decreasing and converges.
\end{proposition}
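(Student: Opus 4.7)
The plan is to combine the minorization-maximization (MM) structure that the excerpt has already established with the monotone convergence theorem from elementary real analysis. The two ingredients needed are (i) monotonicity of $(f(\by_t))_{t\in\bbN}$ and (ii) its boundedness from above; given these, convergence is immediate.

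For monotonicity, I would chain together the three inequalities that the excerpt has essentially set up. First, from the construction of $\bar{f}(\cdot|\by_t)$ as a minorizer (equation \eqref{eq:Mino-KDE1}) one has $\bar{f}(\by_t|\by_t)=f(\by_t)$ and $\bar{f}(\bx|\by_t)\le f(\bx)$ for every $\bx\in\bbR^d$. Second, the update rule \eqref{eq:MS-Iter} selects $\by_{t+1}$ as a maximizer of $\bar{f}(\cdot|\by_t)$ in the case $\check{f}(\by_t)\neq0$ (where $\bar{f}(\cdot|\by_t)$ is a strictly concave quadratic, so the maximizer is well defined), while in the exceptional case $\check{f}(\by_t)=0$ the function $\bar{f}(\cdot|\by_t)$ is constant so $\by_{t+1}=\by_t$ trivially maximizes it. In either case $\bar{f}(\by_{t+1}|\by_t)\ge\bar{f}(\by_t|\by_t)$. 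Combining yields
\begin{align*}
f(\by_t)=\bar{f}(\by_t|\by_t)\le\bar{f}(\by_{t+1}|\by_t)\le f(\by_{t+1}),
\end{align*}
so $(f(\by_t))_{t\in\bbN}$ is non-decreasing.

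For boundedness from above, I would use Assumption~\ref{asm:RS}: since $K$ is bounded, say $K(\bx)\le M$ for all $\bx$, the definition \eqref{eq:KDE} of the KDE gives $f(\bx)\le M/h^d$ uniformly in $\bx$. Thus $(f(\by_t))_{t\in\bbN}$ is a bounded non-decreasing sequence of real numbers, and the monotone convergence theorem yields its convergence to a finite limit.

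There is no real obstacle here: every nontrivial step (the minorization inequality, the fact that $\by_{t+1}$ maximizes $\bar{f}(\cdot|\by_t)$, and the boundedness of $K$) has already been verified in the preceding discussion. The only thing to be mildly careful about is handling the exception-handling branch $\check{f}(\by_t)=0$ uniformly with the generic branch, which is done by simply noting that $\bar{f}(\cdot|\by_t)$ is then constant so any point, including $\by_{t+1}=\by_t$, is a maximizer and the chain of inequalities still holds with equality.
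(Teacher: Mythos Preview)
Your proof is correct and follows essentially the same approach as the paper: the text immediately preceding Proposition~\ref{prop:dens} already states the ascent chain $f(\by_t)=\bar{f}(\by_t|\by_t)\le\bar{f}(\by_{t+1}|\by_t)\le f(\by_{t+1})$ and appeals to boundedness of $K$ (Assumption~\ref{asm:RS}) for convergence, and the supplementary material simply cites prior work or refers to Lemma~\ref{lem:A}, whose proof again rests on exactly this MM inequality. Your explicit treatment of the exception branch $\check{f}(\by_t)=0$ is a nice touch that the paper leaves implicit.
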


The above proposition guarantees the convergence of
the density estimate sequence $(f(\by_t))_{t\in\bbN}$ generated by the MS algorithm.
From the application point of view, however, what we are interested in 
is not the convergence of the density estimate sequence $(f(\by_t))_{t\in\bbN}$
but that of the mode estimate sequence $(\by_t)_{t\in\bbN}$,
since it is the limit $\lim_{t\to\infty}\by_t$, if exists, that
will tell us the location of a mode or a cluster center. 
The difficulty here is that one cannot deduce the convergence of 
the mode estimate sequence $(\by_t)_{t\in\bbN}$ from the convergence of 
the density estimate sequence $(f(\by_t))_{t\in\bbN}$ without additional assumptions. 
Our main interest in this paper lies in convergence properties of 
the mode estimate sequence $(\by_t)_{t\in\bbN}$ obtained by the MS algorithm, 
such as whether it converges to a critical point, 
as well as its convergence rate when it converges.

\section{Related Work}
\label{sec:Related}
Convergence properties of the mode estimate sequence $(\by_t)_{t\in\bbN}$ 
have been discussed in several papers.
Some early convergence studies are, however, not rigorous. 
For instance, the proof in \cite{comaniciu2002mean} 
used an incorrect inequality evaluation to claim 
that the mode estimate sequence is a Cauchy sequence;
see counterexamples given in \cite{LiHuWu2007}.
Essentially the same flaw had been shared by \cite{arias2016estimation} 
in the discussion of consistency, which was subsequently amended 
in the errata \cite{Errata} to \cite{arias2016estimation}.
\cite{carreira2007gaussian} claimed the convergence of 
the mode estimate sequence under the assumption 
that the MS algorithm uses the Gaussian kernel,
on the basis of the fact that the MS algorithm under this assumption
is an example of the expectation-maximization (EM) algorithm
\cite{DempsterLairdRubin1977}. 
As pointed out by \cite{Aliyari2013}, 
however, this reasoning alone is not enough: 
the EM algorithm can be viewed as a sort of the MM algorithm and 
may not converge without additional conditions \cite{boyles1983convergence},
which is similar to the situation for the MS algorithm reviewed in Section~\ref{sec:Target}.

Later studies have successfully provided some sufficient conditions 
for the convergence of the mode estimate sequence.
In \cite{LiHuWu2007}, the convergence of the mode estimate sequence 
has been proved under the assumption that the KDE has a finite number 
of critical points inside the convex hull of data points.
For example, when the Epanechnikov kernel is used, 
the KDE is shown to have a finite number of critical points,
so that the result of \cite{LiHuWu2007} is applicable to
provide a convergence guarantee.
For the Epanechnikov kernel, something even stronger holds true: 
\cite{comaniciu1999mean} and \cite{Huang2018} proved 
that the MS algorithm converges in a finite number of iterations.
Another instance for which the finiteness of critical points,
and consequently the convergence of the mode estimate sequence, 
have been shown is the 1-dimensional KDE with the Gaussian kernel.
See, e.g., \cite{Silverman1981} and \cite{CarreiraPerpinanWilliams2003}. 
However, it is not known whether the number of critical points
of the KDE with the Gaussian kernel for the dimension $d\ge2$ is finite. 
See, e.g., \cite{Amendola2019}, where upper and lower bounds
of the number of \emph{non-degenerate} critical points were given,
whereas they wrote that
the finiteness of the number of critical points is still open. 
Although \cite{Aliyari2015} provided a condition under which 
the KDE with the Gaussian kernel has a finite number of critical points, 
his condition requires taking the bandwidth of the kernel large enough.
Under this condition, 
mode estimates to be obtained would have a large statistical bias.
Furthermore, the KDE with a large bandwidth might even yield a far smaller
number of mode estimates than the actual number of the true modes
when the data-generating distribution has multiple modes. 
Therefore, its practical significance is quite obscure, 
in view of applications of the MS algorithm 
such as data clustering and mode estimation.
Additionally, in the 1-dimensional case, \cite{Aliyari2013} proved 
the convergence of the mode estimate sequence for various kernels, 
by showing that its subsequence around a critical point becomes a bounded monotonic sequence.
However, this proof strategy cannot be extended to the multi-dimensional case.

More recently, \cite{yamasaki2019ms} have given a convergence proof of 
the MS algorithm using analytic kernels, including the Gaussian kernel.
Their proof takes advantage of the {\L}ojasiewicz property 
\cite{lojasiewicz1965ensembles, absil2005convergence} (see Definition~\ref{def:Loj})
of an analytic kernel and the corresponding KDE,
while not requiring assumptions either 
on the finiteness of critical points of the KDE, 
on the non-degeneracy of KDE's Hessian at critical points, 
on the size of the bandwidth,
or on the dimension of the data.
Thus, their result is significant in that it guarantees 
the convergence of the MS algorithm 
under practical settings on the bandwidth parameter,
and even in the multi-dimensional case.

To summarize,
it is only when the MS algorithm uses 
the Epanechnikov kernel \cite{comaniciu1999mean, Huang2018} 
or an analytic kernel \cite{yamasaki2019ms} that 
the convergence of the mode estimate sequence $(\by_t)_{t\in\bbN}$
has been guaranteed 
without regard to the size of the bandwidth parameter or 
the data dimension.

Much less is known so far about the convergence rate. 
Previous studies have clarified only 
the finite-time convergence when the algorithm uses 
the Epanechnikov kernel \cite{comaniciu1999mean, Huang2018}
and the linear convergence when the algorithm uses 
the Gaussian kernel and the KDE has a non-degenerate Hessian
at the convergent point \cite{carreira2007gaussian}.
The convergence rate when the Hessian is degenerate has not been clarified.

\section{Preliminaries: {\L}ojasiewicz Property}
\label{sec:Prelim}
As mentioned above,
\cite{yamasaki2019ms} proved the convergence of
the mode estimate sequence $(\by_t)_{t\in\bbN}$ of the MS algorithm
using an analytic kernel, without regard to the bandwidth parameter
or the data dimension.
The key in their proof is the {\L}ojasiewicz property/inequality for 
an analytic function \cite{lojasiewicz1965ensembles, absil2005convergence},
which provides a lower bound of the flatness of the function around its critical points.
In the convergence analysis of the MS algorithm,
this bound in turn allows us to transfer 
the convergence of the density estimate sequence $(f(\by_t))_{t\in\bbN}$ 
to that of the mode estimate sequence $(\by_t)_{t\in\bbN}$.
We follow a similar line to that of \cite{yamasaki2019ms}, 
but instead of relying on \cite{lojasiewicz1965ensembles, absil2005convergence}
as in~\cite{yamasaki2019ms}, in this paper 
we rely on \cite{kurdyka1994wf} that shows the {\L}ojasiewicz property for a wider class of functions beyond analytic ones, 
and on more advanced convergence analysis \cite{attouch2013convergence, frankel2015splitting}
that leverages that property.
%
We here describe the {\L}ojasiewicz property, 
and important classes of functions having that property.

We adopt the following definition of the {\L}ojasiewicz property/inequality,
along with related notions.
\begin{definition}[{\L}ojasiewicz property/inequality/exponent]
\label{def:Loj}
A function $g:S\to\bbR$ with $S\subseteq\bbR^d$
is said to have the \emph{{\L}ojasiewicz property} at $\bx'\in S$ with an exponent $\theta$, 
if there exists $\epsilon>0$ such that 
$g$ is differentiable on $U(\bx',g,S,\epsilon)\coloneq\{\bx\in S\mid \|\bx'-\bx\|<\epsilon, g(\bx')-g(\bx)\ge0\}$
and satisfies the \emph{{\L}ojasiewicz inequality}
\begin{align}
\label{eq:Lojasiewicz-ineq}
	\|\nabla g(\bx)\|
	\ge c \{g(\bx')-g(\bx)\}^\theta
\end{align}
with $c>0$, $\theta\in[0,1)$, and any $\bx\in U(\bx',g,S,\epsilon)$,
where we adopt the convention $0^0=0$ 
following \cite[Remark 4]{attouch2009convergence}.
Also, $g$ is said to have the \emph{{\L}ojasiewicz property} on $T\subseteq S$ (when $T=S$, we omit ``on T''),
if $g$ is differentiable on $T$ and there exists $\epsilon>0$ such that 
$g$ satisfies the {\L}ojasiewicz inequality \eqref{eq:Lojasiewicz-ineq} 
with $c>0$, $\theta\in[0,1)$, and any $(\bx',\bx)$ such that $\bx'\in T, \bx\in U(\bx',g,T,\epsilon)$. 
Moreover, the minimum value of $\theta$, 
with which $g$ has the {\L}ojasiewicz property at $\bx'$, 
is called the \emph{{\L}ojasiewicz exponent} of $g$ at $\bx'$.
\end{definition}

Intuitively, the {\L}ojasiewicz property of a function $g$ at $\bx'$
quantifies how flat the function $g$ is around the point $\bx'$.
It is obvious from the definition that, 
for any $\theta\in[0,1)$,
if $g$ has the {\L}ojasiewicz property at $\bx'$ with an exponent $\theta$,
then for any $\theta'\in[\theta,1)$ the same holds true with the exponent
$\theta'$ as well.
It is thus the minimum possible exponent $\theta$ (i.e., the {\L}ojasiewicz exponent) that is informative. 
If $g$ is continuously differentiable at $\bx'$
and if $\bx'$ is a non-critical point of $g$
(that is, $\nabla g(\bx')\not=\bm{0}$),
then for any $\theta\in[0,1)$,
$g$ trivially has the {\L}ojasiewicz property at $\bx'$
with the exponent $\theta$,
implying that $g$ is ``maximally non-flat'' at $\bx'$.
If, on the other hand, $\bx'$ is a local minimum of $g$,
then with a sufficiently small $\epsilon$ one has $U(\bx',g,S,\epsilon)=\{\bx'\}$,
implying that $g$ has the {\L}ojasiewicz property at the local minimum $\bx'$.
These facts demonstrate that Definition~\ref{def:Loj} is tailored 
primarily for characterizing the flatness of $g$ around its critical points
except local minima.

When $g$ is sufficiently smooth, its {\L}ojasiewicz exponent
at a critical point that is not a local minimum
is typically $\frac{1}{2}$, whereas it can be larger than that
if the Hessian of $g$ at the critical point is degenerate. 
%
As a more concrete example let us take 
\begin{align}
\label{eq:ExpA}
	g(\bx)=g(\bx')-\|\bx-\bx'\|^\alpha,
	\quad\alpha>1.
\end{align}
One then has
\begin{align}
	\|\nabla g(\bx)\|=\alpha\|\bx-\bx'\|^{\alpha-1}
	=\alpha\{g(\bx')-g(\bx)\}^{1-1/\alpha},
\end{align}
implying that $g$ has the {\L}ojasiewicz property at $\bx'$
with the exponent $\theta\ge1-1/\alpha$.
As one takes a larger $\alpha$, $g$ gets ``flatter'' at $\bx'$,
and correspondingly the {\L}ojasiewicz exponent
$1-1/\alpha$ becomes larger.
As another example, let
\begin{align}
	g(\bx)=g(\bx')-e^{-\|\bx-\bx'\|^{-\beta}}\bbI(\bx\not=\bx'),\quad\beta>0,
\end{align}
where $\bbI(c)$ is the indicator function that takes the value 1 
if the condition $c$ is true, and 0 otherwise. 
One then has
\begin{align}
	\begin{split}
	\|\nabla g(\bx)\|
	&=\beta e^{-\|\bx-\bx'\|^{-\beta}}
	\|\bx-\bx'\|^{-\beta-1}\bbI(\bx\not=\bx')\\
	&=\beta h(g(\bx')-g(\bx))
	\end{split}
\end{align}
with $h(z)=z(-\log z)^{1+1/\beta}\bbI(z>0)$, $z\ge0$,
on the basis of which one can show that $g$ does not have
the {\L}ojasiewicz property at $\bx'$ as defined in Definition~\ref{def:Loj}, 
that is, it is ``too flat'' at $\bx'$
to be captured by this definition%
\footnote{%
We would like to mention, however, that
an extended definition of the {\L}ojasiewicz property,
provided in supplementary material, allows us to capture the flatness
in this example as well.},
since for any $\theta\in[0,1)$ one has 
\begin{align}
	\frac{h(z)}{z^\theta}=z^{1-\theta}(-\log z)^{1+1/\beta}
	\stackrel{z\to+0}{\longrightarrow}0.
\end{align}

The significance of the {\L}ojasiewicz property for our purpose
is that it allows us to convert the convergence of the density estimate
sequence $(f(\by_t))_{t\in\bbN}$ into that of the mode estimate
sequence $(\by_t)_{t\in\bbN}$ when the KDE $f$ is ``not too flat,''
as well as that, when the mode estimate sequence $(\by_t)_{t\in\bbN}$
converges, the property can provide a guarantee of faster convergence
when $f$ is ``less flat'' at the limit, 
as will be discussed in Section~\ref{sec:Main}.

\cite{lojasiewicz1965ensembles} showed 
that analytic functions have the {\L}ojasiewicz property,
and thereafter, \cite{kurdyka1994wf} generalized that result 
to the class of $C^1$ functions with o-minimal structure
(see also \cite{van1996geometric}),
which particularly includes $C^1$ globally subanalytic functions:%
\footnote{%
\label{fn:Bolte}
More recently, \cite{bolte2007lojasiewicz, bolte2007clarke} 
extended the definition of the {\L}ojasiewicz inequality 
to the case of non-smooth functions, 
and showed that continuous globally subanalytic functions
satisfy that generalized {\L}ojasiewicz inequality.
Succeeding studies such as \cite{attouch2009convergence, attouch2013convergence, 
noll2014convergence, frankel2015splitting, bolte2016majorization}
used it to construct abstract convergence theorems for various optimization algorithms.
We also attempted convergence analysis according to such
a general framework that allows non-smooth objective functions,
but, even for the MS algorithm, we could not avoid the smoothness assumption
(assumption~\hyl{a1} in Theorem~\ref{thm:MS-GCG} or Assumption~\ref{asm:LCG} in Theorem~\ref{thm:MS-CG},
in Section~\ref{sec:SCP}).
Such difficulty is also discussed in \cite[Section 6]{noll2014convergence, frankel2015splitting}.
Therefore, from Section~\ref{sec:Prelim} onwards, 
we adopt a simple framework that supposes the smoothness 
even if it can be generalized to the non-smooth case.
%
Also, according to the boundedness assumption (Assumption~\ref{asm:RS}), 
we omit devices used to handle unbounded functions.}
\begin{proposition}[{\cite{lojasiewicz1965ensembles, kurdyka1994wf}}]
\label{prop:SA}
A function $g:S\to\bbR$ with $S\subseteq\bbR^d$ 
has the {\L}ojasiewicz property, 
if $g$ is analytic or if $g$ is $C^1$ globally subanalytic.
\end{proposition}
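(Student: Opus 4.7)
The plan is to invoke the two classical results that this proposition bundles together, rather than to rederive them from scratch: the analytic case is precisely {\L}ojasiewicz's theorem from \cite{lojasiewicz1965ensembles}, and the $C^1$ globally subanalytic case is the specialization of Kurdyka's gradient inequality for functions definable in an o-minimal structure \cite{kurdyka1994wf}. So the proof in the paper could reasonably consist of citing these sources and observing that globally subanalytic functions form an o-minimal class to which Kurdyka's result applies.

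If I were to reconstruct the argument, I would first reduce, by the translation $g\mapsto g-g(\bx')$, to the case $g(\bx')=0$, and I would focus on the situation where $\bx'$ is a critical point of $g$, since non-critical points trivially satisfy \eqref{eq:Lojasiewicz-ineq} for every $\theta\in[0,1)$ by continuity of $\nabla g$. For the analytic case, I would then argue by contradiction using the \emph{Curve Selection Lemma} of real-analytic geometry: if no pair $(c,\theta)$ made \eqref{eq:Lojasiewicz-ineq} valid on a neighborhood, one could select a real-analytic curve $\gamma:[0,\eta)\to S$ with $\gamma(0)=\bx'$ along which $\|\nabla g(\gamma(s))\|$ vanishes faster than $|g(\gamma(s))|^\theta$ for every $\theta<1$. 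Expanding $g\circ\gamma$ and $\nabla g\circ\gamma$ as Puiseux series in $s$ and differentiating via the chain rule
\begin{align*}
	\tfrac{d}{ds} g(\gamma(s))=\langle\nabla g(\gamma(s)),\gamma'(s)\rangle
\end{align*}
would then force $g\circ\gamma$ to vanish identically, contradicting the selection.

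For the $C^1$ globally subanalytic case, I would follow Kurdyka's route: using the finite stratification theorem for (globally) subanalytic sets, a neighborhood of $\bx'$ is partitioned into finitely many smooth subanalytic strata on which $g$ has definable level sets; the o-minimal structure then furnishes a definable desingularization of those level sets that is uniform near $\bx'$, and translating this into a bound on $\|\nabla g\|$ yields the {\L}ojasiewicz inequality with some exponent $\theta\in[0,1)$. The main obstacle along either route is the heavy geometric machinery---the Curve Selection Lemma and Puiseux expansions on the analytic side, the subanalytic stratification theorem and o-minimality on the other---which is precisely why the paper defers to the cited sources rather than reproducing these arguments in full.
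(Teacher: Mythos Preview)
Your proposal is correct and matches the paper's treatment exactly: the paper does not prove Proposition~\ref{prop:SA} but simply records that it ``is given by \cite{lojasiewicz1965ensembles, kurdyka1994wf},'' which is precisely the citation-based approach you anticipate. Your additional sketch of the Curve Selection Lemma and o-minimal stratification arguments is accurate background but goes beyond what the paper provides.
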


Now we introduce the definition of the global subanalyticity,
as well as several related notions,
the latter of which serve as sufficient conditions for the global subanalyticity. 
See also \cite{bolte2007lojasiewicz} and \cite{valette2022subanalytic}.
These notions are useful in practice,
because directly verifying the global subanalyticity can often be difficult,
whereas those sufficient conditions are easier to verify,
as in the discussion in supplementary material.
\begin{definition}[Global subanalyticity and related notions]\hfill
\label{def:GSF}
\begin{itemize}
\item%
A set $S\subseteq\bbR^d$ is called \emph{semialgebraic},
if there exist a finite number of polynomial functions $g_{ij}:\bbR^d\to\bbR$ such that 
$S=\bigcup_{i=1}^p\bigcap_{j=1}^q\{\bx\in\bbR^d\mid g_{ij}(\bx)\,\sigma_{ij}\,0\}$
with relational operators $\sigma_{ij}\in\{<,>,=\}$.
\item%
A set $S\subseteq\bbR^d$ is called \emph{semianalytic},
if for each point $\bx'\in\bbR^d$ there exist a neighborhood $T$ of $\bx'$ 
and a finite number of analytic functions $g_{ij}:T\to\bbR$ such that 
$S\cap T=\bigcup_{i=1}^p\bigcap_{j=1}^q\{\bx\in T\mid g_{ij}(\bx)\,\sigma_{ij}\,0\}$
with relational operators $\sigma_{ij}\in\{<,>,=\}$.
\item%
A set $S\subseteq\bbR^d$ is called \emph{subanalytic},
if for each point $\bx'\in\bbR^d$ there exist a neighborhood $T$ of $\bx'$ 
and a bounded semianalytic set $U\subseteq\bbR^{d+d'}$ with $d'\ge1$
such that $S\cap T=\{\bx\in\bbR^d\mid (\bx,\by)\in U\}$.
\item
A set $S\subseteq\bbR^d$ is called \emph{globally semianalytic} (resp.\,\emph{globally subanalytic}),
if its image under $\psi(\bx)=(x_1/(1+x_1^2)^{1/2},\ldots,x_d/(1+x_d^2)^{1/2})$ 
is a semianalytic (resp.\,subanalytic) subset of $\bbR^d$.
\item%
A function $g:S\to\bbR$ with $S\subseteq\bbR^d$ is called \emph{semialgebraic}
(resp.\,\emph{semianalytic}, \emph{subanalytic}, \emph{globally semianalytic}, or \emph{globally subanalytic}),
if its graph $\{(\bx,y)\in S\times\bbR\mid y=g(\bx)\}$ is semialgebraic
(resp.\,semianalytic, subanalytic, globally semianalytic, or globally subanalytic) 
subset of $\bbR^{d+1}$.
\item%
A function $g:S\to\bbR$ with $S\subseteq\bbR^d$ is called
\emph{piecewise polynomial} with the \emph{maximum degree} $k\in\bbN$, 
if there exists a finite collection $\{S_l\}_{l\in[L]}$ of subdomains $S_l\subseteq S$, $l\in[L]$, 
that forms a partition of $S$ (i.e., $S_l\neq\emptyset$ for all $l\in[L]$, 
$S_l\cap S_{l'}=\emptyset$ for all $l,l'\in[L]$ with $l\not=l'$, and $\cup_{l\in [L]}S_l=S$),
such that $g(\bx)=g_l(\bx)$ for any $\bx\in S_l$ 
(i.e., the restriction of $g$ to $S_l$ is the same as that of $g_l$ to $S_l$) 
with a polynomial $g_l:S\to\bbR$ for each $l\in[L]$
and that the maximum degree of $\{g_l\}_{l\in[L]}$ is $k$. 
\end{itemize}
\end{definition}

\begin{figure*}[!t]
\centering
\includegraphics[height=6cm, bb=0 0 602 256]{./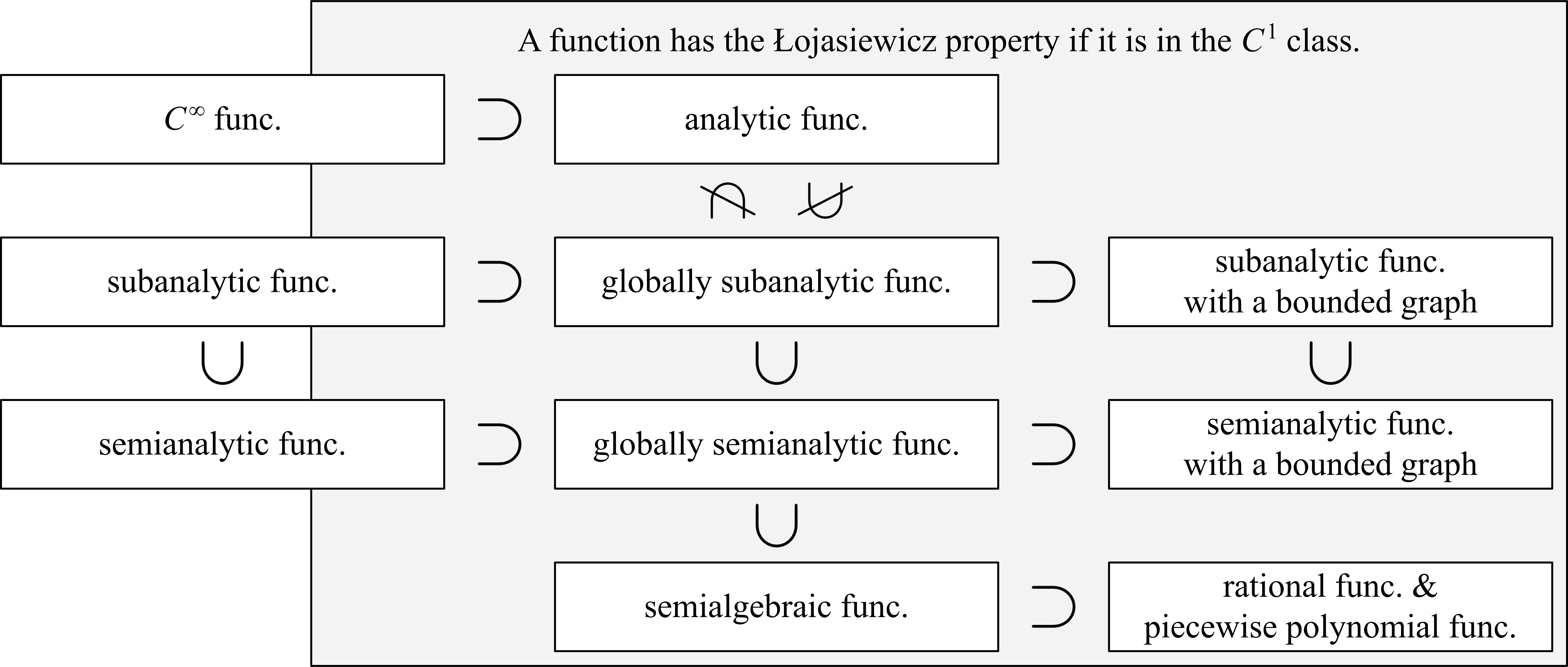}
\caption{%
Inclusion relation among important function classes 
relevant to the discussion on the {\L}ojasiewicz property.}
\label{fig:relation}
\end{figure*}

The class of semialgebraic functions has a wide variety of instances:
polynomial, rational, and more generally piecewise polynomial functions 
are semialgebraic \cite{bierstone1988semianalytic, dedieu1992penalty}.
As will be discussed in the next section, 
the class of piecewise polynomial functions that includes the biweight kernel 
is of particular importance in the discussion of this study.
Any globally semianalytic functions are semianalytic, 
and any semianalytic functions with a bounded graph are globally semianalytic 
\cite[before Example 1.1.4]{valette2022subanalytic}.
Any globally subanalytic functions are subanalytic, 
and any subanalytic functions with a bounded graph are globally subanalytic 
\cite[after Definition 2.2]{bolte2007lojasiewicz}.
Also, semianalytic functions are subanalytic (which can be seen from Definition~\ref{def:GSF}),
globally semianalytic functions are globally subanalytic \cite[Definition 1.1.6]{valette2022subanalytic},
and semialgebraic functions are globally semianalytic \cite[Example 1.1.4]{valette2022subanalytic}.
Note that an analytic function is not necessarily globally subanalytic
(of course, the converse is not necessarily true either:
a globally subanalytic function is not necessarily analytic).
For example, $g(x)=\sin(x)$, $x\in\bbR$, is certainly analytic but not globally subanalytic \cite[Example 1.1.7]{valette2022subanalytic}.
Moreover, it should be noted that a semianalytic/subanalytic function 
(e.g., the sine function defined on $\bbR$) and a $C^\infty$ function are not 
necessarily globally subanalytic and do not always have the {\L}ojasiewicz property;
the ``Mexican hat'' function (equation~(2.8) in~\cite{absil2005convergence})
and the function shown on page 14 of~\cite{palis2012geometric}
are instances that are of class $C^\infty$ and not globally subanalytic,
and these functions do not have the {\L}ojasiewicz property. 
These inclusion relations are summarized in Figure~\ref{fig:relation}.

As stated in Proposition~\ref{prop:SA},
in view of the {\L}ojasiewicz property, what is important
for our purpose is to provide sufficient conditions
for the KDE to be $C^1$ globally subanalytic.
Thus, sufficient conditions for global subanalyticity
in the above inclusion relations, as well as 
the stability of the global subanalyticity 
under the summation \cite[Properties 1.1.8]{valette2022subanalytic}, 
are important, which are summarized as follows:
\begin{proposition}
\label{prop:GSprop}
Any semialgebraic or globally semianalytic functions, 
any semianalytic or subanalytic functions with a bounded graph,
and the summation of any globally subanalytic functions
are globally subanalytic.
\end{proposition}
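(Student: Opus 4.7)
The plan is to prove Proposition~\ref{prop:GSprop} by chaining the inclusions that are already cited in the exposition preceding Figure~\ref{fig:relation}, together with one stability property, verifying that no additional hypothesis is hidden. The underlying observation is that global subanalyticity is defined through the coordinate-wise squashing diffeomorphism $\psi:\bbR^d\to(-1,1)^d$, so all five claims reduce either to sending the richer class of sets through $\psi$ while preserving the defining (sub)analytic structure, or to closure properties that, thanks to the boundedness $\psi$ enforces, reduce to well-behaved operations on (sub)analytic sets.

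For the first two items I would argue \emph{semialgebraic} $\Rightarrow$ \emph{globally semianalytic} $\Rightarrow$ \emph{globally subanalytic}. The first step is Example~1.1.4 of Valette: $\psi$ is itself semialgebraic and a diffeomorphism onto its image, so it carries the semialgebraic graph of $g$ to a bounded semialgebraic, hence semianalytic, set. The second step is Definition~1.1.6 of Valette, together with the trivial observation, directly from Definition~\ref{def:GSF}, that every semianalytic set is subanalytic (take $d'=1$ with a dummy coordinate). For the third item I combine the second implication with the cited Valette fact that semianalytic functions with a bounded graph are globally semianalytic, so semianalytic (bounded graph) $\Rightarrow$ globally semianalytic $\Rightarrow$ globally subanalytic. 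The fourth item is the content of the remark after Definition~2.2 of Bolte et al.; I would verify it by noting that when the graph is bounded, $\psi$ restricted to a suitable box is an analytic diffeomorphism onto a subanalytic subset of $(-1,1)^{d+1}$, and subanalyticity is preserved under analytic diffeomorphisms with bounded domain.

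The only step requiring a genuinely substantive argument is the fifth, stability under summation. I would express the graph of $g_1+g_2$ as the image of the fiber product of the graphs of $g_1$ and $g_2$ under the linear map $(\bx,y_1,y_2)\mapsto(\bx,y_1+y_2)$, and then invoke the closure of the class of globally subanalytic sets under finite intersection, cylindrification by $\bbR$-factors, and images under polynomial maps that are proper on the relevant bounded image, as established in Properties~1.1.8 of Valette. The main obstacle lies precisely here: for general subanalytic sets, projection does not preserve subanalyticity (the Osgood-type pathology), so a properness hypothesis is indispensable; what makes the global version work is that the $\psi$-squashing pushes everything into $(-1,1)^{d+1}$, rendering the relevant projections proper on their bounded images. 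Once this technicality is absorbed by citing Properties~1.1.8, the remainder of the proof is pure definition-chasing.
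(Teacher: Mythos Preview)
Your proposal is correct and follows essentially the same route as the paper: the paper's ``proof'' of Proposition~\ref{prop:GSprop} consists entirely of the chain of citations in Section~\ref{sec:Prelim} (Valette, Example~1.1.4 and Definition~1.1.6; Bolte et al., remark after Definition~2.2; Valette, Properties~1.1.8), and you invoke exactly these. Your additional sketch of why summation is stable---via the fiber product of graphs and a proper projection---is a welcome elaboration, but the paper itself simply delegates all five claims to the cited references without further argument.
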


\section{Main Results: Convergence Theorems for MS Algorithm}
\label{sec:Main}
\subsection{Convergence to a Critical Point}
\label{sec:SCP}
In this subsection, 
we provide a sufficient condition for 
the mode estimate sequence $(\by_t)_{t\in\bbN}$ of the MS algorithm
to converge to a critical point of the KDE $f$.
Our result is along the same line as 
the existing convergence theorem by \cite{yamasaki2019ms} 
for the MS algorithm using analytic kernels,
and further extends it on the ground of Propositions~\ref{prop:SA} and \ref{prop:GSprop}
stating that $C^1$ globally subanalytic kernels and the corresponding KDE have the {\L}ojasiewicz property.

Several recent studies in optimization theory, including 
\cite{absil2005convergence, attouch2009convergence, 
attouch2013convergence, noll2014convergence, frankel2015splitting, 
bolte2016majorization}, exploit the {\L}ojasiewicz property 
to prove the convergence of various optimization algorithms.
By applying abstract convergence theorems such as 
\cite[Theorem 3.2]{attouch2013convergence}
and \cite[Theorem 3.1]{frankel2015splitting} 
to the MS algorithm, we obtain the following theorem:%
\footnote{%
As we have observed in Section~\ref{sec:Target}
that the MS algorithm is an example of the MM algorithm,
we might alternatively be able to apply abstract convergence theorems
for the MM algorithm~\cite{bolte2016majorization} to the MS algorithm.
Although convergence of the MS algorithm could be proved in this way,
the resulting bound of the convergence rate can become looser than 
that given by Theorems~\ref{thm:MS-Rate} and \ref{thm:MS-Upper} in this paper.
This is because that bound depends on the {\L}ojasiewicz exponent 
of the function $\bar{f}(\bx+\bm{m}(\bx)|\bx)$ (called the value function) 
introduced in \cite{bolte2016majorization} (not of the KDE), which 
is in general flatter than the KDE at the critical point.}
\begin{theorem}[Convergence guarantee]
\label{thm:MS-GCG}
Assume Assumptions~\ref{asm:RS} and \ref{asm:QM}. 
Let $(\by_t)_{t\in\bbN}$ be the mode estimate sequence
obtained by the MS algorithm~\eqref{eq:MS-Iter} starting from $\by_1$
with $f(\by_1)>0$.
Assume further, for the closure $\cl(\Conv(\{\by_t\}_{t\ge\tau}))$ of 
the convex hull $\Conv(\{\by_t\}_{t\ge\tau})$
of $\{\by_t\}_{t\ge\tau}$ with some $\tau\in\bbN$, that 
\begin{itemize}
\item[\hyt{a1}]
the KDE $f$ is differentiable and has a Lipschitz-continuous gradient on $\cl(\Conv(\{\by_t\}_{t\ge\tau}))$
(i.e., there exists a constant $L\ge0$ such that $\|\nabla f(\bx)-\nabla f(\bx')\|\le L\|\bx-\bx'\|$ 
for any $\bx, \bx'\in\cl(\Conv(\{\by_t\}_{t\ge\tau}))$, 
where the minimum of such a constant $L$ is called the Lipschitz constant of $\nabla f$ on $\cl(\Conv(\{\by_t\}_{t\ge\tau}))$), and
\item[\hyt{a2}]
the KDE $f$ has the {\L}ojasiewicz property on $\cl(\Conv(\{\by_t\}_{t\ge\tau}))$.
\end{itemize}
Then, the mode estimate sequence $(\by_t)_{t\in\bbN}$ 
has a finite-length trajectory (i.e., $\sum_{t=1}^\infty\|\by_{t+1}-\by_t\|<\infty$)
and converges to a critical point $\bar{\by}$ of the KDE $f$.
\end{theorem}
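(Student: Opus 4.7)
The plan is to verify, for the mode estimate sequence $(\by_t)$, the three hypotheses that feed into the abstract {\L}ojasiewicz-based convergence theorems of \cite[Theorem 3.2]{attouch2013convergence} and \cite[Theorem 3.1]{frankel2015splitting} --- a sufficient-ascent estimate, a gradient-step relative-error bound, and subsequential convergence --- after which assumption~\hyl{a2} delivers the finite trajectory length. Subsequential convergence is immediate: Proposition~\ref{prop:fw} excludes the exception $\check{f}(\by_t)=0$, so \eqref{eq:MS-Iter} presents $\by_{t+1}$ as a convex combination of the data with positive weights, forcing $(\by_t)_{t\ge 2}\subset\Conv(\{\bx_i\}_{i\in[n]})$ and hence the existence of accumulation points, all of which necessarily lie in $\cl(\Conv(\{\by_t\}_{t\ge\tau}))$.

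For sufficient ascent I would exploit the MM structure from Section~\ref{sec:Target}. Since $\by_{t+1}$ is the unique maximizer of the quadratic minorizer $\bar{f}(\cdot|\by_t)$ with leading coefficient $-\check{f}(\by_t)/(2h^2)$, a direct evaluation yields $\bar{f}(\by_{t+1}|\by_t)-\bar{f}(\by_t|\by_t)=\frac{\check{f}(\by_t)}{2h^2}\|\by_{t+1}-\by_t\|^2$, and the minorizer property lifts this to $f(\by_{t+1})-f(\by_t)\ge\frac{\check{f}(\by_t)}{2h^2}\|\by_{t+1}-\by_t\|^2$. Proposition~\ref{prop:fw} supplies the uniform lower bound $\check{f}(\by_t)\ge C/(nh^d)$, producing a $t$-independent sufficient-ascent constant $a>0$.

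The relative-error bound rests on the identity $\nabla f(\by_t)=h^{-2}\check{f}(\by_t)(\by_{t+1}-\by_t)$, obtained by termwise differentiation of \eqref{eq:KDE} with the selection \eqref{eq:SD-Pro-Ker} aligned with the realized derivative of $\hat{K}$ wherever the argument is non-smooth; assumption~\hyl{a1} guarantees that $\nabla f(\by_t)$ is well-defined and thus forces such a consistent selection to exist. Combined with the uniform upper bound $\check{f}\le-\hat{K}'(0+)/h^d$ and the Lipschitz continuity of $\nabla f$ from \hyl{a1}, this gives $\|\nabla f(\by_{t+1})\|\le\|\nabla f(\by_t)\|+L\|\by_{t+1}-\by_t\|\le b\|\by_{t+1}-\by_t\|$ for some constant $b>0$.

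With these two estimates in hand and property~\hyl{a2} applied at any accumulation point $\bar{\by}$ (where $f(\by_t)\nearrow f(\bar{\by})$ by Proposition~\ref{prop:dens}), the standard concave desingularizer $\phi(s)=\frac{c}{1-\theta}s^{1-\theta}$ yields $\phi(f(\bar{\by})-f(\by_t))-\phi(f(\bar{\by})-f(\by_{t+1}))\ge A\|\by_{t+1}-\by_t\|$ for a positive constant $A$, which telescopes to $\sum_t\|\by_{t+1}-\by_t\|<\infty$ once $\by_t$ is sufficiently close to $\bar{\by}$; thus $\by_t\to\bar{\by}$ and the gradient identity then forces $\nabla f(\bar{\by})=\bm{0}$. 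The hardest step I anticipate is the relative error: rigorously justifying the identification $\nabla f(\by_t)=h^{-2}\check{f}(\by_t)(\by_{t+1}-\by_t)$ at iterates where some $\|(\by_t-\bx_i)/h\|^2/2$ may land on the (measure-zero) non-differentiability set of $\hat{K}$, where one must carefully use that \hyl{a1} forces internal cancellations among the possible subdifferential choices so that a single consistent $\check{K}$-selection reproduces the unique gradient.
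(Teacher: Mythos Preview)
Your proposal is correct and follows essentially the same route as the paper: the sufficient-ascent condition is obtained from the quadratic minorizer together with Proposition~\ref{prop:fw} (the paper's Lemma~\ref{lem:A}), the relative-error bound from the identity $\bm{m}(\by_t)=\tfrac{h^2}{\check f(\by_t)}\nabla f(\by_t)$ combined with the Lipschitz assumption~\hyl{a1} (the paper's Lemmas~\ref{lem:B} and~\ref{lem:C}), and the finite-length conclusion from the {\L}ojasiewicz localization/telescoping argument of \cite{attouch2013convergence,frankel2015splitting}. The subtlety you flag as hardest---validity of the gradient identity when some $\|(\by_t-\bx_i)/h\|^2/2$ lands on a non-differentiability point of $\hat K$---is not elaborated in the paper either; there the identity is simply asserted under the differentiability of $f$ granted by~\hyl{a1}.
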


We next argue how one can replace the assumptions~\hyl{a1} and \hyl{a2}
on the KDE $f$ to assumptions on the kernel $K$
in such a way that the latter ones provide sufficient conditions
for the former ones.

Let us focus on the assumption~\hyl{a1} of Theorem~\ref{thm:MS-GCG} first. 
If a kernel $K$ is differentiable with a Lipschitz-continuous gradient,
then the KDE $f$ using the kernel $K$ trivially 
satisfies the assumption~\hyl{a1} for any $\tau$,
simply because the summation of functions preserves
the differentiability, as well as the Lipschitz continuity of the gradients. 
Therefore, for the convergence guarantee of the mode estimate sequence $(\by_t)_{t\in\bbN}$,
we can replace the assumption~\hyl{a1} on the KDE $f$
with the following assumption on the kernel $K$: 
\begin{assumption}
\label{asm:LCG}%
The kernel $K$ is differentiable and has a Lipschitz-continuous gradient.
\end{assumption}
Note that Assumption~\ref{asm:LCG} also implies that the kernel $K$ is of class $C^1$.

We next argue how one can replace the assumption~\hyl{a2}
of Theorem~\ref{thm:MS-GCG} with an assumption on the kernel $K$. 
According to Propositions \ref{prop:SA} and \ref{prop:GSprop},
when the kernel $K$ is analytic or $C^1$ globally subanalytic, 
it is clear that the corresponding KDE $f$
is so as well and has the {\L}ojasiewicz property.
We argue in the following that requiring the kernel $K$
to be $C^1$ subanalytic is indeed enough
in order for the assumption~\hyl{a2} to hold: 
Under Assumptions~\ref{asm:RS} and \ref{asm:QM}, as well as
the condition $f(\by_1)>0$, 
the mode estimate $\by_t$ for $t\ge2$ becomes
a convex combination of the data points $\{\bx_i\}_{i\in[n]}$,
that is, 
a weighted mean of $\{\bx_i\}_{i\in[n]}$ with non-negative weights,
and thus it lies in the convex hull $\Conv(\{\bx_i\}_{i\in[n]})$ of $\{\bx_i\}_{i\in[n]}$,
which is a bounded set.
Therefore, we can restrict the domain of 
every kernel $K(\frac{\cdot-\bx_i}{h})$, $i=1,\ldots,n$, 
to $\Conv(\{\bx_i\}_{i\in[n]})$ without any problems.
Also, every kernel $K(\frac{\cdot-\bx_i}{h})$ is 
bounded under Assumption~\ref{asm:RS}.
Therefore, when the kernel $K$ is $C^1$ subanalytic,
the restriction of $K(\frac{\cdot-\bx_i}{h})$ to $\Conv(\{\bx_i\}_{i\in[n]})$ 
becomes a $C^1$ subanalytic function with a bounded graph, 
and consequently, it is $C^1$ globally subanalytic due to Proposition~\ref{prop:GSprop}. 
Hence, the restriction of the corresponding KDE is also $C^1$ globally subanalytic
due to Proposition~\ref{prop:GSprop} and has the {\L}ojasiewicz property 
due to Proposition~\ref{prop:SA}.
Given this consideration, we do not have to require global subanalyticity, 
and requiring $C^1$ subanalyticity to the kernel $K$ is 
sufficient for the assumption~\hyl{a2} to be satisfied for any $\tau$.
Therefore, under Assumptions~\ref{asm:RS}, \ref{asm:QM}, 
and \ref{asm:LCG} and the condition $f(\by_1)>0$, we can
replace the assumption~\hyl{a2} on the KDE $f$ with 
the following assumption on the kernel $K$: 
\begin{assumption}
\label{asm:LP}%
The kernel $K$ is analytic or subanalytic.
\end{assumption}

Consequently, the following theorem will be obtained 
as a direct corollary of Theorem~\ref{thm:MS-GCG},
which assures the convergence independently of
the mode estimate sequence $(\by_t)_{t\in\bbN}$. 
\begin{theorem}[{Corollary of Theorem~\ref{thm:MS-GCG}}]
\label{thm:MS-CG}
Assume Assumptions~\ref{asm:RS}, \ref{asm:QM}, \ref{asm:LCG},
and \ref{asm:LP}.
Let $(\by_t)_{t\in\bbN}$ be the mode estimate sequence
obtained by the MS algorithm~\eqref{eq:MS-Iter} starting from $\by_1$
with $f(\by_1)>0$.
Then, the mode estimate sequence $(\by_t)_{t\in\bbN}$ 
has a finite-length trajectory and converges to a critical point $\bar{\by}$ of the KDE $f$.
\end{theorem}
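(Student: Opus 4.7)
The plan is to derive Theorem~\ref{thm:MS-CG} by checking that, under Assumptions~\ref{asm:LCG} and \ref{asm:LP}, the two assumptions \hyl{a1} and \hyl{a2} in Theorem~\ref{thm:MS-GCG} are satisfied for some choice of $\tau$, so that Theorem~\ref{thm:MS-GCG} can be invoked directly. The target index will be $\tau=2$.

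First I would localize the trajectory to a bounded convex set. Under Assumptions~\ref{asm:RS} and \ref{asm:QM} together with $f(\by_1)>0$, Proposition~\ref{prop:fw} guarantees that $\check{f}(\by_t)>0$ for every $t\in\bbN$, so that for every $t\ge 1$ the update \eqref{eq:MS-Iter} expresses $\by_{t+1}$ as a convex combination of the data points $\bx_1,\ldots,\bx_n$ with non-negative weights $\check{K}(\|(\by_t-\bx_i)/h\|^2/2)$ normalized by $nh^d\check{f}(\by_t)$. Hence $\by_t\in\Conv(\{\bx_i\}_{i\in[n]})$ for all $t\ge 2$, and since $\Conv(\{\bx_i\}_{i\in[n]})$ is closed and convex,
\begin{equation*}
\cl(\Conv(\{\by_t\}_{t\ge 2}))\subseteq\Conv(\{\bx_i\}_{i\in[n]}),
\end{equation*}
which is a bounded set of $\bbR^d$.

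Next I would verify \hyl{a1}. By Assumption~\ref{asm:LCG}, each translated and rescaled kernel $\bx\mapsto K((\bx-\bx_i)/h)$ is differentiable and has a Lipschitz-continuous gradient on $\bbR^d$, with Lipschitz constant scaled by $1/h$. Since differentiability is preserved under finite summation and Lipschitz constants add, the KDE $f$ in \eqref{eq:KDE} is differentiable on $\bbR^d$ with a Lipschitz-continuous gradient, and in particular on $\cl(\Conv(\{\by_t\}_{t\ge 2}))$. This establishes \hyl{a1}.

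For \hyl{a2}, the idea is to combine Assumption~\ref{asm:LP} with Propositions~\ref{prop:SA} and \ref{prop:GSprop}. If $K$ is analytic, then each summand in \eqref{eq:KDE} is analytic, hence so is $f$, and Proposition~\ref{prop:SA} yields the {\L}ojasiewicz property on all of $\bbR^d$, in particular on $\cl(\Conv(\{\by_t\}_{t\ge 2}))$. If instead $K$ is $C^1$ subanalytic, then each restriction of $K((\cdot-\bx_i)/h)$ to $\Conv(\{\bx_i\}_{i\in[n]})$ is subanalytic (subanalyticity is preserved under affine change of variables and restriction to a bounded semialgebraic set) and, being the restriction of a bounded function (Assumption~\ref{asm:RS}) to a bounded domain, has a bounded graph; by Proposition~\ref{prop:GSprop} each restriction is therefore $C^1$ globally subanalytic, and a further application of the stability of global subanalyticity under summation (Proposition~\ref{prop:GSprop}) shows that the restriction of $f$ to $\Conv(\{\bx_i\}_{i\in[n]})$ is $C^1$ globally subanalytic. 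Proposition~\ref{prop:SA} then yields the {\L}ojasiewicz property on this set, which contains $\cl(\Conv(\{\by_t\}_{t\ge 2}))$, establishing \hyl{a2}.

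With \hyl{a1} and \hyl{a2} in hand for $\tau=2$, Theorem~\ref{thm:MS-GCG} immediately delivers the finite-length property and convergence of $(\by_t)_{t\in\bbN}$ to a critical point $\bar{\by}$ of $f$. The main point demanding care is the subanalytic branch of \hyl{a2}, namely the passage from ``$K$ subanalytic'' to ``the KDE, restricted to a bounded convex set containing the tail of the trajectory, is globally subanalytic'': this relies crucially on the observation made in the paragraph preceding the theorem that the trajectory is confined to $\Conv(\{\bx_i\}_{i\in[n]})$, so that boundedness of the graph can be invoked to upgrade subanalyticity to global subanalyticity via Proposition~\ref{prop:GSprop}. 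Everything else is routine bookkeeping.
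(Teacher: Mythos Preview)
Your proposal is correct and follows essentially the same route as the paper: the paper also derives Theorem~\ref{thm:MS-CG} by checking that Assumption~\ref{asm:LCG} yields \hyl{a1} (via preservation of differentiability and Lipschitz continuity under finite sums) and that Assumption~\ref{asm:LP} yields \hyl{a2} (by localizing to $\Conv(\{\bx_i\}_{i\in[n]})$, where the bounded-graph condition upgrades subanalyticity to global subanalyticity via Proposition~\ref{prop:GSprop}, and then applying Proposition~\ref{prop:SA}), after which Theorem~\ref{thm:MS-GCG} is invoked. The only (inconsequential) slip is that the Lipschitz constant of $\nabla\bigl(K((\cdot-\bx_i)/h)\bigr)$ scales by $1/h^2$, not $1/h$, but this does not affect the argument.
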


The main significance of Theorem~\ref{thm:MS-CG} is 
that it reveals for the first time the convergence of 
the MS algorithm for several piecewise polynomial kernels
including the biweight and triweight kernels.
In particular, the biweight kernel is known to be 
optimal among non-negative kernels 
in terms of the asymptotic statistical efficiency 
for the KDE-based mode estimation 
\cite{parzen1962estimation, yamasaki2023optimal}.
%
More concretely, for a mode of the true probability density function
with a non-degenerate Hessian at the mode, 
the main term of the asymptotic mean squared error of 
the 1-dimensional KDE-based mode estimator using a non-negative kernel $K$ 
and optimal bandwidth parameter for that kernel
is proportional to the kernel-dependent term
$(\int_{-\infty}^\infty u^2K(u)\,du)^{\frac{6}{7}}\cdot(\int_{-\infty}^\infty \{K'(u)\}^2\,du)^{\frac{4}{7}}$
(we call its inverse the asymptotic statistical efficiency), 
and \cite{granovsky1991optimizing} showed that 
the biweight kernel minimizes this kernel-dependent term.
Moreover, \cite{yamasaki2023optimal} obtained 
similar results for the multi-dimensional case.
The triweight kernel is also relatively good in the same perspective; 
see Table~\ref{tab:Kernel} where we arrange kernels 
in the order of the asymptotic statistical efficiency for the 1-dimensional case 
(calculated ignoring a finite number of non-differentiable points) from the top.%
\footnote{%
\cite{epanechnikov1969non, granovsky1989optimality} show that 
the Epanechnikov kernel minimizes the asymptotic mean integrated squared error of the KDE 
using the associated optimal bandwidth parameter among non-negative kernels.
It should be noted, however, that, although this fact was mentioned in papers
which study convergence properties of the MS algorithm,
such as \cite{comaniciu1999mean} and \cite{Huang2018},
it does not imply the optimality of the Epanechnikov kernel
for the KDE-based mode estimation, a representative application of the MS algorithm, in any sense.}

\begin{table*}[!t]
\centering%
\renewcommand{\tabcolsep}{3pt}
\caption{%
Fulfillment of assumptions of kernels (satisfying Asms.\,\ref{asm:RS} and \ref{asm:LP}),
and presence of convergence guarantee, convergence rate evaluation, 
and worst-case bound of convergence rate of the mode estimate sequence 
$(\bm{y}_t)_{t\in\mathbb{N}}$ obtained by the corresponding MS algorithm, 
where $\triangle$ implies that it holds under additional conditions.
The table also lists the reference numbers or the theorem in this paper
where each result is first proven.}
\label{tab:Kernel}
\begin{tabular}{c|c|c|c|cc|cc|cc}
\toprule
 \multirow{2}{*}{Kernel} & \multirow{2}{*}{$\hat{K}(u)\propto$} & 
 \multirow{2}{*}{Asm.\,\ref{asm:QM}} & \multirow{2}{*}{Asm.\,\ref{asm:LCG}} & 
\multicolumn{2}{c}{Convergence} &
 \multicolumn{2}{|c}{Convergence} & 
\multicolumn{2}{|c}{Worst-case bound}\\
&&&& 
\multicolumn{2}{c}{guarantee} &
 \multicolumn{2}{|c}{rate evaluation} & 
\multicolumn{2}{|c}{of convergence rate}\\
\midrule
Biweight & $\{(1-u)_+\}^2$ & \checkmark & \checkmark &
{\checkmark} & Thm.\;\ref{thm:MS-CG}&
{\checkmark} & Thm.\;\ref{thm:MS-Rate}&
{\checkmark} & Thm.\;\ref{thm:MS-Upper}\\
-- & $\{(1-u)_+\}^{3/2}$ & \checkmark & $\times$ &
$\triangle$ & Thm.\;\ref{thm:MS-GCG} under \hyl{a1}\&\hyl{a2}&
$\triangle$ & Thm.\;\ref{thm:MS-Rate} under \hyl{a1}\&\hyl{a2}&
{$\times$}& \\
Triweight & $\{(1-u)_+\}^3$ & 
\checkmark & \checkmark & 
{\checkmark} & Thm.\;\ref{thm:MS-CG}&
{\checkmark} & Thm.\;\ref{thm:MS-Rate}&
{\checkmark} & Thm.\;\ref{thm:MS-Upper}\\
Tricube & $\{(1-u^{3/2})_+\}^3$ & 
$\times$ & \checkmark & 
{$\times$} & & {$\times$} & & {$\times$} & \\
Cosine & $\cos(\frac{\pi u^{1/2}}{2})\bbI(u\le1)$ & 
\checkmark & $\times$ & 
$\triangle$ & Thm.\;\ref{thm:MS-GCG} under \hyl{a1}\&\hyl{a2}&
$\triangle$ & Thm.\;\ref{thm:MS-Rate} under \hyl{a1}\&\hyl{a2}&
{$\times$}& \\
Epanechnikov & $(1-u)_+$ & 
\checkmark & $\times$ & 
{\checkmark} & \cite{comaniciu1999mean, Huang2018} & 
{\checkmark} & \cite{comaniciu1999mean, Huang2018} & 
{\checkmark} & \cite{comaniciu1999mean, Huang2018}\\
Gaussian & $e^{-u}$ & \checkmark & \checkmark &
{\checkmark} & \cite{yamasaki2019ms} & {\checkmark} & Thm.\;\ref{thm:MS-Rate}& {$\times$}& \\
Logistic & $\frac{1}{e^{u^{1/2}}+2+e^{-u^{1/2}}}$ &
\checkmark & \checkmark & 
{\checkmark} & \cite{yamasaki2019ms} & {\checkmark} & Thm.\;\ref{thm:MS-Rate}& {$\times$}& \\
Cauchy & $\frac{1}{1+u}$ & 
\checkmark& \checkmark & 
{\checkmark} & \cite{yamasaki2019ms} & {\checkmark} & Thm.\;\ref{thm:MS-Rate}& {$\times$}& \\
\bottomrule
\end{tabular}
\end{table*}
\subsection{Convergence Rate}
\label{sec:Rate}
In this subsection, we study convergence rate of the MS algorithm.
As mentioned at the end of Section~\ref{sec:Related},
there are only a few studies on the convergence rate of the MS algorithm:
It was proved in~\cite{comaniciu1999mean} and \cite{Huang2018}
that the MS algorithm with the Epanechnikov kernel
converges in a finite number of iterations,
and in~\cite{carreira2007gaussian}
that the MS algorithm with the Gaussian kernel
exhibits linear convergence provided that
the Hessian of the KDE at a critical point is non-degenerate.
We here establish a convergence rate evaluation for 
other kernels under more general situations.

Assume for a moment 
that the kernel $K$ is twice continuously differentiable 
(i.e., $K$ is of class $C^2$) and hence the KDE $f$ is so as well,
in addition to Assumptions~\ref{asm:RS} and \ref{asm:QM}.
Consider Taylor expansion of the map 
$\by_t\mapsto\by_{t+1}=\by_t+\bm{m}(\by_t)$ 
around a critical point $\bar{\by}$ of $f$,
\begin{align}
\label{eq:Tay-Exp1}
	\by_{t+1}
	=\bar{\by}+\bfJ(\bar{\by})(\by_t-\bar{\by})+o(\|\by_t-\bar{\by}\|),
\end{align}
where $\bfJ(\bar{\by})$ is the Jacobian of 
the map $\bx\mapsto\bx+\bm{m}(\bx)$ at $\bx=\bar{\by}$.
When $\by_t$ is sufficiently close to $\bar{\by}$,
one has the relation
\begin{align}
\label{eq:Tay-Exp2}
	\|\by_{t+1}-\bar{\by}\|
	\le\|\bfJ(\bar{\by})(\by_t-\bar{\by})\|
	+\epsilon\|\by_t-\bar{\by}\|
%
%
\end{align}
with a sufficiently small $\epsilon>0$.
This relation suggests that the mode estimate sequence 
$(\by_t)_{t\in\bbN}$ achieves the linear convergence 
(i.e., $\|\by_{t+1}-\bar{\by}\|\le(|\delta|+\epsilon)\|\by_t-\bar{\by}\|$ for sufficiently large $t$)
when the matrix $\bfJ(\bar{\by})$ is real symmetric and 
the farthest-from-zero eigenvalue $\delta$ of $\bfJ(\bar{\by})$ has the absolute value less than 1.

Simple calculation reveals that the Jacobian $\bfJ(\bar{\by})$
of the map $\bx\mapsto\bx+\bm{m}(\bx)$ at $\bx=\bar{\by}$ is given by 
\begin{align}
\label{eq:Jac1}
	\bfJ(\bar{\by})
	=\frac{\sum_{i=1}^n\hat{K}''(\|\frac{\bar{\by}-\bx_i}{h}\|^2/2)(\bx_i-\bar{\by})(\bx_i-\bar{\by})^\top}
	{-h^2\sum_{i=1}^n\hat{K}'(\|\frac{\bar{\by}-\bx_i}{h}\|^2/2)},
\end{align}
which is real symmetric.
It should be noted that the denominator
of the right-hand side of~\eqref{eq:Jac1}
is equal to $nh^{d+2}\check{f}(\bar{\by})\ge0$,
which is positive if $f(\bar{\by})>0$.
As Assumption~\ref{asm:QM} ensures that $\hat{K}''$ is non-negative, 
$\bfJ(\bar{\by})$ becomes positive semidefinite.
On the other hand, 
from $\bm{m}(\bx)=\frac{h^2}{\check{f}(\bx)}\nabla f(\bx)$ 
and $\nabla f(\bar{\by})=\bm{0}$,
the Jacobian is also calculated as 
\begin{align}
\label{eq:Jac2}
	\bfJ(\bar{\by})
	=\bfI_d+\frac{h^2}{\check{f}(\bar{\by})}\nabla^2 f(\bar{\by}),
\end{align}
where $\bfI_d$ is the $d\times d$-identity matrix.
The fact that $\nabla^2f$ at a local maximizer becomes negative semidefinite, 
together with the positive semidefiniteness of the Jacobian $\bfJ(\bar{\by})$ mentioned above, 
implies that $\bfJ(\bar{\by})$ at a local maximizer $\bar{\by}$
of $f$ has eigenvalues within the interval $[0,1]$.
The following proposition, which is a generalization 
of \cite{carreira2007gaussian} with the Gaussian kernel
to that with a generic 
kernel allowing twice continuous differentiability of the KDE at $\bar{\bm{y}}$,
shows the linear convergence 
when the Hessian $\nabla^2f(\bar{\by})$ is non-degenerate.

\begin{proposition}[Linear convergence in non-degenerate case]
\label{prop:LC}
Assume Assumptions~\ref{asm:RS} and \ref{asm:QM}, 
that the mode estimate sequence $(\by_t)_{t\in\bbN}$
obtained by the MS algorithm~\eqref{eq:MS-Iter} 
converges to $\bar{\bm{y}}=\lim_{t\to\infty}\bm{y}_t$, 
that there exists a neighborhood of $\bar{\by}$ 
where the KDE $f$ is twice continuously differentiable 
(which holds when the kernel $K$ is of class $C^2$),
and that the Hessian $\nabla^2f(\bar{\by})$ of $f$
at $\bar{\by}$ is negative definite.
Then, the mode estimate sequence $(\by_t)_{t\in\bbN}$ 
achieves the linear convergence: 
for the largest eigenvalue $\lambda\in[-\frac{\check{f}(\bar{\by})}{h^2},0)$ of $\nabla^2f(\bar{\by})$
and any $\epsilon\in[0,-\frac{h^2}{\check{f}(\bar{\by})}\lambda)$,
there exists $\tau\in\bbN$ such that 
$\|\by_{t+1}-\bar{\by}\|\le q\|\by_t-\bar{\by}\|$ for any $t\ge\tau$
with $q=1+\frac{h^2}{\check{f}(\bar{\by})}\lambda+\epsilon\in[0,1)$.
%
\end{proposition}

Proposition~\ref{prop:LC} tells us the typical convergence rate of the MS algorithm.
Additionally, we would like to note that the linear convergence guarantee in this proposition
implies the exponential rate convergence as well
(although the converse does not hold in general):
Applying the relation $\|\by_{t+1}-\bar{\by}\|\le q\|\by_t-\bar{\by}\|$ recursively yields 
$\|\by_t-\bar{\by}\|\le q^{t-\tau}\|\by_\tau-\bar{\by}\|$ for $t\ge\tau$,
which implies $\|\by_t-\bar{\by}\|=O(q^t)$.
Also, the second-order Taylor expansion of the KDE $f$ around the critical point $\bar{\by}$ 
shows the exponential-rate convergence of the density estimate sequence $(f(\by_t))_{t\in\bbN}$ as 
$|f(\bar{\by})-f(\by_t)|\approx|(\by_t-\bar{\by})^\top\{\nabla^2 f(\bar{\by})\}(\by_t-\bar{\by})|=O(q^{2t})$.

In the above proposition, we excluded from our consideration 
the case where the Hessian $\nabla^2f(\bar{\by})$ is degenerate. 
When the Hessian is degenerate,
the Jacobian $\bfJ(\bar{\by})$ has the largest eigenvalue equal to 1, 
and then analysis based on the first-order Taylor approximation of $\bm{m}(\by_t)$
does not lead to the (linear) convergence of the MS algorithm. 
In order to evaluate convergence rate along the same line of the analysis
in such cases,
one might have to investigate effects of the residual term in more detail.

Discussion based on the {\L}ojasiewicz property allows us to derive
convergence rate of the MS algorithm
under a weaker assumption on differentiability. 
More concretely, by applying \cite[Theorem 3.5]{frankel2015splitting}, 
we can prove the following theorem on the convergence rate
of the MS algorithm 
that covers more general kernels and the degenerate case as well.
It provides upper bounds of the convergence rate
determined by the {\L}ojasiewicz exponent $\theta$ of the KDE. 

\begin{theorem}[Convergence rate evaluation]
\label{thm:MS-Rate}
Under the same assumptions as in Theorem~\ref{thm:MS-GCG} or \ref{thm:MS-CG}, 
assume further that the KDE $f$ has the {\L}ojasiewicz exponent $\theta$ at $\bar{\by}=\lim_{t\to\infty}\by_t$, 
for the mode estimate sequence $(\by_t)_{t\in\bbN}$ obtained by the MS algorithm~\eqref{eq:MS-Iter}.
Then, one has that
\begin{itemize}
\item[\hyt{b1}]%
if $\theta\in[0,\frac{1}{2})$ 
then the MS algorithm converges in a finite number of iterations
(there exists $\tau\in\bbN$ such that 
$\by_t=\bar{\by}_\tau$ and $f(\by_t)=f(\by_\tau)$ for any $t\ge\tau$),
\item[\hyt{b2}]%
if $\theta=\frac{1}{2}$
then the MS algorithm achieves the exponential-rate convergence
(there exists $q\in(0,1)$ such that 
$\|\by_t-\bar{\by}\|=O(q^t)$ and $f(\bar{\by})-f(\by_t)=O(q^{2t})$), or
\item[\hyt{b3}]%
if $\theta\in(\frac{1}{2},1)$
then the MS algorithm achieves the polynomial-rate convergence
($\|\by_t-\bar{\by}\|=O(t^{-\frac{1-\theta}{2\theta-1}})$
and $f(\bar{\by})-f(\by_t)=O(t^{-\frac{1}{2\theta-1}})$).
\end{itemize}
\end{theorem}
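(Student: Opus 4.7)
The plan is to derive the rate estimates by verifying the hypotheses of the abstract convergence-rate result \cite[Theorem~3.5]{frankel2015splitting}, whose ingredients are a sufficient-decrease condition, a relative-error bound, and the {\L}ojasiewicz inequality. Since Theorems~\ref{thm:MS-GCG} and~\ref{thm:MS-CG} already guarantee $\by_t\to\bar{\by}$ with finite total length $\sum_t\|\by_{t+1}-\by_t\|<\infty$, I may restrict attention to the tail of the sequence, which lies in a neighborhood of $\bar{\by}$ on which the {\L}ojasiewicz inequality $\|\nabla f(\by)\|\ge c\{f(\bar{\by})-f(\by)\}^\theta$ holds. Set $\Delta_t\coloneq f(\bar{\by})-f(\by_t)\ge0$; by Proposition~\ref{prop:dens} and continuity of $f$, $\Delta_t$ is monotonically decreasing to $0$.

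The first step is sufficient decrease. Since $\bar{f}(\cdot|\by_t)$ is a concave quadratic with Hessian $-(\check{f}(\by_t)/h^2)\bfI_d$ maximized at $\by_{t+1}$, one has
\begin{align*}
\bar{f}(\by_{t+1}|\by_t)-\bar{f}(\by_t|\by_t)=\frac{\check{f}(\by_t)}{2h^2}\|\by_{t+1}-\by_t\|^2;
\end{align*}
combined with $f(\by_{t+1})\ge\bar{f}(\by_{t+1}|\by_t)$, $f(\by_t)=\bar{f}(\by_t|\by_t)$, and the positive lower bound $\check{f}(\by_t)\ge C/(nh^d)$ from Proposition~\ref{prop:fw}, this yields $\Delta_t-\Delta_{t+1}\ge a\|\by_{t+1}-\by_t\|^2$ for some $a>0$. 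Next, because $\by_t$ is a minimizer of the nonnegative function $f-\bar{f}(\cdot|\by_t)$, one has the MM-gradient identity $\nabla f(\by_t)=\nabla_\bx\bar{f}(\by_t|\by_t)=(\check{f}(\by_t)/h^2)(\by_{t+1}-\by_t)$. The triangle inequality, the Lipschitz continuity of $\nabla f$ from assumption~\hyl{a1} (with constant $L$), and the uniform upper bound $\check{f}(\by_t)\le-\hat{K}'(0+)/h^d$ then yield the relative-error bound $\|\nabla f(\by_{t+1})\|\le b\|\by_{t+1}-\by_t\|$ for some $b>0$. Substituting the {\L}ojasiewicz bound $c\Delta_{t+1}^\theta\le\|\nabla f(\by_{t+1})\|$ into these two estimates produces the key recursion
\begin{align*}
\Delta_{t+1}^{2\theta}\le\frac{b^2}{ac^2}(\Delta_t-\Delta_{t+1}).
\end{align*}

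Applying \cite[Theorem~3.5]{frankel2015splitting} to this recursion directly yields the rates for $\Delta_t$: finite termination in case~\hyl{b1}, $\Delta_t=O(q^{2t})$ for some $q\in(0,1)$ in case~\hyl{b2}, and $\Delta_t=O(t^{-1/(2\theta-1)})$ in case~\hyl{b3}. To transfer these to iterate rates, I invoke the finite-length property $\|\bar{\by}-\by_t\|\le\sum_{s\ge t}\|\by_{s+1}-\by_s\|$, and telescope the resulting series using the concave desingularizing function $\varphi(s)=\{c(1-\theta)\}^{-1}s^{1-\theta}$ in the standard Attouch--Bolte manner; this bounds the tail by $O(\Delta_t^{1-\theta})$, giving $\|\bar{\by}-\by_t\|=O(q^t)$ in case~\hyl{b2} and $O(t^{-(1-\theta)/(2\theta-1)})$ in case~\hyl{b3}. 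The main technical subtlety is verifying the relative-error bound carefully: the explicit identity relating $\nabla f(\by_t)$ to the MS step is specific to the MM structure, and the passage to $\nabla f(\by_{t+1})$ (rather than the vanishing gradient of the surrogate at $\by_{t+1}$) is what requires the Lipschitz hypothesis~\hyl{a1}. Unlike generic MM analyses~\cite{bolte2016majorization}, where the rate is controlled by the {\L}ojasiewicz exponent of a value function, working directly with the KDE here produces the sharper exponent $\theta$ of $f$ itself reported in the theorem.
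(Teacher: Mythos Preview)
Your proposal is correct and follows essentially the same route as the paper: both verify the sufficient-decrease condition (the paper's Lemma~\ref{lem:A}), the relative-error condition (Lemmas~\ref{lem:B}--\ref{lem:C}), and the {\L}ojasiewicz inequality near $\bar{\by}$, and then obtain the three rate regimes via \cite[Theorem~3.5]{frankel2015splitting}. The only cosmetic differences are that you cite the abstract theorem as a black box while the paper reproduces its proof explicitly through the primitive $\Phi(u)=\int-(\varphi')^2$, and that you feed the {\L}ojasiewicz bound at $\by_{t+1}$ through Lemma~\ref{lem:C} to get the recursion $\Delta_{t+1}^{2\theta}\le C(\Delta_t-\Delta_{t+1})$, whereas the paper feeds it at $\by_t$ through Lemma~\ref{lem:B} to get $\Phi(r_{t+1})-\Phi(r_t)\ge\bar a\bar b^2$; both variants are standard and yield the same conclusions.
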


\begin{figure*}[!t]
\centering
\begin{tabular}{c}
\begin{overpic}[height=2.75cm, bb=0 0 1698 294]{./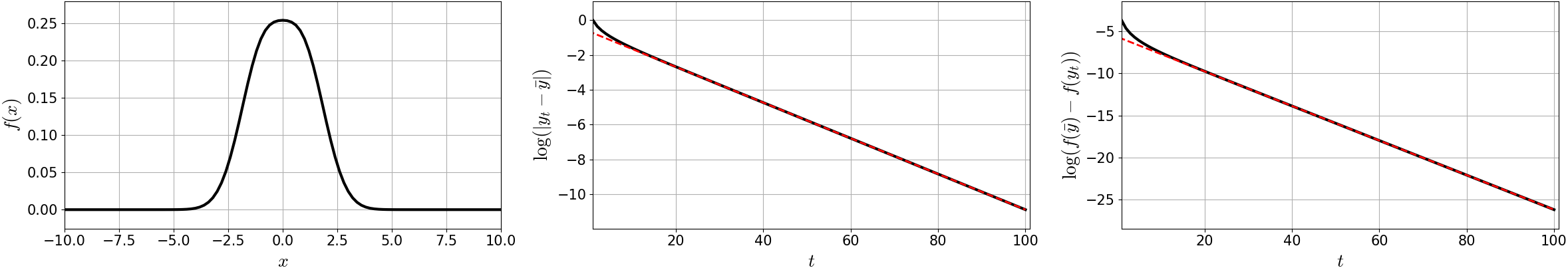}
\put(-2,16){{\scriptsize\hyt{i}}}\end{overpic}\\
\begin{overpic}[height=2.75cm, bb=0 0 1698 294]{./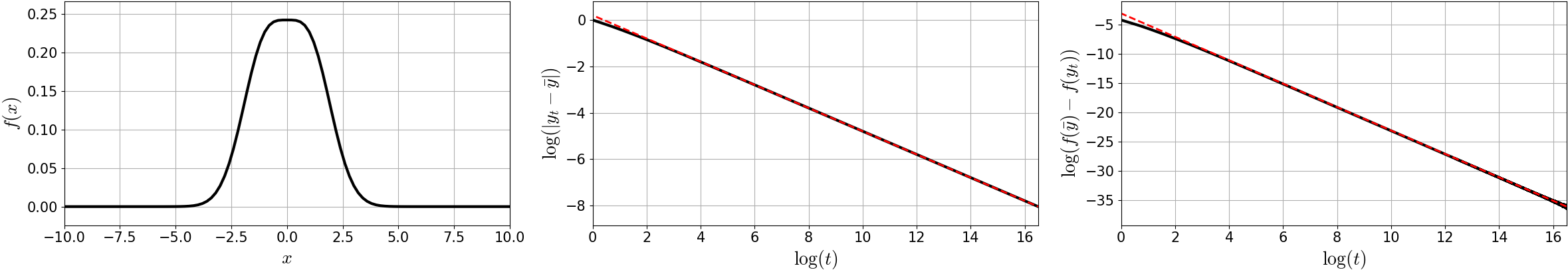}
\put(-2,16){{\scriptsize\hyt{ii}}}\end{overpic}\\
\begin{overpic}[height=2.75cm, bb=0 0 1698 294]{./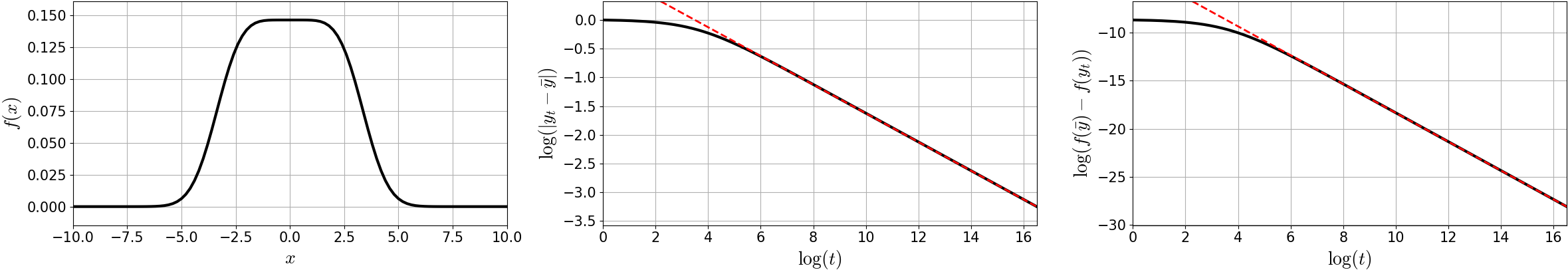}
\put(-2,16){{\scriptsize\hyt{iii}}}\end{overpic}\\
\begin{overpic}[height=2.75cm, bb=0 0 1698 294]{./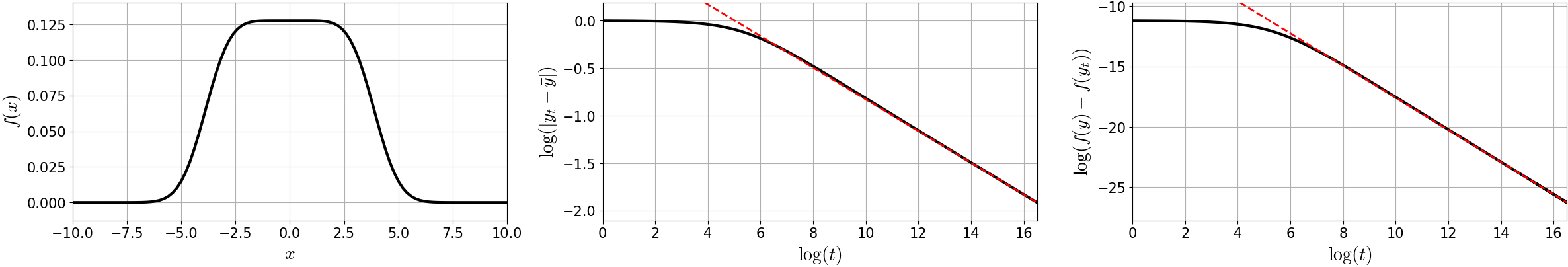}
\put(-2,16){{\scriptsize\hyt{iv}}}\end{overpic}
\end{tabular}
\caption{
Instances of the KDE, and plots of $|y_t-\bar{y}|$ and $f(\bar{y})-f(y_t)$ versus $t$
with $(y_t)_{t\in\bbN}$ obtained by the MS algorithm with the Gaussian kernel
with $d=1$ and $h=1$.
For every case, the mode $\bar{y}$, to which 
$(y_t)_{t\in\bbN}$ converges, is the origin (i.e., $\bar{y}=0$).
\protect\hyl{i} $n=2$, $x_1,x_2=\pm0.95$, 
with which the second derivative $f^{(2)}(x)$ of the KDE $f(x)$ is non-degenerate at the mode $x=\bar{y}$, 
yielding $\theta=1/2$. 
The plots of $|y_t-\bar{y}|$ and $f(\bar{y})-f(y_t)$ are shown in semilog plots.
\protect\hyl{ii} $n=2$, $x_1,x_2=\pm1$, 
with which $f^{(i)}(\bar{y})=0$ for all $i\in[3]$ and $f^{(4)}(\bar{y})<0$,
yielding $\theta=1-1/4=3/4$. 
\protect\hyl{iii} $n=6$, 
$x_1,x_2=\pm0.564\ldots,x_3,x_4=\pm1.721\ldots,x_5,x_6=\pm2.801\ldots$, 
which were carefully chosen so that 
$f^{(i)}(\bar{y})=0$ for all $i\in[5]$ and $f^{(6)}(\bar{y})<0$, yielding $\theta=1-1/6=5/6$. 
\protect\hyl{iv} $n=6$, 
$x_1,x_2=\pm0.651\ldots,x_3,x_4=\pm1.959\ldots,x_5,x_6=\pm3.243\ldots$, 
which were carefully chosen so that 
$f^{(i)}(\bar{y})=0$ for all $i\in[7]$ and $f^{(8)}(\bar{y})<0$, yielding $\theta=1-1/8=7/8$. 
The plots of $|y_t-\bar{y}|$ and $f(\bar{y})-f(y_t)$ in 
\protect\hyl{ii}, \protect\hyl{iii}, and \protect\hyl{iv} are shown in log-log plots.
Simulation results are shown as black solid curves,
and the red dotted lines show the asymptotic convergence rates
predicted by Proposition~\ref{prop:LC} and 
Theorem~\ref{thm:MS-Rate}.}
\label{fig:LR}
\end{figure*}

We would like to mention that, among the three cases appearing
in Theorem 3, the case $\theta\in[0,\frac{1}{2})$ may happen only exceptionally.
For example, when the convergent point $\bar{\bm{y}}$ is a local minimizer of the KDE, 
the {\L}ojasiewicz exponent $\theta$ at $\bar{\bm{y}}$ becomes 0,
implying that convergence to a local minimizer should happen
in a finite number of iterations. 
%
On the other hand, $\theta\in(0,\frac{1}{2})$ would not hold typically 
under the assumptions of Theorem~\ref{thm:MS-Rate}:
Consider the case where the KDE $f(\by)$ behaves like \eqref{eq:ExpA}, i.e., 
$f(\by)=f(\bar{\by})-\|\by-\bar{\by}\|^\alpha$, locally around a mode $\by=\bar{\by}$.
Differentiating both sides of this local equality and Lipschitz continuity of $\nabla f$ 
(the assumption~\hyl{a1} or Assumption~\ref{asm:LCG}) show
$|\nabla f(\by)-\nabla f(\bar{\by})|=\alpha\|\by-\bar{\by}\|^{\alpha-1}
\le L\|\by-\bar{\by}\|$ with the Lipschitz constant $L\ge0$ of $\nabla f$.
This implies $\alpha\ge2$ and hence $\theta\ge\frac{1}{2}$.
Therefore, when the mode estimate sequence converges to a mode as expected 
for the MS algorithm, typically Theorem~\ref{thm:MS-Rate} \hyl{b2} or \hyl{b3} 
tells us the convergence rate.

It should be noted that
the Epanechnikov kernel does not satisfy Assumption~\ref{asm:LCG} as shown in Table~\ref{tab:Kernel},
so that Theorem~\ref{thm:MS-Rate}
will be applicable to the MS algorithm with the Epanechnikov kernel
only under the conditions where the assumptions~\hyl{a1} and \hyl{a2}
in Theorem~\ref{thm:MS-GCG} are satisfied. 
With the Epanechnikov kernel, 
the {\L}ojasiewicz exponent at the mode of the KDE is typically $\frac{1}{2}$, 
and if applying Theorem~\ref{thm:MS-Rate} is legitimate, it 
suggests the exponential-rate convergence via \hyl{b2},
which is a looser evaluation than 
the finite-time convergence guaranteed by \cite{comaniciu1999mean} and \cite{Huang2018}.
However, the convergence rate evaluation provided by Theorem~\ref{thm:MS-Rate} \hyl{b2} and \hyl{b3} seems to be
almost tight in other generic cases, as demonstrated in Figure~\ref{fig:LR},
where the behaviors of the MS algorithm with the Gaussian kernel
in the one-dimensional case are shown, with carefully chosen positions
of data points so that the KDE has a degenerate Hessian at its mode.

Theorem~\ref{thm:MS-Rate}, as well as the experimental results
summarized in Figure~\ref{fig:LR}, strongly suggests that
the {\L}ojasiewicz exponent of the KDE bears essential information
about the convergence rate of the MS algorithm.
It is known, however, 
that the calculation of the {\L}ojasiewicz exponent is difficult in general 
(see discussion of \cite{li2018calculus} for details).
Even in such a circumstance, 
\cite{gwozdziewicz1999lojasiewicz, d2005explicit, kurdyka2014separation} 
provided bounds of the {\L}ojasiewicz exponent for polynomial functions.
On the ground of \cite[Proposition 4.3]{d2005explicit}, we can provide an upper bound of 
the {\L}ojasiewicz exponent of the KDE with a piecewise polynomial kernel.

\begin{theorem}[Bound of {\L}ojasiewicz exponent]
\label{thm:MS-Upper}
Assume that the kernel $K$ is of class $C^1$ and piecewise polynomial with maximum degree $k\ge2$.
Then, the {\L}ojasiewicz exponent $\theta$
of the KDE $f$ at any critical point $\bar{\by}$ is bounded from above as 
\begin{align}
\label{eq:Loja-exp-UB}
	\theta\le 1-\frac{1}{\max\{k(3k-4)^{d-1}, 2k(3k-3)^{d-2}\}}, 
\end{align}
provided that $f$ is not constant in any subdomain
with a non-empty intersection with the $\epsilon$-neighborhood of $\bar{\by}$
for any $\epsilon>0$. 
\end{theorem}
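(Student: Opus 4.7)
The plan is to reduce the multivariate piecewise-polynomial case to the known polynomial {\L}ojasiewicz bound of \cite{d2005explicit}, applied separately on each polynomial piece of the KDE that approaches $\bar{\by}$. Since $K$ is piecewise polynomial of maximum degree $k$, each translated/rescaled summand $K((\cdot-\bx_i)/h)$, $i\in[n]$, is piecewise polynomial of maximum degree $k$ on the shifted partition. Take the common refinement of these $n$ partitions to obtain a finite partition $\{S_l\}_{l\in[L]}$ of $\bbR^d$; on every $S_l$ the KDE $f$ coincides with a polynomial $f_l$ of degree at most $k$, as a sum of $n$ polynomials each of degree at most $k$. Because the partition is finite, there exists an open ball $B(\bar{\by},\epsilon_0)$ that meets only the finitely many pieces $\{S_l\}_{l\in I}$ whose closures contain $\bar{\by}$, and by the non-degeneracy hypothesis each such $f_l$ is a non-constant polynomial.

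Next, I would apply \cite[Proposition 4.3]{d2005explicit} to each polynomial $P_l(\bx)\coloneq f_l(\bar{\by})-f_l(\bx)$, which satisfies $P_l(\bar{\by})=0$, is a non-constant polynomial of degree at most $k$ in $d$ variables, and has $\nabla P_l=-\nabla f_l$. The d'Acunto--Kurdyka bound yields constants $c_l>0$ and $\epsilon_l\in(0,\epsilon_0]$ such that
\begin{align*}
	\|\nabla f_l(\bx)\|
	\ge c_l\,\{f_l(\bar{\by})-f_l(\bx)\}^{\theta_*}
\end{align*}
for every $\bx$ in the ball $B(\bar{\by},\epsilon_l)$ with $f_l(\bx)\le f_l(\bar{\by})$, where $\theta_*$ is the right-hand side of~\eqref{eq:Loja-exp-UB}. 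The two branches of the $\max$ inside $\theta_*$ reflect the two regimes (isolated vs.\ non-isolated zero of $P_l$ at $\bar{\by}$) that appear in the d'Acunto--Kurdyka estimate applied in dimension $d$ with degree at most $k$.

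Finally, I would glue the pointwise estimates using $C^1$ regularity. Because $K$ is of class $C^1$, so is $f$, and $\nabla f$ is well defined across piece boundaries; in particular $\nabla f(\bx)=\nabla f_l(\bx)$ whenever $\bx\in\cl(S_l)$, and $f(\bar{\by})=f_l(\bar{\by})$ for every $l\in I$ by continuity. Setting $\epsilon\coloneq\min_{l\in I}\epsilon_l$ and $c\coloneq\min_{l\in I}c_l$, any $\bx$ with $\|\bx-\bar{\by}\|<\epsilon$ and $f(\bx)\le f(\bar{\by})$ lies in some $S_l$ with $l\in I$, whence $f_l(\bx)\le f_l(\bar{\by})$ and the per-piece inequality transfers to the Lojasiewicz inequality for $f$ itself with exponent $\theta_*$; this yields $\theta\le\theta_*$. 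The main obstacle is the faithful translation of \cite[Proposition 4.3]{d2005explicit} into the particular two-branch form stated in~\eqref{eq:Loja-exp-UB} and verifying that the exponent is governed by the single maximum degree $k$ of the polynomial pieces of $f$ (rather than by $nk$ or some combinatorial blow-up from summing $n$ translates); the partition-boundary issue is a minor matter because $C^1$ smoothness forces the neighbouring polynomials to agree to first order along shared boundaries, so the piecewise estimates assemble without conflict.
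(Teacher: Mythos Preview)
Your proposal is correct and follows essentially the same route as the paper: decompose $f$ into polynomial pieces of degree at most $k$ via the common refinement, invoke \cite[Proposition 4.3]{d2005explicit} on each non-constant piece $f_l$ whose closure contains $\bar{\by}$, and glue the per-piece {\L}ojasiewicz inequalities using the $C^1$ regularity of $f$ (which forces $f_l(\bar{\by})=f(\bar{\by})$ and $\nabla f_l(\bar{\by})=\nabla f(\bar{\by})=\bm{0}$ for every relevant $l$). The only presentational difference is in the gluing step: you fix the common exponent $\theta_*$ from the outset and take $c=\min_l c_l$, whereas the paper first records the individual exponents $\theta_l$ and then upgrades them to $\theta_{\max}$ via a normalisation by $A=\max_l\sup_{\by}\{f_l(\bar{\by})-f_l(\by)\}$; your shortcut implicitly uses that same normalisation (the passage from exponent $\theta_l$ to the larger $\theta_*$ on a bounded ball costs a multiplicative constant), so the two arguments are equivalent.
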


This bound of the {\L}ojasiewicz exponent, 
together with Theorem~\ref{thm:MS-Rate} \hyl{b3}, 
gives a worst-case bound of the convergence rate 
of the MS algorithm using a piecewise polynomial kernel.
However, it should be noted that the bound provided in Theorem~\ref{thm:MS-Upper} 
is not tight in general and might be improved by future research.

Finally, we would like to make a few remarks regarding the discussion in this section:
First, the {\L}ojasiewicz exponent $\theta$, which appears in 
Theorems~\ref{thm:MS-Rate} and \ref{thm:MS-Upper}, 
is the one at a critical point of the KDE \eqref{eq:KDE} and depends
not only on the kernel $K$ but also on 
the data points $\{\bm{x}_i\}_{i=1}^n$ and bandwidth $h$,
so that our results on the convergence rate are not readily applicable
to the issue of how to select the kernel used in the MS algorithm. 
%
%
Kernel selection should also be affected by factors other than the convergence rate, 
such as quality of the output of the algorithm, 
like the optimality of the biweight kernel in terms of the asymptotic statistical efficiency 
for the KDE-based estimation of a non-degenerate mode
\cite{parzen1962estimation, granovsky1991optimizing, yamasaki2023optimal}.

\section{Conclusion and Future Work}
\label{sec:Concl}
We have shown that the mode estimate sequence generated 
by the MS algorithm using a $C^1$ subanalytic kernel 
converges to a critical point of the KDE (Theorem~\ref{thm:MS-CG}).
Our proof does neither presume 
that the KDE has a finite number of critical points or they are isolated, 
nor that its Hessian at a convergent point is non-degenerate, 
nor restriction on the size of the bandwidth or on the data dimension;
it utilizes the {\L}ojasiewicz property of the KDE.
The class of kernels covered by this theorem includes several piecewise polynomial kernels, 
such as the biweight kernel which is optimal among non-negative kernels 
for the KDE-based estimation of a non-degenerate mode
in terms of the asymptotic statistical efficiency 
\cite{granovsky1991optimizing, yamasaki2023optimal}.
The convergence analysis results in this paper extend the existing ones 
for the Epanechnikov kernel \cite{comaniciu1999mean, Huang2018} 
and for analytic kernels \cite{yamasaki2019ms}. 
Moreover, we not only provide a sufficient condition for 
the mode estimate sequence to achieve the linear convergence 
when the Hessian of the KDE at a convergent point
is non-degenerate (Proposition~\ref{prop:LC}), 
but also give a worst-case evaluation of 
the convergence rate (Theorems~\ref{thm:MS-Rate} and \ref{thm:MS-Upper})
that depends on the {\L}ojasiewicz exponent of the KDE
and is applicable even when the Hessian is degenerate.

The convergence theorems of the MS algorithm,
including ours for $C^1$ subanalytic kernels
and the existing ones for the Epanechnikov kernel and analytic kernels,
are also effective for the iteratively reweighted least squares algorithm, 
commonly used for various versions of robust M-type location estimation and regression 
\cite{huber1981robust, yamasaki2020kernel}.
Moreover, these results can be applied to several generalized MS algorithms.
The conditional MS algorithm, 
which is a representative estimation method for nonparametric modal regression 
\cite{hyndman1996estimating, einbeck2006modelling, chen2016nonparametric, sasaki2016modal},
can be regarded as a weighted version of the conventional MS algorithm with 
the weights determined by the values of the independent variable part of the data.
The convergence theorems can be generalized to 
the weighted version of the MS algorithm derived for the weighted objective function,
$\frac{1}{n h^d}\sum_{i=1}^n w_i K(\frac{\bx-\bx_i}{h})$ 
with constant weights $\{w_i\in(0,\infty)\}_{i\in[n]}$. 
Other instances of the generalized MS algorithms include
an MS variant derived for the KDE $\frac{1}{n}\sum_{i=1}^n\frac{1}{h_i^d}K(\frac{\bx-\bx_i}{h_i})$
with datapoint-wise bandwidths $\{h_i\in(0,\infty)\}_{i\in[n]}$ \cite{comaniciu2001variable, LiHuWu2007},
and the over-relaxation of the MS algorithm, 
$\by_{t+1}=\by_t+\zeta\bm{m}(\by_t)$ with a constant $\zeta\in(0,2)$ \cite{yamasaki2019ms}.
Even under these generalizations, 
a guarantee of the convergence to a critical point 
and a convergence rate evaluation still hold as well.

The subspace constrained MS algorithm \cite{ozertem2011locally, ghassabeh2013some}, 
another MS variant, is a method for estimating principal curves and principal surfaces 
as ridges of the KDE \cite{hastie1989principal, sasaki2017estimating}.
It iterates an update rule that is expected to converge to 
a point on a ridge of the KDE instead of its critical point. 
The convergence property of that algorithm would be related to that of the MS algorithm but is still open, 
and analysis with the {\L}ojasiewicz property might be useful for it.

\section*{Acknowledgment}
This work was supported by Grant-in-Aid for JSPS Fellows, Number 20J23367.
We would like to thank the authors of the article~\cite{arias2016estimation} for kindly 
drawing our attention to the errata \cite{Errata} that accompanies that article.

\bibliographystyle{IEEEtran}
\bibliography{bibtex}

\begin{IEEEbiography}[{\includegraphics[width=1in,height=1.25in,clip,bb=0 0 640 800]{./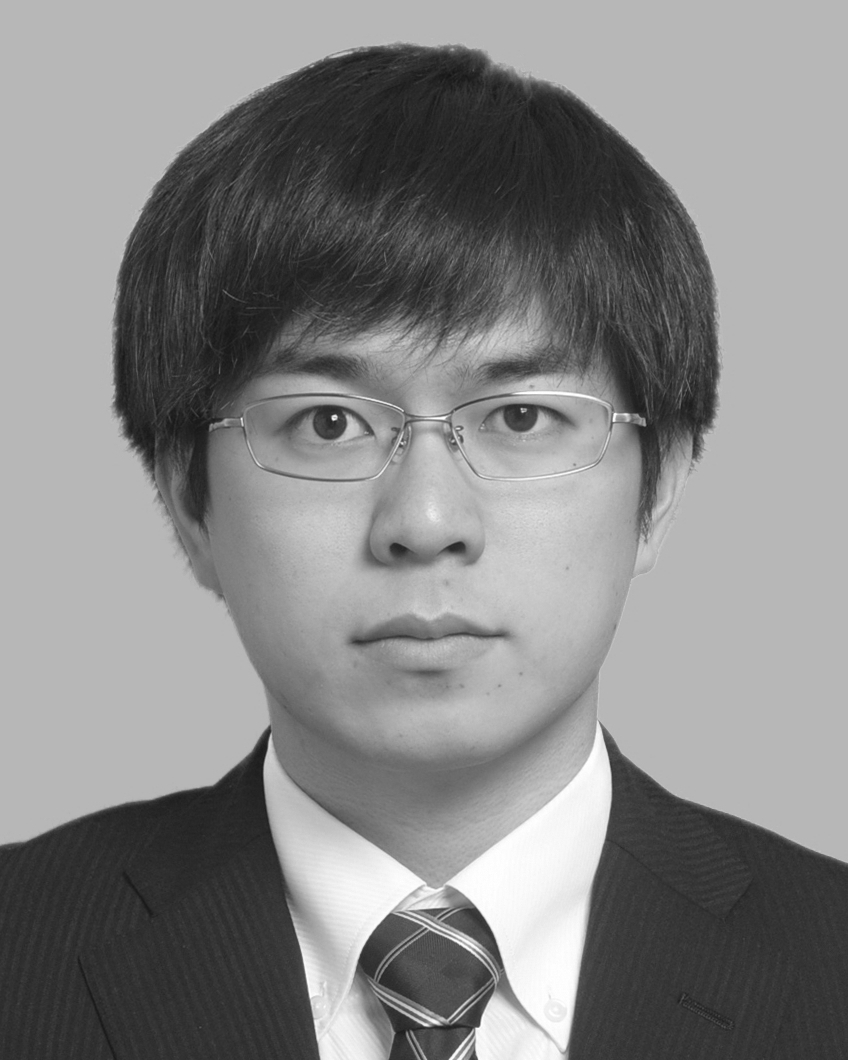}}]{Ryoya Yamasaki}
received the B.E.\ and M.Inf.\ degrees from Kyoto University, Kyoto, Japan, in 2018 and 2020, respectively.
He is currently working toward the D.Inf.\ degree of Graduate School of Informatics, Kyoto University, Kyoto, Japan.
His research interests are in areas of statistics and machine learning.
\end{IEEEbiography}
\begin{IEEEbiography}[{\includegraphics[width=1in,height=1.25in,clip,bb=0 0 800 1000]{./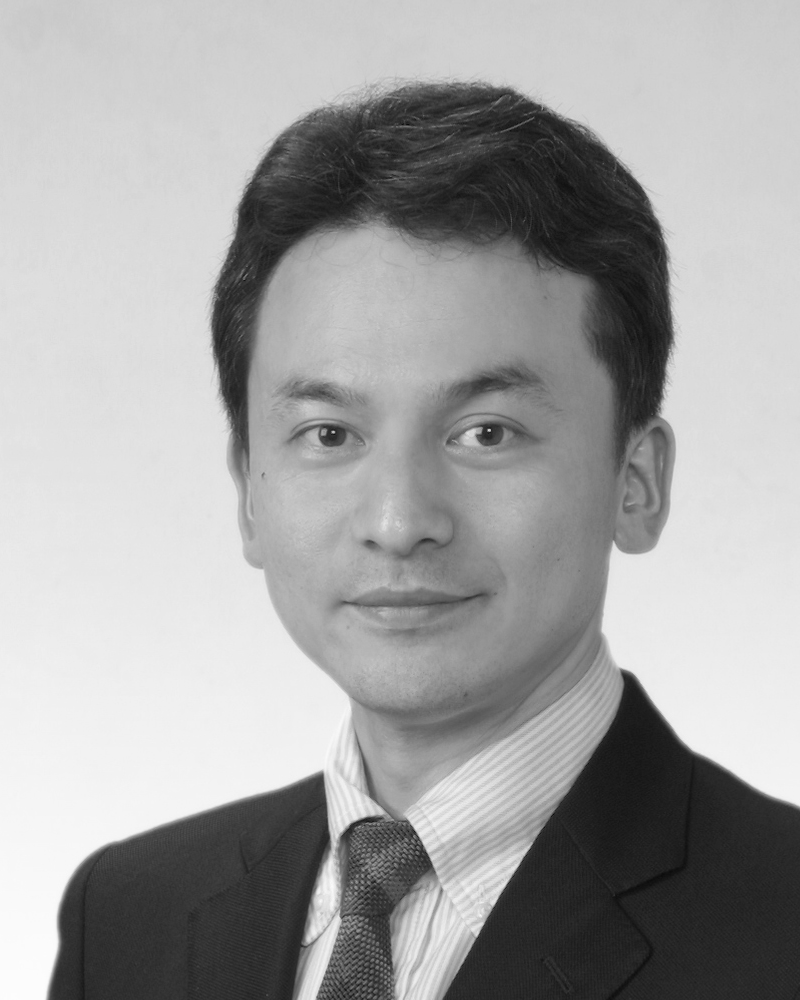}}]{Toshiyuki Tanaka} 
received the B.E., M.E., and D.E.\ degrees from the University of Tokyo, Tokyo, Japan, in 1988, 1990, and 1993, respectively.
He is currently a professor of Graduate School of Informatics, Kyoto University, Kyoto, Japan.
His research interests are in areas of information, coding, and communications theory, and statistical learning.
\end{IEEEbiography}

\clearpage
\onecolumn
\renewcommand{\theequation}{S\arabic{equation}}
\renewcommand{\thelemma}{S\arabic{lemma}}
\renewcommand{\theclaim}{S\arabic{claim}}
\setcounter{equation}{0}
\setcounter{lemma}{0}
\setcounter{claim}{0}

\section*{S1 Proofs of Theorems}
\label{sec:PT}
In this appendix, we provide proofs of the theoretical results stated in the main text of this paper.
Proposition~\ref{prop:dens} is shown as 
Theorem 1 of \cite{comaniciu2002mean}
and Theorem 1 of \cite{yamasaki2019ms}
(or can be proved from Lemma~\ref{lem:A} described below),
and Proposition~\ref{prop:SA} is given by 
\cite{lojasiewicz1965ensembles, kurdyka1994wf}.
Refer to the description in Sections~\ref{sec:Prelim} and \ref{sec:Rate} 
respectively for the proof of Propositions~\ref{prop:GSprop} and \ref{prop:LC}.
Also, Theorem~\ref{thm:MS-CG} is a corollary of Theorem~\ref{thm:MS-GCG}
as explained in Section~\ref{sec:SCP}. 
We here give proofs of the other results, 
Proposition~\ref{prop:fw} and Theorems~\ref{thm:MS-GCG}, \ref{thm:MS-Rate}, and \ref{thm:MS-Upper}.

\textbf{Technical Lemmas for Proposition \ref{prop:fw}}\\
First, we provide two technical lemmas:

\begin{lemma}
\label{lem:checkK}
Assume that a kernel $K$ satisfies Assumptions~\ref{asm:RS} and \ref{asm:QM}. 
Then, $\check{K}$ defined via~\eqref{eq:SD-Pro-Ker} is
non-negative, non-increasing, and bounded.
\end{lemma}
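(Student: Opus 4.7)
The plan is to observe that this lemma essentially repackages claims already sketched immediately after the definition~\eqref{eq:SD-Pro-Ker} in the main text, so the proof amounts to assembling those ingredients into a clean sequence. The three ingredients I need are: (i) monotonicity of the subdifferential of a convex function on $[0,\infty)$, in the sense that $\max\partial\hat{K}(u)\le\min\partial\hat{K}(v)$ whenever $0\le u<v$; (ii) $\max\partial\hat{K}(u)\le 0$ for every $u\in[0,\infty)$, which comes from $\hat{K}$ being non-increasing; and (iii) the explicit formula $\partial\hat{K}(0)=(-\infty,\hat{K}'(0+)]$ together with the finiteness $\hat{K}'(0+)>-\infty$ granted by Assumption~\ref{asm:QM}.

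First I would establish (i) in a self-contained way: given any $c_u\in\partial\hat{K}(u)$ and $c_v\in\partial\hat{K}(v)$, the two subgradient inequalities $\hat{K}(v)-\hat{K}(u)\ge c_u(v-u)$ and $\hat{K}(u)-\hat{K}(v)\ge c_v(u-v)$ add to $0\ge(c_u-c_v)(v-u)$, so $c_u\le c_v$; taking suprema/infima over the respective subdifferentials gives the stated monotonicity. Fact (ii) follows since for $v>u$ the non-increasingness of $\hat{K}$ forces any supporting slope at $v$ to be $\le 0$, and (i) then propagates this back to $u$. Fact (iii) is the standard identification of the subdifferential at a boundary point of a convex function, combined with the hypothesis $\hat{K}'(0+)>-\infty$.

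Next I would translate these into the three claimed properties of $\check{K}$. For non-negativity, note that for $u=0$ one has $\check{K}(0)=-\hat{K}'(0+)\ge 0$ because $\hat{K}'(0+)=\max\partial\hat{K}(0)\le 0$ by (ii); for $u>0$, any selection $\check{K}(u)\in-\partial\hat{K}(u)$ satisfies $\check{K}(u)\ge-\max\partial\hat{K}(u)\ge 0$, again by (ii). For monotonicity, consider $0\le u<v$: combining the representations $-\check{K}(u)\in\partial\hat{K}(u)$ (with $-\check{K}(0)=\hat{K}'(0+)=\max\partial\hat{K}(0)$ at the boundary) and $-\check{K}(v)\in\partial\hat{K}(v)$ with (i) gives $-\check{K}(u)\le\max\partial\hat{K}(u)\le\min\partial\hat{K}(v)\le-\check{K}(v)$, i.e.\ $\check{K}(u)\ge\check{K}(v)$. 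Boundedness is then immediate: $0\le\check{K}(u)\le\check{K}(0)=-\hat{K}'(0+)<\infty$.

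There is no real obstacle here; the only point requiring a little care is treating $u=0$ separately because $\check{K}(0)$ is defined directly from $\hat{K}'(0+)$ rather than through an arbitrary selection from $\partial\hat{K}(0)$, and because the monotonicity argument must cover the case $u=0<v$. Once one records that $\hat{K}'(0+)=\max\partial\hat{K}(0)$, both the monotonicity step at the boundary and the boundedness step merge with the interior argument without ambiguity, so the arbitrariness noted after~\eqref{eq:SD-Pro-Ker} in choosing $\check{K}(u)$ from $-\partial\hat{K}(u)$ is harmless.
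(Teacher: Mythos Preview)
Your proposal is correct and follows essentially the same approach as the paper: the paper's proof simply points back to the argument given in the main text just before and after~\eqref{eq:SD-Pro-Ker}, which establishes monotonicity of $\partial\hat{K}$ via the two subgradient inequalities, non-positivity of $\max\partial\hat{K}(u)$ from $\hat{K}$ being non-increasing, and boundedness from $\check{K}(u)\le\check{K}(0)=-\hat{K}'(0+)<\infty$. Your write-up is slightly more explicit about the boundary case $u=0$, but the substance is identical.
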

\begin{proof}[Proof of Lemma~\ref{lem:checkK}]
It has been proved just above the definition~\eqref{eq:SD-Pro-Ker}
of $\check{K}$ in the main text. 
\end{proof}

\begin{lemma}
\label{lem:hatcheckK}
Assume that a kernel $K$ satisfies Assumptions~\ref{asm:RS} and \ref{asm:QM}. 
Then, for any constant $C_1>0$
there exists a constant $C_2>0$
such that 
$\{u\in[0,\infty)\mid\hat{K}(u)\ge C_1\}\subseteq\{u\in[0,\infty)\mid\check{K}(u)\ge C_2\}$ holds. 
\end{lemma}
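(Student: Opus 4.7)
The plan is to reduce the set inclusion to a strictly positive lower bound for $\check{K}$ on the superlevel set $\{u\mid\hat{K}(u)\ge C_1\}$, and then obtain that lower bound by combining the convexity of $\hat{K}$ with the integrability of $K$. If $\{u\in[0,\infty)\mid\hat{K}(u)\ge C_1\}$ is empty, the claim is vacuous, so I would assume it is nonempty.

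First, I would exploit that $\hat{K}$ is continuous on $[0,\infty)$ (inheriting continuity from $K$ via $\hat{K}(u)=K(\bx)$ for any $\bx\in\bbR^d$ with $\|\bx\|^2/2=u$), non-increasing by Assumption~\ref{asm:QM}, and vanishing at infinity, in order to set $u^*\coloneq\sup\{u\ge0\mid\hat{K}(u)\ge C_1\}$ and identify $\{u\ge0\mid\hat{K}(u)\ge C_1\}$ with the closed interval $[0,u^*]$ on which $\hat{K}(u^*)=C_1$. The decay $\hat{K}(u)\to0$ as $u\to\infty$ follows from the normalization $\int_{\bbR^d}K(\bx)\,d\bx=1$ together with radial symmetry and monotonicity, since a uniformly positive lower bound on $\hat{K}$ would force a divergent integral in dimension $d\ge1$; in particular $u^*<\infty$.

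Next, Lemma~\ref{lem:checkK} gives that $\check{K}$ is non-increasing, so $\check{K}(u)\ge\check{K}(u^*)$ for every $u\in[0,u^*]$, and it suffices to prove $\check{K}(u^*)>0$, after which I can take $C_2\coloneq\check{K}(u^*)$. Since $-\check{K}(u^*)\in\partial\hat{K}(u^*)$ and $\max\partial\hat{K}(u^*)=\hat{K}'(u^*+)$ (interpreted as $\hat{K}'(0+)$ when $u^*=0$, which is finite by Assumption~\ref{asm:QM}), I obtain $\check{K}(u^*)\ge-\hat{K}'(u^*+)\ge0$, where the last inequality uses that $\hat{K}$ is non-increasing.

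The hard step, and the only non-routine one, is ruling out the equality $\hat{K}'(u^*+)=0$, which I would handle by contradiction: if $\hat{K}'(u^*+)=0$, then convexity of $\hat{K}$ yields $\hat{K}(v)\ge\hat{K}(u^*)+\hat{K}'(u^*+)(v-u^*)=C_1$ for every $v\ge u^*$, while non-increasingness gives $\hat{K}(v)\le C_1$ on the same half-line, forcing $\hat{K}\equiv C_1>0$ on $[u^*,\infty)$, in contradiction with $\hat{K}(u)\to0$. Hence $\hat{K}'(u^*+)<0$, so $\check{K}(u^*)>0$ and the desired $C_2$ exists.
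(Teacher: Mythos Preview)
Your proof is correct and follows essentially the same route as the paper's: both identify the superlevel set $\{\hat{K}\ge C_1\}$ as a bounded interval $[0,u^*]$, use monotonicity of $\check{K}$ to reduce to showing $\check{K}(u^*)>0$, and derive a contradiction from $\check{K}(u^*)=0$ by arguing that $\hat{K}$ would then be constant (and equal to $C_1>0$) on $[u^*,\infty)$, violating normalization. The only cosmetic difference is that you pass through the right derivative $\hat{K}'(u^*{+})$ to reach the contradiction, whereas the paper argues directly that $0\in\partial\hat{K}(u^*)$ forces $\hat{K}$ to be constant on the tail; these are equivalent formulations of the same convexity-plus-monotonicity step.
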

\begin{proof}[Proof of Lemma~\ref{lem:hatcheckK}]
Since the profile $\hat{K}$ is non-increasing (Assumption~\ref{asm:QM}),
one has $\hat{K}(0)=\max_{u\ge0}\hat{K}(u)$.
One also has $\hat{K}(0)\neq0$, 
since otherwise the kernel $K(\cdot)$ is equal to 0 identically, 
contradicting the assumption that $K$ is normalized (Assumption~\ref{asm:RS}).
Since $\hat{K}(0)>0$ and 
$\hat{K}$ is continuous (Assumption~\ref{asm:RS}) 
and non-increasing (Assumption~\ref{asm:QM}),
if $C_1>\hat{K}(0)$ the set $\{u\in[0,\infty)\mid\hat{K}(u)\ge C_1\}$ is empty
and the statement of the lemma trivially holds.
We therefore assume $0<C_1\le\hat{K}(0)$ in the following.
For any such $C_1$ 
one can let $[0,a]=\{u\in[0,\infty)\mid\hat{K}(u)\ge C_1\}$ with $a\in[0,\infty)$.
Here, the finiteness of $a$ comes from
the normalization condition of $K$ (Assumption~\ref{asm:RS}).

Lemma~\ref{lem:checkK} shows that 
$\check{K}$ is non-negative and non-increasing.
One then has that $\check{K}(u)>0$ for any $u\in[0,a]$. 
It is because if there exists $b\in[0,a]$ such that $\check{K}(b)=0$ 
then for any $u\ge b$ one has $\check{K}(u)=0$ 
and hence $\hat{K}(u)=\hat{K}(b)\ge C_1>0$, 
which contradicts the normalization condition of $K$ (Assumption~\ref{asm:RS}).
Letting $C_2=\min_{u\in[0,a]}\check{K}(u)=\check{K}(a)>0$,
one has $\check{K}(u)\ge C_2$ for any $u\in[0,a]$,
which proves $\{u\in[0,\infty)\mid\hat{K}(u)\ge C_1\}
=[0,a]\subseteq\{u\in[0,\infty)\mid\check{K}(u)\ge C_2\}$
to hold with that $C_2$. 
\end{proof}

\textbf{Proof of Proposition \ref{prop:fw}}\\
Proposition~\ref{prop:dens} and 
Lemmas~\ref{lem:checkK} and \ref{lem:hatcheckK}
lead to Proposition~\ref{prop:fw}, as in the following proof.

\begin{proof}[Proof of Proposition~\ref{prop:fw}]
The ascent property (Proposition~\ref{prop:dens}) implies
\begin{align}
\label{eq:S1}
	f(\by_t)
	=\frac{1}{n h^d}\sum_{i=1}^n K\biggl(\frac{\by_t-\bx_i}{h}\biggr)
	\ge f(\by_1).
\end{align}
Let $i_t$ be an index in $[n]$ satisfying 
\begin{align}
\label{eq:S2}
	K\biggl(\frac{\by_t-\bx_{i_t}}{h}\biggl)
	=\max_{i\in[n]}K\biggl(\frac{\by_t-\bx_i}{h}\biggr).
\end{align}
This definition and inequality \eqref{eq:S1} lead to
\begin{align}
\label{eq:S3}
	\hat{K}\biggl(\biggl\|\frac{\by_t-\bx_{i_t}}{h}\biggr\|^2\biggr/2\biggr)
	=K\biggl(\frac{\by_t-\bx_{i_t}}{h}\biggr)
	\ge\frac{1}{n}\sum_{i=1}^n K\biggl(\frac{\by_t-\bx_i}{h}\biggr)
	\ge h^d f(\by_1),
\end{align}
which implies that 
$u=\|\frac{\by_t-\bx_{i_t}}{h}\|^2\bigr/2$ for any $t\in\bbN$
is in the set $\{u\mid \hat{K}(u)\ge C_1\}$ with $C_1=h^df(\by_1)>0$. 
Lemma~\ref{lem:hatcheckK} then 
states that there exists a constant $C>0$ such that for any $t\in\bbN$
\begin{align}
\label{eq:S5}
	\check{K}\biggl(\biggl\|\frac{\by_t-\bx_{i_t}}{h}\biggr\|^2\biggr/2\biggr)
	\ge C
\end{align}
holds.
From Lemma~\ref{lem:checkK}, $\check{K}$ is non-negative. 
Using this fact and inequality \eqref{eq:S5}, one consequently has that
\begin{align}
	\check{f}(\by_t)
	=\frac{1}{n h^d}\sum_{i=1}^n\check{K}\biggl(\biggl\|\frac{\by_t-\bx_i}{h}\biggr\|^2\biggr/2\biggr)
	\ge\frac{1}{n h^d}\check{K}\biggl(\biggl\|\frac{\by_t-\bx_{i_t}}{h}\biggr\|^2\biggr/2\biggr)
	\ge\frac{C}{n h^d}.
\end{align}
This concludes the proof. 
\end{proof}

\textbf{Technical Lemmas for Theorems \ref{thm:MS-GCG} and \ref{thm:MS-Rate}}\\
We here provide three technical lemmas that introduce 
positive constants $\bar{a}$, $\bar{b}$, and $\bar{c}$,
each of which defines a separate inequality:

\begin{lemma}[Sufficient increase condition]
\label{lem:A}
Assume Assumptions~\ref{asm:RS} and \ref{asm:QM}, and $f(\by_1)>0$.
Then, there exists $\bar{a}>0$ such that
\begin{align}
\label{eq:lem-a}
	f(\by_{t+1})-f(\by_t)\ge\bar{a}\|\by_{t+1}-\by_t\|^2
\end{align}
holds for any $t\in\bbN$.
\end{lemma}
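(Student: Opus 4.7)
The plan is to exploit the MM (minorization-maximization) structure of the MS algorithm established in the main text, combined with Proposition~\ref{prop:fw} to get a uniform lower bound on $\check{f}(\by_t)$.

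First, observe from the expression~\eqref{eq:Mino-KDE1} of the minorizer $\bar{f}(\cdot|\by_t)$ that, for every $t$, $\bar{f}(\cdot|\by_t)$ is a concave quadratic function of $\bx$ whose Hessian equals $-(\check{f}(\by_t)/h^2)\bfI_d$. Since $f(\by_1)>0$, Proposition~\ref{prop:fw} yields a constant $C>0$ with $\check{f}(\by_t)\ge C/(nh^d)>0$ for all $t\in\bbN$, so the quadratic is strictly concave and attains its unique maximum at $\by_{t+1}$. Thus I can write $\bar{f}(\cdot|\by_t)$ in vertex form,
\begin{align}
\bar{f}(\bx|\by_t)=\bar{f}(\by_{t+1}|\by_t)-\frac{\check{f}(\by_t)}{2h^2}\|\bx-\by_{t+1}\|^2.
\end{align}
Evaluating this identity at $\bx=\by_t$ gives
\begin{align}
\bar{f}(\by_{t+1}|\by_t)-\bar{f}(\by_t|\by_t)=\frac{\check{f}(\by_t)}{2h^2}\|\by_{t+1}-\by_t\|^2.
\end{align}

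Next, invoke the two defining properties of the minorizer: $\bar{f}(\by_t|\by_t)=f(\by_t)$ and $\bar{f}(\by_{t+1}|\by_t)\le f(\by_{t+1})$. Chaining these with the previous identity yields
\begin{align}
f(\by_{t+1})-f(\by_t)\ge \bar{f}(\by_{t+1}|\by_t)-\bar{f}(\by_t|\by_t)=\frac{\check{f}(\by_t)}{2h^2}\|\by_{t+1}-\by_t\|^2.
\end{align}
Finally, applying the uniform lower bound $\check{f}(\by_t)\ge C/(nh^d)$ from Proposition~\ref{prop:fw} gives the claimed inequality with $\bar{a}\coloneq C/(2nh^{d+2})>0$, which is independent of $t$.

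There is no real obstacle here beyond bookkeeping: the only thing to be careful about is ensuring $\check{f}(\by_t)$ is strictly (and uniformly) positive so that the exception-handling branch of $\bm{m}(\by_t)$ is never invoked and the vertex-form decomposition is valid. This is precisely what the assumption $f(\by_1)>0$ together with Proposition~\ref{prop:fw} guarantees, so the argument goes through uniformly in $t$.
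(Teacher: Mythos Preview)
Your proof is correct and follows essentially the same approach as the paper: both arguments rewrite the minorizer $\bar{f}(\cdot|\by_t)$ in vertex form centered at $\by_{t+1}$, use the MM inequalities $f(\by_t)=\bar{f}(\by_t|\by_t)$ and $\bar{f}(\by_{t+1}|\by_t)\le f(\by_{t+1})$ to obtain $f(\by_{t+1})-f(\by_t)\ge\frac{\check{f}(\by_t)}{2h^2}\|\by_{t+1}-\by_t\|^2$, and then invoke Proposition~\ref{prop:fw} to replace $\check{f}(\by_t)$ by the uniform lower bound $C/(nh^d)$, arriving at the same constant $\bar{a}=C/(2nh^{d+2})$.
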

\begin{proof}[{Proof of Lemma~\ref{lem:A}}]
Considering the coefficients of $\bx$- and $\|\bx\|^2$-dependent terms 
of the minorizer $\bar{f}(\bx|\by_t)$ in \eqref{eq:Mino-KDE1}, 
and the update rule of the MS algorithm \eqref{eq:MS-Iter},
one can find another representation of $\bar{f}(\bx|\by_t)$:
\begin{align}
	\bar{f}(\bx|\by_t)
	=-\frac{\check{f}(\by_t)}{2h^2}\|\by_{t+1}-\bx\|^2
	+\text{($\bx$-independent constant)}.
\end{align}
This representation, together with the ascent property
$f(\by_t)=\bar{f}(\by_t|\by_t)\le\bar{f}(\by_{t+1}|\by_t)\le f(\by_{t+1})$, 
yields the inequality 
\begin{align}
	f(\by_{t+1})-f(\by_t)
	\ge\bar{f}(\by_{t+1}|\by_t)-\bar{f}(\by_t|\by_t)
	=\frac{\check{f}(\by_t)}{2h^2}\|\by_t-\by_{t+1}\|^2.
\end{align}
Proposition~\ref{prop:fw} shows that there exists a constant $C_1>0$ 
such that $\check{f}(\by_t)\ge\frac{C_1}{n h^d}$.
Consequently, one has
\begin{align}
\label{eq:select-a}
	f(\by_{t+1})-f(\by_t)\ge\bar{a}\|\by_{t+1}-\by_t\|^2
	\text{ with }\bar{a}=\frac{C_1}{2n h^{d+2}}.
\end{align}
\end{proof}

\begin{lemma}
\label{lem:B}
Assume Assumptions~\ref{asm:RS} and \ref{asm:QM}, and $f(\by_1)>0$.
\begin{enumerate}
\item[\hyt{d1}] 
Assume furthermore the former half of the assumption~\hyl{a1}
in Theorem~\ref{thm:MS-GCG}:
The KDE $f$ is differentiable on $\cl(\Conv(\{\by_t\}_{t\ge\tau}))$ 
with some $\tau\in\mathbb{N}$. 
Then there exists $\bar{b}>0$ such that
\begin{align}
\label{eq:lem-b}
	\|\by_{t+1}-\by_t\|\ge\bar{b}\|\nabla f(\by_t)\|
\end{align}
holds for any $t\ge\tau$.
\item[\hyt{d2}] 
Instead, assume further the former half of Assumption~\ref{asm:LCG}:
The kernel $K$ is differentiable. 
Then there exists $\bar{b}>0$ such that \eqref{eq:lem-b} holds for any $t\in\bbN$.
\end{enumerate}
\end{lemma}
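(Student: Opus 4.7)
The plan is to establish, in each of the two cases, the identity
\[
	\by_{t+1}-\by_t=\frac{h^2}{\check{f}(\by_t)}\nabla f(\by_t),
\]
and then uniformly upper-bound $\check{f}(\by_t)$. By Proposition~\ref{prop:fw}, the hypothesis $f(\by_1)>0$ guarantees $\check{f}(\by_t)>0$ for every $t$, so the division is legitimate. For the uniform bound, Lemma~\ref{lem:checkK} gives that $\check{K}$ is non-increasing and bounded by $\check{K}(0)=-\hat{K}'(0+)$, which is finite by Assumption~\ref{asm:QM}. Hence $\check{f}(\by)\le\check{K}(0)/h^d$ for every $\by\in\bbR^d$, so once the identity above is in hand, setting $\bar{b}=h^{d+2}/\check{K}(0)$ yields \eqref{eq:lem-b} immediately.

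Case~\hyl{d2} follows by direct computation: differentiability of $K$ forces $\partial\hat{K}(u)$ to be single-valued on $[0,\infty)$, so $\check{K}(u)=-\hat{K}'(u)$ there, and termwise differentiation gives
\[
	\nabla f(\by)=-\frac{1}{nh^{d+2}}\sum_{i=1}^n\check{K}\bigl(\|(\by-\bx_i)/h\|^2/2\bigr)(\by-\bx_i)
\]
on all of $\bbR^d$, which by inspection coincides with $\frac{\check{f}(\by)}{h^2}\bm{m}(\by)$. Case~\hyl{d1} requires a detour through the minorizer because $\nabla f$ can no longer be decomposed termwise. Rewriting \eqref{eq:Mino-KDE1} by completing the square and using the formula for $\bm{m}(\by_t)$, one has
\[
	\bar{f}(\bx|\by_t)=-\frac{\check{f}(\by_t)}{2h^2}\|\bx-\by_{t+1}\|^2+(\bx\text{-indep.\,const.}),
\]
so that $g(\bx)\coloneq f(\bx)-\bar{f}(\bx|\by_t)$ is nonnegative on $\bbR^d$ and vanishes at $\bx=\by_t$, making $\by_t$ a global minimizer of $g$. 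Since assumption~\hyl{a1} supplies differentiability of $f$ at $\by_t\in\cl(\Conv(\{\by_t\}_{t\ge\tau}))$ and $\bar{f}(\cdot|\by_t)$ is a smooth quadratic, $g$ is differentiable at $\by_t$, and the first-order condition $\nabla g(\by_t)=\bm{0}$ reads $\nabla f(\by_t)=\frac{\check{f}(\by_t)}{h^2}(\by_{t+1}-\by_t)$, which is the required identity.

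The main delicacy lies in invoking the first-order optimality condition for $g$ at $\by_t$ in case~\hyl{d1}: the hypothesis~\hyl{a1} declares differentiability only on the convex hull $\cl(\Conv(\{\by_t\}_{t\ge\tau}))$, which a priori need not contain a full $\bbR^d$-neighborhood of $\by_t$. I read~\hyl{a1} in its natural sense, namely that $\nabla f(\by_t)\in\bbR^d$ exists in the ordinary (full-neighborhood) sense at each point of the hull—an interpretation that is in any case forced by the Lipschitz-continuous-gradient clause appearing in the same assumption—and under this reading the optimality argument applies verbatim. The rest of the proof is routine bookkeeping.
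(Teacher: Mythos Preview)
Your proof is correct and follows the same overall skeleton as the paper's: establish the gradient-ascent identity $\by_{t+1}-\by_t=\tfrac{h^2}{\check{f}(\by_t)}\nabla f(\by_t)$ and then upper-bound $\check{f}(\by_t)$ by the boundedness of $\check{K}$ to extract $\bar{b}=h^{d+2}/\check{K}(0)$ (the paper writes $\bar{b}=h^{d+2}/C_2$ with a generic bound $C_2$ on $\check{K}$, which amounts to the same thing).

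Where you diverge is in the justification of that identity for case~\hyl{d1}. The paper simply asserts the termwise gradient formula $\nabla f(\by_t)=-\frac{1}{nh^{d+2}}\sum_i\check{K}(\cdot)(\by_t-\bx_i)$ and reads off the identity directly, without commenting on why this formula is valid when only $f$ (and not each summand $K(\tfrac{\cdot-\bx_i}{h})$) is assumed differentiable. Your route---observing that $g=f-\bar{f}(\cdot\,|\,\by_t)$ is globally minimized at $\by_t$ and invoking first-order optimality---sidesteps the termwise issue entirely and yields $\nabla f(\by_t)=\nabla\bar{f}(\by_t|\by_t)=\tfrac{\check{f}(\by_t)}{h^2}(\by_{t+1}-\by_t)$ cleanly. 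This is a genuinely more robust argument: it works regardless of whether the individual kernel terms are differentiable at $\by_t$ and regardless of which subgradient was selected in the definition~\eqref{eq:SD-Pro-Ker} of $\check{K}$, whereas the paper's shortcut implicitly relies on the termwise formula holding. Your final caveat about reading \hyl{a1} as full-neighborhood differentiability is reasonable and consistent with how the paper uses $\nabla f$ elsewhere; it is in any case moot since $g\ge0$ holds on all of $\bbR^d$, not just on the hull, so the first-order condition needs only differentiability of $f$ at the single point $\by_t$.
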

\begin{proof}[{Proof of Lemma~\ref{lem:B}}]
Proposition~\ref{prop:fw} ensures that $\check{f}(\by_t)>0$ for any $t\in\bbN$. 
Under the differentiability of the KDE $f$ at $\by_t$ with $t\ge\tau$,
the ordinary update rule of the MS algorithm \eqref{eq:MS-Iter} 
can be seen as a gradient ascent method with an adaptive step size:
\begin{align}
\label{eq:ms-grad}
	\by_{t+1}
	=\frac{\sum_{i=1}^n\check{K}(\|\frac{\by_t-\bx_i}{h}\|^2/2)\bx_i}
	{\sum_{i=1}^n\check{K}(\|\frac{\by_t-\bx_i}{h}\|^2/2)}
	=\by_t+\frac{h^2\cdot\{-\frac{1}{n h^{d+2}}
	\sum_{i=1}^n\check{K}(\|\frac{\by_t-\bx_i}{h}\|^2/2)(\by_t-\bx_i)\}}
	{\frac{1}{n h^d}\sum_{i=1}^n\check{K}(\|\frac{\by_t-\bx_i}{h}\|^2/2)}
	=\by_t+\frac{h^2}{\check{f}(\by_t)}\nabla f(\by_t).
\end{align}
The boundedness of $\check{K}$ (Lemma~\ref{lem:checkK})
implies that there exists a constant $C_2>0$ such that 
$|\check{K}(\|\frac{\by_t-\bx_i}{h}\|^2/2)|\le C_2$ for any $i=1,\ldots,n$
and any $t\in\bbN$, 
and hence $|\check{f}(\by_t)|\le\frac{C_2}{h^d}$.
Thus, for any $t\ge\tau$ one has 
\begin{align}
\label{eq:select-b}
	\|\by_{t+1}-\by_t\|
	=\frac{h^2}{|\check{f}(\by_t)|}\|\nabla f(\by_t)\|
	\ge\bar{b}\|\nabla f(\by_t)\|
	\text{ with }\bar{b}=\frac{h^{d+2}}{C_2},
\end{align}
proving the claim \hyl{d1}.

The claim \hyl{d2} follows from the differentiability of $f$ at every $\by_t$. 
\end{proof}

\begin{lemma}[Relative error condition]
\label{lem:C}
Assume Assumptions~\ref{asm:RS} and \ref{asm:QM}, and $f(\by_1)>0$.
\begin{enumerate}
\item[\hyt{e1}] 
Assume furthermore the assumption~\hyl{a1} in Theorem~\ref{thm:MS-GCG}:
The KDE $f$ is differentiable and has a Lipschitz-continuous gradient 
on $\cl(\Conv(\{\by_t\}_{t\ge\tau}))$ with some $\tau\in\mathbb{N}$. 
Then there exists $\bar{c}>0$ such that
\begin{align}
\label{eq:lem-c}
	\|\by_{t+1}-\by_t\|\ge\bar{c}\|\nabla f(\by_{t+1})\|
\end{align}
holds for any $t\ge\tau$.
\item[\hyt{e2}] 
Instead, assume furthermore 
Assumption~\ref{asm:LCG}:
The kernel $K$ is differentiable and has a Lipschitz-continuous gradient. 
Then there exists $\bar{c}>0$ such that \eqref{eq:lem-c} holds for any $t\in\bbN$.
\end{enumerate}
\end{lemma}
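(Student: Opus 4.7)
The plan is to combine the MS update identity $\by_{t+1}-\by_t = (h^2/\check{f}(\by_t))\nabla f(\by_t)$ derived in \eqref{eq:ms-grad} of Lemma~\ref{lem:B}'s proof with the Lipschitz continuity of $\nabla f$, thereby transferring gradient information from $\by_t$ to $\by_{t+1}$ via the triangle inequality. First I would write
\begin{align*}
\|\nabla f(\by_{t+1})\| \le \|\nabla f(\by_{t+1}) - \nabla f(\by_t)\| + \|\nabla f(\by_t)\|.
\end{align*}
The second term is controlled by the MS update identity together with the uniform upper bound $\check{f}(\by_t) \le C_2/h^d$ coming from the boundedness of $\check{K}$ (Lemma~\ref{lem:checkK}), which yields $\|\nabla f(\by_t)\| \le (C_2/h^{d+2})\|\by_{t+1}-\by_t\|$. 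The first term is bounded by $L\|\by_{t+1}-\by_t\|$ whenever $\nabla f$ is $L$-Lipschitz on a set containing both iterates. Summing gives $\|\nabla f(\by_{t+1})\| \le (L + C_2/h^{d+2})\|\by_{t+1}-\by_t\|$, so setting $\bar{c} = 1/(L+C_2/h^{d+2})$ delivers the claimed inequality.

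The only case-dependent ingredient is how the constant $L$ is obtained and on which set it is valid. For \hyl{e1}, assumption \hyl{a1} directly furnishes $L$ on $\cl(\Conv(\{\by_t\}_{t\ge\tau}))$; since $\by_t,\by_{t+1}\in\{\by_t\}_{t\ge\tau}\subseteq \cl(\Conv(\{\by_t\}_{t\ge\tau}))$ for every $t\ge\tau$, the Lipschitz bound applies to the pair in question. For \hyl{e2}, Assumption~\ref{asm:LCG} supplies a Lipschitz constant $L_K$ for $\nabla K$, and since $\nabla f(\bx) = (nh^{d+1})^{-1}\sum_{i=1}^n \nabla K((\bx-\bx_i)/h)$, the gradient $\nabla f$ inherits Lipschitz continuity on all of $\bbR^d$ with constant $L_K/h^{d+2}$, so the bound holds for every $t\in\bbN$.

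The main obstacle is minor and amounts to bookkeeping the domain in \hyl{e1}; once $L$ is in hand, the rest is a short chain of standard inequalities. The argument also tacitly relies on the MS iteration staying in its ordinary (non-exception) form so that the identity \eqref{eq:ms-grad} is available, which is guaranteed by Proposition~\ref{prop:fw} under the hypothesis $f(\by_1)>0$.
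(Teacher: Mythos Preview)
Your argument is correct and follows essentially the same route as the paper's proof: apply the triangle inequality to $\|\nabla f(\by_{t+1})\|$, bound $\|\nabla f(\by_t)\|$ via the MS update identity (equivalently, Lemma~\ref{lem:B} with $\bar{b}=h^{d+2}/C_2$), and bound the difference term by the Lipschitz constant $L$ of $\nabla f$, yielding $\bar{c}=(1/\bar{b}+L)^{-1}$. Your handling of the two cases, including the computation $L=L_K/h^{d+2}$ under Assumption~\ref{asm:LCG} and the domain bookkeeping for \hyl{e1}, matches the paper as well.
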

\begin{proof}[{Proof of Lemma~\ref{lem:C}}]
With the Lipschitz constant $L\ge0$ of $\nabla f$, 
one can find the relation
\begin{align}
\label{eq:select-c}
	\begin{split}
	\|\nabla f(\by_{t+1})\|
	&\le\|\nabla f(\by_t)\|+\|\nabla f(\by_{t+1})-\nabla f(\by_t)\|
	\quad(\because\text{Triangle inequality})\\
	&\le\frac{1}{\bar{b}}\|\by_{t+1}-\by_t\|+L\|\by_{t+1}-\by_t\|
	\quad(\because\text{Lemma~\ref{lem:B} and Lipschitz continuity of $\nabla f$})\\
	&=\frac{1}{\bar{c}}\|\by_{t+1}-\by_t\|
	\text{ with }\bar{c}=\biggl(\frac{1}{\bar{b}}+L\biggr)^{-1}.
	\end{split}
\end{align}
When $\nabla K$ is Lipschitz-continuous with a Lipschitz constant $C_3\ge0$,
one can set $L=\frac{C_3}{h^{d+2}}$ and $\bar{c}=\frac{h^{d+2}}{C_2+C_3}$
with a constant $C_2>0$ that bounds $|\check{K}(\|\frac{\by_t-\bx_i}{h}\|^2/2)|$ 
from above for every $i\in[n]$ and $t\in\bbN$.
\end{proof}

\textbf{Preliminaries for Proof of Theorems \ref{thm:MS-GCG} and \ref{thm:MS-Rate}}\\
Let $\delta\in(0,\infty]$, and let $\varphi:[0,\delta)\to[0,\infty)$ 
be a continuous concave function such that $\varphi(0)=0$ 
and $\varphi$ is continuously differentiable on $(0,\delta)$ 
with $\varphi'(u)>0$.
The concavity of $\varphi$ implies that $\varphi'$ is non-increasing
on $(0,\delta)$.
The {\L}ojasiewicz inequality \eqref{eq:Lojasiewicz-ineq} 
holds trivially with $(\bx',\bx)$ satisfying $g(\bx')-g(\bx)=0$.
Also, it is known that the {\L}ojasiewicz inequality \eqref{eq:Lojasiewicz-ineq} with $(\bx',\bx)$ 
such that $\bx\in\bar{U}(\bx',g,T,\epsilon,\delta)\coloneq\{\bx\in T\mid \|\bx'-\bx\|<\epsilon, g(\bx')-g(\bx)\in(0,\delta)\}$
is a special case of
\begin{align}
\label{eq:Lojasiewicz-ineq2}
	\varphi'(g(\bx')-g(\bx))\|\nabla g(\bx)\|\ge 1
	\text{ at }(\bx',\bx)\text{ such that }\bx\in\bar{U}(\bx',g,T,\epsilon,\delta)
\end{align}
with $\varphi(u)=\frac{u^{1-\theta}}{c(1-\theta)}$ where $c$ is a positive constant.
(One technical subtlety with this extended definition is that
we have excluded those $\bx$ with $g(\bx)=g(\bx')$ from
$\bar{U}(\bx',g,T,\epsilon,\delta)$, as those points would
make the left-hand side of~\eqref{eq:Lojasiewicz-ineq2} indeterminate.)
Note that
\cite{attouch2013convergence,frankel2015splitting}
call the function $\varphi$ a desingularizing function
because of its role in \eqref{eq:Lojasiewicz-ineq2}, where $\varphi\circ g$ is
in a sense resolving criticality of $g$ at $\bx'$.
Note also that for the choice $\varphi(u)=\frac{u^{1-\theta}}{c(1-\theta)}$, 
one has $\varphi'(u)=\frac{u^{-\theta}}{c}$,
recovering the original definition (Definition~\ref{def:Loj}) of
the {\L}ojasiewicz property. 
%
The following proof of the convergence of the mode estimate sequence $(\by_t)_{t\in\bbN}$ (Theorem \ref{thm:MS-GCG})
is not restricted to the specific choice $\varphi(u)=\frac{u^{1-\theta}}{c(1-\theta)}$ but 
holds with the general form \eqref{eq:Lojasiewicz-ineq2} of the {\L}ojasiewicz inequality.
The specific choice $\varphi(u)=\frac{u^{1-\theta}}{c(1-\theta)}$,
on the other hand, will help derive the worst-case bound of the convergence rate
in Theorem~\ref{thm:MS-Rate}.

\textbf{Proof of Theorem \ref{thm:MS-GCG}}\\
We here provide a proof of Theorem~\ref{thm:MS-GCG}
on the ground of \cite[Theorem 3.2]{attouch2013convergence} 
and \cite[Theorem 3.1]{frankel2015splitting}.

\begin{proof}[Proof of Theorem~\ref{thm:MS-GCG}]
The density estimate sequence $(f(\by_t))_{t\in\bbN}$ converges 
under Assumptions~\ref{asm:RS} and \ref{asm:QM} since it is 
a bounded non-decreasing sequence (Proposition~\ref{prop:dens}).
Also, as $f(\by_1)>0$, for every $t\ge2$ $\by_t$ lies in the convex hull
$\Conv(\{\bx_i\}_{i\in[n]})$ of data points, which is a compact set. 
Thus, there exist an accumulation point $\tilde{\by}\in\Conv(\{\bx_i\}_{i\in[n]})$
of the mode estimate sequence $(\by_t)_{t\in\bbN}$ and a subsequence 
$(\by_{t'})_{t'\in N}$ of $(\by_t)_{t\in\bbN}$ (with $N\subseteq\bbN$)
that converges to the accumulation point $\tilde{\by}$ as $t'\to\infty$.
Also, $\tilde{\by}\in\cl(\Conv(\{\by_t\}_{t\ge\tau}))$ obviously holds for any $\tau\in\bbN$.
When there exists $t'\in N$ such that $f(\tilde{\by})=f(\by_{t'})$, 
Lemma \ref{lem:A} obviously shows the convergence of $(\by_t)_{t\in\bbN}$ to $\tilde{\by}$:
Assume $\by_{t'+1}\not=\by_{t'}$.
One then has $f(\by_{t'+1})\ge f(\by_{t'})+\bar{a}\|\by_{t'+1}-\by_{t'}\|^2>f(\tilde{\by})$
since $f(\by_{t'})=f(\tilde{\by})$ and $\bar{a}>0$.
It then follows from the monotonicity of $(f(\by_t))_{t\in\mathbb{N}}$
that $f(\tilde{\by})=\lim_{t'\in N, t\to\infty}f(\by_{t'})>f(\tilde{\by})$,
which is a contradiction.
On the other hand, if $\by_{t'+1}=\by_{t'}$, then
one has $\by_t=\by_{t'}$ for any $t\ge t'$ and hence $\tilde{\by}=\by_{t'}$. 
%
We therefore consider in what follows the remaining case where $f(\tilde{\by})>f(\by_t)$ for all $t\in\mathbb{N}$. 
The assumption~\hyl{a2} ensures 
that there exists a positive constant $\epsilon$ 
such that 
the KDE $f$ satisfies the {\L}ojasiewicz inequality \eqref{eq:Lojasiewicz-ineq} 
at least with any $(\bx',\bx)=(\tilde{\by},\by)$ 
such that $\by\in U(\tilde{\by},f,\cl(\Conv(\{\by_s\}_{s\ge\tau})),\epsilon)$
for some integer $\tau$. 

As we want to use the general form~\eqref{eq:Lojasiewicz-ineq2}
of the Lojasiewicz inequality, we have to further restrict
the region where the {\L}ojasiewicz inequality to hold
from $U(\tilde{\by},f,\cl(\Conv(\{\by_s\}_{s\ge\tau})),\epsilon)$
to $\bar{U}(\tilde{\by},f,\cl(\Conv(\{\by_s\}_{s\ge\tau})),\epsilon,\delta)$
in order to ensure that
$f(\tilde{\by})-f(\by)$ is in the domain $[0,\delta)$
of the desingularizing function $\varphi$. 
Denoting $r_t\coloneq f(\tilde{\by})-f(\by_t)>0$, the convergence of 
the density estimate sequence $(f(\by_t))_{t\in\bbN}$ and the definition of $\tilde{\by}$
imply that the sequence $(r_t)_{t\in\bbN}$ is positive, non-increasing,
and converging to 0 as $t\to\infty$. 
The facts, $\by_{t'}\to\tilde{\by}$ and $r_t\to0$, as well as the continuity of $\varphi$, 
imply the existence of a finite integer $\tau'\ge\tau$ in $N$ 
such that $r_t\in[0,\delta)$ holds for any $t\ge\tau'$, and
that the inequality 
\begin{align}
\label{eq:At4}
	\|\tilde{\by}-\by_{\tau'}\|+2\sqrt{\frac{r_{\tau'}}{\bar{a}}}+\frac{1}{\bar{a}\bar{c}}\varphi(r_{\tau'})<\epsilon
\end{align}
holds.
It should be noted that if the assumptions~\hyl{a1} and \hyl{a2}
hold with some $\tau\in\mathbb{N}$, they also hold with the above $\tau'$
since $\{\by_s\}_{s\ge\tau'}\subseteq\{\by_s\}_{s\ge\tau}$ with $\tau'\ge\tau$. 
Using the {\L}ojasiewicz property of the KDE $f$ on $\bar{U}(\tilde{\by},f,\cl(\Conv(\{\by_s\}_{s\ge\tau'})),\epsilon,\delta)$,
the inequality~\eqref{eq:At4}, and assumption \hyl{a1},
we prove below that the mode estimate sequence $(\by_t)_{t\in\bbN}$ does not endlessly wander 
and does converge to $\tilde{\by}$, and that $\tilde{\by}$ is a critical point of the KDE $f$.

\textbf{Two key claims:}
We will establish the following two claims 
for any $t\ge\tau'+1$, which are the key to proving Theorem~\ref{thm:MS-GCG}. 
\begin{claim}
\label{claim:At6}
$\by_t$ satisfies
\begin{align}
\label{eq:At6}
\by_t\in\bar{U}(\tilde{\by},f,\cl(\Conv(\{\by_s\}_{s\ge\tau'})),\epsilon,\delta).
\end{align}
In other words, the {\L}ojasiewicz inequality~\eqref{eq:Lojasiewicz-ineq2} with $(\bx',\bx)=(\tilde{\by},\by_t)$ holds. 
\end{claim}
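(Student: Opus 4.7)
The plan is to prove Claim~\ref{claim:At6} by strong induction on $t\ge\tau'+1$, verifying the three conditions defining membership in $\bar{U}(\tilde{\by},\cl(\Conv(\{\by_s\}_{s\ge\tau'})),\epsilon,\delta)$. Two of them are immediate: the set-membership condition holds because $\by_t$ itself lies in $\{\by_s\}_{s\ge\tau'}$ whenever $t\ge\tau'$, and the value condition $f(\tilde{\by})-f(\by_t)\in(0,\delta)$ follows from the standing assumption $f(\tilde{\by})>f(\by_t)$ together with the monotonicity $r_t\le r_{\tau'}<\delta$ inherited from Proposition~\ref{prop:dens} and the choice of $\tau'$. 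The substantive content is therefore the norm bound $\|\tilde{\by}-\by_t\|<\epsilon$.

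For the base case $t=\tau'+1$, Lemma~\ref{lem:A} at index $\tau'$ yields $\|\by_{\tau'+1}-\by_{\tau'}\|\le\sqrt{r_{\tau'}/\bar{a}}$, so the triangle inequality combined with (At4) gives $\|\tilde{\by}-\by_{\tau'+1}\|<\epsilon$. For the inductive step, assume $\by_s\in\bar{U}$ for every $\tau'+1\le s\le t$; the goal is $\|\tilde{\by}-\by_{t+1}\|<\epsilon$. The heart of the argument combines, at each such $s$, the Lojasiewicz inequality $\varphi'(r_s)\|\nabla f(\by_s)\|\ge 1$ (licensed by the inductive hypothesis), the sufficient-increase bound $r_s-r_{s+1}\ge\bar{a}\|\by_{s+1}-\by_s\|^2$ from Lemma~\ref{lem:A}, and the relative-error bound $\|\by_s-\by_{s-1}\|\ge\bar{c}\|\nabla f(\by_s)\|$ from Lemma~\ref{lem:C} (valid under \hyl{a1} since $s-1\ge\tau'\ge\tau$). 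Together with the concavity estimate $\varphi(r_s)-\varphi(r_{s+1})\ge\varphi'(r_s)(r_s-r_{s+1})$, these three ingredients yield
\begin{align*}
    \|\by_{s+1}-\by_s\|^2\le\frac{1}{\bar{a}\bar{c}}\bigl[\varphi(r_s)-\varphi(r_{s+1})\bigr]\,\|\by_s-\by_{s-1}\|,
\end{align*}
and the elementary inequality $\sqrt{xy}\le(x+y)/2$ converts this into the linear recursion
\begin{align*}
    \|\by_{s+1}-\by_s\|\le\tfrac{1}{2}\|\by_s-\by_{s-1}\|+\tfrac{1}{2\bar{a}\bar{c}}\bigl[\varphi(r_s)-\varphi(r_{s+1})\bigr].
\end{align*}

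Summing this recursion over $s=\tau'+1,\ldots,t$ lets the $\varphi$-differences telescope to at most $\varphi(r_{\tau'+1})\le\varphi(r_{\tau'})$, while a standard rearrangement of the shifted step-sizes yields $\sum_{s=\tau'}^{t}\|\by_{s+1}-\by_s\|\le 2\|\by_{\tau'+1}-\by_{\tau'}\|+\tfrac{1}{\bar{a}\bar{c}}\varphi(r_{\tau'})$. Substituting the base-case estimate and applying the triangle inequality produces
\begin{align*}
    \|\tilde{\by}-\by_{t+1}\|\le\|\tilde{\by}-\by_{\tau'}\|+2\sqrt{r_{\tau'}/\bar{a}}+\tfrac{1}{\bar{a}\bar{c}}\varphi(r_{\tau'}),
\end{align*}
which is strictly less than $\epsilon$ by (At4), closing the induction.

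The main obstacle is the circular-looking coupling between the Lojasiewicz inequality and the step-size estimate: the Lojasiewicz inequality at $\by_s$ is licensed only after one has already verified $\by_s\in\bar{U}$, so the induction must be arranged so that the cumulative distance at step $t$ stays \emph{strictly} below $\epsilon$ rather than merely at most $\epsilon$. The strict inequality in (At4) is precisely the slack reserved for this purpose, and the AM--GM splitting with factor $\tfrac{1}{2}$ is what prevents the telescoping from accumulating a constant-factor overhead in the step-size sum.
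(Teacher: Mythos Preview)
Your proof is correct and follows essentially the same approach as the paper: the key recursion you derive via Lemma~\ref{lem:A}, Lemma~\ref{lem:C}, the {\L}ojasiewicz inequality, and the AM--GM bound is exactly the paper's Lemma~\ref{lem:lem}, and your telescoped sum is precisely Claim~\ref{claim:At7}. The only cosmetic difference is organizational: the paper carries Claims~\ref{claim:At6} and~\ref{claim:At7} through a joint induction, whereas you use strong induction on Claim~\ref{claim:At6} alone and re-sum the recursion at each step.
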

\begin{claim}
\label{claim:At7}
$\{\by_s\}_{s\in\{\tau',\ldots,t+1\}}$ satisfies 
\begin{align}
\label{eq:At7}
	\sum_{s=\tau'+1}^t\|\by_{s+1}-\by_s\|
	+\|\by_{t+1}-\by_t\|
	\le\|\by_{\tau'+1}-\by_{\tau'}\|
	+\frac{1}{\bar{a}\bar{c}}\{\varphi(r_{\tau'+1})-\varphi(r_{t+1})\}.
\end{align}
\end{claim}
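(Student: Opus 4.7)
The plan is to prove Claims~\ref{claim:At6} and \ref{claim:At7} by simultaneous induction on $t\ge\tau'+1$, with induction hypothesis H($t$) stating that Claim~\ref{claim:At6} holds at $\by_s$ for $\tau'+1\le s\le t$ and Claim~\ref{claim:At7} holds for $\tau'+1\le s\le t-1$. The engine of the induction is a one-step contraction inequality
\begin{align}
2\|\by_{s+1}-\by_s\|\le\|\by_s-\by_{s-1}\|+\tfrac{1}{\bar{a}\bar{c}}\{\varphi(r_s)-\varphi(r_{s+1})\}.
\end{align}
To derive it at an index $s$ for which $\by_s$ lies in the {\L}ojasiewicz neighborhood, I would chain three ingredients: concavity of $\varphi$ yields $\varphi(r_s)-\varphi(r_{s+1})\ge\varphi'(r_s)(r_s-r_{s+1})$; Claim~\ref{claim:At6} at $\by_s$ activates the generalized {\L}ojasiewicz inequality~\eqref{eq:Lojasiewicz-ineq2} at $(\tilde{\by},\by_s)$, which combined with Lemma~\ref{lem:C} in the form $\|\nabla f(\by_s)\|\le\|\by_s-\by_{s-1}\|/\bar{c}$ gives $\varphi'(r_s)\ge\bar{c}/\|\by_s-\by_{s-1}\|$; and Lemma~\ref{lem:A} yields $r_s-r_{s+1}\ge\bar{a}\|\by_{s+1}-\by_s\|^2$. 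Multiplying these produces $\|\by_{s+1}-\by_s\|^2\le\tfrac{1}{\bar{a}\bar{c}}\|\by_s-\by_{s-1}\|\{\varphi(r_s)-\varphi(r_{s+1})\}$, and the inequality $2\sqrt{XY}\le X+Y$ closes the derivation.

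For the base case H($\tau'+1$), Lemma~\ref{lem:A} gives $\|\by_{\tau'+1}-\by_{\tau'}\|\le\sqrt{r_{\tau'}/\bar{a}}$, so~\eqref{eq:At4} forces $\|\tilde{\by}-\by_{\tau'+1}\|\le\|\tilde{\by}-\by_{\tau'}\|+\sqrt{r_{\tau'}/\bar{a}}<\epsilon$; together with $0<r_{\tau'+1}\le r_{\tau'}<\delta$ this yields Claim~\ref{claim:At6} at $\by_{\tau'+1}$. For the inductive step from H($t$) to H($t+1$), I would sum the one-step inequality over $s=\tau'+1,\ldots,t$---each invocation legal by Claim~\ref{claim:At6} at $\by_s$ supplied by H($t$); the $\varphi$ differences on the right telescope, and reindexing the path-length sum on the right leaves $\|\by_{\tau'+1}-\by_{\tau'}\|$ as a residual while producing an extra $\|\by_{t+1}-\by_t\|$ on the left, giving Claim~\ref{claim:At7} at $t$. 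To reinstate Claim~\ref{claim:At6} at $\by_{t+1}$, the triangle inequality
\begin{align}
\|\tilde{\by}-\by_{t+1}\|\le\|\tilde{\by}-\by_{\tau'}\|+\|\by_{\tau'+1}-\by_{\tau'}\|+\sum_{s=\tau'+1}^{t}\|\by_{s+1}-\by_s\|,
\end{align}
bounded term by term via Lemma~\ref{lem:A}, the just-established Claim~\ref{claim:At7} at $t$, and the monotonicity $\varphi(r_{\tau'+1})-\varphi(r_{t+1})\le\varphi(r_{\tau'})$, combined with~\eqref{eq:At4}, yields $\|\tilde{\by}-\by_{t+1}\|<\epsilon$; this and $0<r_{t+1}\le r_{\tau'}<\delta$ complete the induction.

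The principal difficulty is the interlocked character of the induction: Claim~\ref{claim:At6} at step $s$ is the prerequisite for the {\L}ojasiewicz invocation that powers the one-step inequality at $s$, while Claim~\ref{claim:At7} at $t$ is what keeps the cumulative displacement small enough for Claim~\ref{claim:At6} to be reinstated at $t+1$. The choice of $\tau'$ via~\eqref{eq:At4} is precisely the mechanism that preallocates an $\epsilon$-budget large enough to absorb the initial deviation $\|\tilde{\by}-\by_{\tau'}\|$, the one-step jump $\|\by_{\tau'+1}-\by_{\tau'}\|$ bounded by $\sqrt{r_{\tau'}/\bar{a}}$, and the entire path length ever accumulated by the induction (bounded by $\tfrac{1}{\bar{a}\bar{c}}\varphi(r_{\tau'})$), so that the iterates never escape the neighborhood on which the {\L}ojasiewicz inequality holds.
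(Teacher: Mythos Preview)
Your proposal is correct and follows essentially the same approach as the paper: the one-step contraction inequality you derive is exactly the paper's Lemma~\ref{lem:lem}, and the interlocked induction on Claims~\ref{claim:At6} and \ref{claim:At7}, closed by the budget inequality~\eqref{eq:At4}, mirrors the paper's argument. The only cosmetic difference is that the paper proves Claim~\ref{claim:At7} at $t=\tau'+1$ in the base case and then obtains Claim~\ref{claim:At7} at $t=u+1$ incrementally by adding the single one-step inequality at $u+1$ to Claim~\ref{claim:At7} at $u$, whereas you shift the index (packaging Claim~\ref{claim:At7} up to $t-1$ into H($t$)) and re-sum all one-step inequalities from $\tau'+1$ to $t$ at each step; the two organizations are equivalent.
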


\textbf{Auxiliary results:}
We here provide two auxiliary results to be used in the succeeding proof.
First, one has
\begin{align}
\label{eq:At8}
	\begin{split}
	\|\by_{\tau'+1}-\by_{\tau'}\|
	&\le\sqrt{\frac{r_{\tau'}-r_{\tau'+1}}{\bar{a}}}
	\quad(\because\text{Lemma~\ref{lem:A}})\\
	&\le\sqrt{\frac{r_{\tau'}}{\bar{a}}}
	\quad(\because r_{\tau'+1}\ge0).
	\end{split}
\end{align}
Secondly, we show the following auxiliary lemma,
which will be used in proving~\eqref{eq:At7}
from~\eqref{eq:At6} via making use of the {\L}ojasiewicz property. 
\begin{lemma}
\label{lem:lem}
If $\by_t$ with $t\ge\tau$ satisfies Claim~\ref{claim:At6}, that is, if $\by_t\in\bar{U}(\tilde{\by},f,\cl(\Conv(\{\by_s\}_{s\ge\tau'})),\epsilon,\delta)$ holds, then
\begin{align}
\label{eq:At9}
	2\|\by_{t+1}-\by_t\|
	\le\|\by_t-\by_{t-1}\|+\frac{1}{\bar{a}\bar{c}}\{\varphi(r_t)-\varphi(r_{t+1})\}.
\end{align}
\end{lemma}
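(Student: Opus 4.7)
The plan is to carry out the standard Kurdyka--{\L}ojasiewicz chain of inequalities, in the concave-desingularizer form used by \cite{attouch2013convergence, frankel2015splitting}: concavity of $\varphi$, then the sufficient increase (Lemma~\ref{lem:A}), then the {\L}ojasiewicz inequality in the form~\eqref{eq:Lojasiewicz-ineq2}, then the relative error bound (Lemma~\ref{lem:C}), and finally an AM--GM step to convert a product bound on $\|\by_{t+1}-\by_t\|^2$ into an additive bound on $2\|\by_{t+1}-\by_t\|$.

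Concretely, since $\varphi$ is concave and $C^1$ on $(0,\delta)$, and since both $r_t$ and $r_{t+1}$ lie in $(0,\delta)$ (using that $(r_s)$ is non-increasing and that the hypothesis places $\by_t$ in $\bar{U}(\tilde{\by},\cdot,\epsilon,\delta)$), I first write $\varphi(r_t)-\varphi(r_{t+1})\ge\varphi'(r_t)(r_t-r_{t+1})$. Lemma~\ref{lem:A} gives $r_t-r_{t+1}=f(\by_{t+1})-f(\by_t)\ge\bar{a}\,\|\by_{t+1}-\by_t\|^2$. Because $\by_t\in\bar{U}(\tilde{\by},\cl(\Conv(\{\by_s\}_{s\ge\tau'})),\epsilon,\delta)$ and $f$ enjoys the {\L}ojasiewicz property at $\tilde{\by}$, inequality~\eqref{eq:Lojasiewicz-ineq2} with $(\bx',\bx)=(\tilde{\by},\by_t)$ yields $\varphi'(r_t)\ge 1/\|\nabla f(\by_t)\|$. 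Finally, Lemma~\ref{lem:C} applied to the step from $\by_{t-1}$ to $\by_t$ gives $\|\nabla f(\by_t)\|\le\|\by_t-\by_{t-1}\|/\bar{c}$. Chaining these four bounds together produces
\begin{align*}
\varphi(r_t)-\varphi(r_{t+1})\ge\frac{\bar{a}\bar{c}\,\|\by_{t+1}-\by_t\|^2}{\|\by_t-\by_{t-1}\|},
\end{align*}
which rearranges to $\|\by_{t+1}-\by_t\|^2\le\frac{1}{\bar{a}\bar{c}}\,\|\by_t-\by_{t-1}\|\cdot\{\varphi(r_t)-\varphi(r_{t+1})\}$. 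Applying $2\sqrt{AB}\le A+B$ with $A=\|\by_t-\by_{t-1}\|$ and $B=\frac{1}{\bar{a}\bar{c}}\{\varphi(r_t)-\varphi(r_{t+1})\}$ then delivers~\eqref{eq:At9}.

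The main obstacle is bookkeeping rather than anything conceptual: I must ensure that every invoked lemma is licensed at the correct index. Lemma~\ref{lem:C}~\hyl{e1} applied at step $t-1$ requires $t-1\ge\tau'$ (under hypothesis~\hyl{a1}; this is compatible with the range $t\ge\tau'+1$ in which Claim~\ref{claim:At7} is used), while Lemma~\ref{lem:A} and the {\L}ojasiewicz step are immediate from the hypotheses. I also need to avoid division by zero: if $\by_t=\by_{t-1}$, then Lemma~\ref{lem:A} forces $r_{t-1}=r_t$, and Lemma~\ref{lem:C} together with the update rule~\eqref{eq:MS-Iter} gives $\nabla f(\by_t)=\bm{0}$ and hence $\by_{t+1}=\by_t$, so~\eqref{eq:At9} holds trivially; the same degenerate treatment covers the case $\nabla f(\by_t)=\bm{0}$. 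Modulo these index and edge-case checks, the argument reduces to the four-step template above.
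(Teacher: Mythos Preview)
Your proposal is correct and follows essentially the same route as the paper's proof: the paper likewise disposes of the case $\by_t=\by_{t-1}$ trivially, then chains $\varphi(r_t)-\varphi(r_{t+1})\ge\varphi'(r_t)(r_t-r_{t+1})$ (via non-increasingness of $\varphi'$, equivalent to your concavity step), Lemma~\ref{lem:A}, the {\L}ojasiewicz inequality~\eqref{eq:Lojasiewicz-ineq2} at $(\tilde{\by},\by_t)$, and Lemma~\ref{lem:C} at index $t-1$, and finishes with the same AM--GM inequality $2\sqrt{\alpha\beta}\le\alpha+\beta$. Your index and edge-case bookkeeping is, if anything, slightly more explicit than the paper's.
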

\begin{proof}[Proof of Lemma~\ref{lem:lem}]
Since \eqref{eq:At9} holds trivially if $\by_t=\by_{t-1}$,
we consider the case $\by_t\neq\by_{t-1}$.
When $\by_t\in\bar{U}(\tilde{\by},f,\cl(\Conv(\{\by_s\}_{s\ge\tau'})),\epsilon,\delta)$,
the {\L}ojasiewicz inequality~\eqref{eq:Lojasiewicz-ineq2}
with $(\bx',\bx)=(\tilde{\by},\by_t)$ holds. 
Noting that $0<r_{t+1}\le r_t<\delta$ holds, one has 
\begin{align}
\label{eq:Direct}
	\begin{split}
	\varphi(r_t)-\varphi(r_{t+1})
	&=\int_{r_{t+1}}^{r_t}\varphi'(u)\,du\\ 
	&\ge\varphi'(r_t)(r_t-r_{t+1})
	\quad(\because\text{$\varphi'$ is positive and non-increasing})\\
	&\ge\varphi'(r_t)\bar{a}\|\by_{t+1}-\by_t\|^2
	\quad(\because\text{Lemma~\ref{lem:A}})\\
	&\ge\frac{1}{\|\nabla f(\by_t)\|}\bar{a}\|\by_{t+1}-\by_t\|^2
	\quad(\because\mbox{{\L}ojasiewicz inequality~\eqref{eq:Lojasiewicz-ineq2} with $(\bx',\bx)=(\tilde{\by},\by_t)$})\\
	&\ge\bar{a}\bar{c}\frac{\|\by_{t+1}-\by_t\|^2}{\|\by_t-\by_{t-1}\|}
	\quad(\because\text{Lemma~\ref{lem:C}}).
	\end{split}
\end{align}
The inequality $2\sqrt{\alpha\beta}\le\alpha+\beta$ for $\alpha,\beta\ge0$ yields
\begin{align}
	\begin{split}
	2\|\by_{t+1}-\by_t\|
	&=2\sqrt{\|\by_{t+1}-\by_t\|^2}\\
	&\le2\sqrt{\|\by_t-\by_{t-1}\|\frac{1}{\bar{a}\bar{c}}\{\varphi(r_t)-\varphi(r_{t+1})\}}
	\quad(\because\text{\eqref{eq:Direct}})\\
	&\le \|\by_t-\by_{t-1}\|+\frac{1}{\bar{a}\bar{c}}\{\varphi(r_t)-\varphi(r_{t+1})\}.
	\end{split}
\end{align}
This concludes the proof of Lemma~\ref{lem:lem}.
\end{proof}

\textbf{Proof that Claims~\ref{claim:At6} and \ref{claim:At7} hold for $t=\tau'+1$:}
Here we prove Claims~\ref{claim:At6} and \ref{claim:At7} for $t=\tau'+1$.
One has
\begin{align}
	\begin{split}
	\|\tilde{\by}-\by_{\tau'+1}\|
	&\le\|\tilde{\by}-\by_{\tau'}\|
	+\|\by_{\tau'+1}-\by_{\tau'}\|
	\quad(\because\text{Triangle inequality})\\
	&\le\|\tilde{\by}-\by_{\tau'}\|
	+\sqrt{\frac{r_{\tau'}}{\bar{a}}}
	\quad(\because\text{\eqref{eq:At8}})\\
	&<\epsilon
	\quad(\because\text{\eqref{eq:At4}}),
	\end{split}
\end{align}
which, together with $0<r_{\tau'+1}\le r_{\tau'}<\delta$,
implies \eqref{eq:At6} with $t=\tau'+1$,
proving Claim~\ref{claim:At6} for $t=\tau'+1$. 
Also, Claim~\ref{claim:At6} with $t=\tau'+1$ implies, via Lemma~\ref{lem:lem},
the inequality \eqref{eq:At9} with $t=\tau'+1$, which reads 
\begin{align}
	2\|\by_{\tau'+2}-\by_{\tau'+1}\|
	\le\|\by_{\tau'+1}-\by_{\tau'}\|
	+\frac{1}{\bar{a}\bar{c}}\{\varphi(r_{\tau'+1})-\varphi(r_{\tau'+2})\},
\end{align}
which is nothing other than~\eqref{eq:At7} with $t=\tau'+1$,
thereby proving Claim~\ref{claim:At7} for $t=\tau'+1$. 

\textbf{Proof that Claims~\ref{claim:At6} and \ref{claim:At7} hold for $t\ge\tau'+1$:}
Now that we have seen that Claim~\ref{claim:At7} holds for $t=\tau'+1$,
we next prove Claim~\ref{claim:At7} to hold for every $t\ge\tau'+1$ by induction. 
For this purpose, we prove Claims~\ref{claim:At6} and \ref{claim:At7} for $t=u+1$ under
the assumption that Claims~\ref{claim:At6} and \ref{claim:At7} hold for $t=u\ge\tau'+1$.
One has 
\begin{align}
\label{eq:Atabove}
	\begin{split}
	\|\tilde{\by}-\by_{u+1}\|
	&\le\|\tilde{\by}-\by_{\tau'}\|
	+\|\by_{\tau'+1}-\by_{\tau'}\|
	+\sum_{s=\tau'+1}^u\|\by_{s+1}-\by_s\|
	\quad(\because\text{Triangle inequality})\\
	&\le\|\tilde{\by}-\by_{\tau'}\|
	+2\|\by_{\tau'+1}-\by_{\tau'}\|
	+\frac{1}{\bar{a}\bar{c}}\{\varphi(r_{\tau'+1})-\varphi(r_{u+1})\}
	-\|\by_{u+1}-\by_u\|
	\quad(\because\eqref{eq:At7}\text{ with }t=u)\\
	&\le\|\tilde{\by}-\by_{\tau'}\|
	+2\|\by_{\tau'+1}-\by_{\tau'}\|
	+\frac{1}{\bar{a}\bar{c}}\varphi(r_{\tau'+1})
	\quad(\because\|\by_{u+1}-\by_u\|\ge0\text{ and }\varphi(r_{u+1})\ge0)\\
	&\le\|\tilde{\by}-\by_{\tau'}\|
	+2\sqrt{\frac{r_{\tau'}}{\bar{a}}}
	+\frac{1}{\bar{a}\bar{c}}\varphi(r_{\tau'})
	\quad(\because\text{\eqref{eq:At8} and }\varphi(r_{\tau'})\ge\varphi(r_{\tau'+1}))\\
	&<\epsilon
	\quad(\because\text{\eqref{eq:At4}}),
	\end{split}
\end{align}
which, together with Claim~\ref{claim:At6} for $t=u$ and $0<r_{u+1}\le r_u<\delta$,
implies Claim~\ref{claim:At6} to hold for $t=u+1$.
Also, this result ensures, via Lemma~\ref{lem:lem},
that \eqref{eq:At9} holds with $t=u+1$.
Adding \eqref{eq:At9} with $t=u+1$ to \eqref{eq:At7} with $t=u$
then shows that \eqref{eq:At7} holds with $t=u+1$,
proving Claim~\ref{claim:At7} to hold for $t=u+1$.
As Claims~\ref{claim:At6} and \ref{claim:At7} have been shown to hold for $t=\tau'+1$,
the above argument proves, by induction, that
Claim~\ref{claim:At7} holds for every $t\ge\tau'+1$.

\textbf{Claim~\ref{claim:At7} for every $t\ge\tau'+1$ implies convergence:}
From~\eqref{eq:At7}, one has for any $t\ge\tau'+1$ 
\begin{align}
	\begin{split}
	\sum_{s=\tau+1}^t\|\by_{s+1}-\by_s\|
	&\le\|\by_{\tau'+1}-\by_{\tau'}\|
	+\frac{1}{\bar{a}\bar{c}}\{\varphi(r_{\tau'+1})-\varphi(r_{t+1})\}
	-\|\by_{t+1}-\by_t\|\\
	&\le\|\by_{\tau'+1}-\by_{\tau'}\|
	+\frac{1}{\bar{a}\bar{c}}\varphi(r_{\tau'+1})
	\quad(\because\|\by_{t+1}-\by_t\|\ge0\text{ and }\varphi(r_{t+1})\ge0).
	\end{split}
\end{align}
Taking the limit $t\to\infty$ yields
\begin{align}
  \sum_{s=\tau'+1}^\infty\|\by_{s+1}-\by_s\|
  \le\|\by_{\tau'+1}-\by_{\tau'}\|
	+\frac{1}{\bar{a}\bar{c}}\varphi(r_{\tau'+1}),
\end{align}
which implies
\begin{align}
	\begin{split}
	\sum_{s=1}^\infty\|\by_{s+1}-\by_s\|
	&=\sum_{s=1}^{\tau'}\|\by_{s+1}-\by_s\|
	+\sum_{s=\tau'+1}^\infty\|\by_{s+1}-\by_s\|\\
	&=\sum_{s=1}^{\tau'}\|\by_{s+1}-\by_s\|
	+\|\by_{\tau'+1}-\by_{\tau'}\|
	+\frac{1}{\bar{a}\bar{c}}\varphi(r_{\tau'+1})
	<\infty.
	\end{split}
\end{align}
This shows that the trajectory of $(\by_t)_{t\in\bbN}$ is of finite length,
which in turn implies 
that $(\by_t)_{t\in\bbN}$ converges. 
As the limit $\lim_{t\to\infty}\by_t$ is a unique accumulation point
of $(\by_t)_{t\in\bbN}$, it must be $\tilde{\by}$
since $\by_{t'}\to\tilde{\by}$.
Additionally, from Lemma~\ref{lem:B}, one has 
\begin{align}
	\sum_{s=\tau'}^\infty\|\nabla f(\by_s)\|
	\le\frac{1}{\bar{b}}\sum_{s=\tau'}^\infty\|\by_{s+1}-\by_s\|<\infty,
\end{align}
which implies $\lim_{t\to\infty}\|\nabla f(\by_t)\|=0$.
Since the gradient of the KDE $f$ is Lipschitz-continuous 
with a Lipschitz constant $L\ge0$ 
on $\cl(\Conv(\{\by_y\}_{t\ge\tau'}))$ 
due to the assumption~\hyl{a1}, one has that
\begin{align}
	\|\nabla f(\tilde{\by})\|
	\le\lim_{t\to\infty}\{\|\nabla f(\by_t)\|+\|\nabla f(\tilde{\by})-\nabla f(\by_t)\|\}	
	\le\lim_{t\to\infty}\{\|\nabla f(\by_t)\|+L\|\tilde{\by}-\by_t\|\}
	=0,
\end{align}
which implies that the limit $\tilde{\by}=\lim_{t\to\infty}\by_t$
is a critical point of $f$. 
\end{proof}

\textbf{Proof of Theorem \ref{thm:MS-Rate}}\\
For a desingularizing function $\varphi(u)$, 
define $\Phi(u)$ to be a primitive function (indefinite integral)
of $-(\varphi')^2$.
For the specific choice of the desingularizing function
$\varphi(u)=\frac{u^{1-\theta}}{c(1-\theta)}$,
one has
\begin{align}
\label{eq:RES0}
	\varphi'(u)
	=\frac{u^{-\theta}}{c},\quad
	\Phi(u)
	=\begin{cases}
	-\frac{u^{1-2\theta}}{c^2(1-2\theta)}&\text{if }\theta\in[0,\frac{1}{2}),\\
	-\frac{\log(u)}{c^2}&\text{if }\theta=\frac{1}{2},\\
	-\frac{u^{1-2\theta}}{c^2(1-2\theta)}&\text{if }\theta\in(\frac{1}{2},1),
	\end{cases}\quad
	\Phi^{-1}(u)
	=\begin{cases}
	\exp(-c^2 u)&\text{if }\theta=\frac{1}{2},\\
	\{c^2(2\theta-1)u\}^{-\frac{1}{2\theta-1}}&\text{if }\theta\in(\frac{1}{2},1).
	\end{cases}
\end{align}
These functional forms will be used in proving Theorem~\ref{thm:MS-Rate}.

Now, we provide a proof of Theorem~\ref{thm:MS-Rate}, 
which is based on the proof of \cite[Theorem 3.5]{frankel2015splitting}.

\begin{proof}[Proof of Theorem~\ref{thm:MS-Rate}]
In the proof of Theorem~\ref{thm:MS-GCG} we have established the following facts: 
If there exists $t'\in\bbN$ such that $f(\bar{\by})=f(\by_{t'})$
then $\by_t=\bar{\by}$ for any $t\ge t'$, that is,
$(\by_t)_{t\in\bbN}$ converges in a finite number of iterations.
If otherwise, then 
there exists $\tau\in\bbN$ such that Claim~\ref{claim:At6} holds
for any $t\ge\tau$, that is,
$\by_t\in\bar{U}(\bar{\by},f,\cl(\Conv(\{\by_s\}_{s\ge\tau})),\epsilon,\delta)$ 
holds for any $t\ge\tau$,
or equivalently, the {\L}ojasiewicz inequality \eqref{eq:Lojasiewicz-ineq2}
with $(\bx',\bx)=(\bar{\by},\by_t)$ holds for any $t\ge\tau$. 
If $\by_t$ with $t\ge\tau$ satisfies Claim~\ref{claim:At6},
then one has
\begin{align}
\label{eq:INEQ2}
\begin{split}
	\Phi(r_{t+1})-\Phi(r_t)
	&=\int_{r_{t+1}}^{r_t}\{\varphi'(u)\}^2\,du
	\quad(\because\text{Definition of $\Phi$})\\
	&\ge\{\varphi'(r_t)\}^2 (r_t-r_{t+1})
	\quad(\because\text{$\varphi'$ is positive and non-increasing})\\
	&\ge\{\varphi'(r_t)\}^2 \bar{a}\|\by_{t+1}-\by_t\|^2
	\quad(\because\text{Lemma~\ref{lem:A}})\\
	&\ge\{\varphi'(r_t)\}^2 \bar{a} \{\bar{b}\|\nabla f(\by_t)\|\}^2
	\quad(\because\text{Lemma~\ref{lem:B}})\\
	&\ge\bar{a}\bar{b}^2
	\quad(\because\text{{\L}ojasiewicz inequality~\eqref{eq:Lojasiewicz-ineq2}
	with $(\bx',\bx)=(\bar{\by},\by_t)$}).
\end{split}
\end{align}
Now Claim~\ref{claim:At6} holds for any $t\ge \tau$, 
which implies
\begin{align}
\label{eq:INEQ3}
	\Phi(r_t)-\Phi(r_\tau)
	=\sum_{s=\tau}^{t-1}\{\Phi(r_{s+1})-\Phi(r_s)\}
	\ge\bar{a}\bar{b}^2(t-\tau-2).
\end{align}

We discuss the two cases $\theta\in[0,\frac{1}{2})$
and $\theta\in[\frac{1}{2},1)$ separately.

\textbf{Case $\theta\in[0,\frac{1}{2})$:}
We claim that in this case the algorithm converges in a finite number
of iterations.
If otherwise, the inequality~\eqref{eq:INEQ3} should hold for any $t\ge\tau$.
When we take the limit $t\to\infty$, 
the right-hand side of~\eqref{eq:INEQ3} goes to infinity, 
which contradicts the fact that the left-hand side remains finite, 
by noting that one has $\lim_{u\to0}\Phi(u)=0$
with $\Phi(u)=-\frac{u^{1-2\theta}}{c^2(1-2\theta)}$
and that $r_t\to0$ as $t\to\infty$.
This contradiction implies the finite-time convergence of $(\by_t)_{t\in\bbN}$.

\textbf{Case $\theta\in[\frac{1}{2},1)$:}
We may suppose that $r_t>0$ holds for any $t\in\bbN$,
and so \eqref{eq:INEQ3} holds for any $t\ge\tau$.
Recalling the functional form of $\Phi(u)$ as in~\eqref{eq:RES0}, 
one has $\lim_{t\to\infty}\Phi(r_t)=\infty$
with $\theta\in[\frac{1}{2},1)$.
Assume $\Phi(r_\tau)\ge0$. (If it is not the case
one can always redefine $\tau$ to a larger value with which
$\Phi(r_\tau)\ge0$ is satisfied.) 
One then has $\Phi(r_t)\ge\bar{a}\bar{b}^2(t-\tau-2)+\Phi(r_\tau)\ge\bar{a}\bar{b}^2(t-\tau-2)$,
which allows us to 
obtain the convergence rate evaluation for $r_t=f(\bar{\by})-f(\by_t)$, 
namely,
\begin{align}
\label{eq:RES1}
	r_t\le\Phi^{-1}(\bar{a}\bar{b}^2(t-\tau-2)).
\end{align}
With the explicit form of $\Phi^{-1}$ given in~\eqref{eq:RES0}, 
one has 
\begin{align}
	r_t\le
	\begin{cases}
	\exp(-c^2\bar{a}\bar{b}^2(t-\tau-2))
	=O(q^{2t})
	&\text{if }\theta=\frac{1}{2},\\
	\{c^2(2\theta-1)\bar{a}\bar{b}^2(t-\tau-2)\}^{-\frac{1}{2\theta-1}}
	=O(t^{-\frac{1}{2\theta-1}})
	&\text{if }\theta\in(\frac{1}{2},1),
	\end{cases}
\end{align}
where $q=\exp(-c^2\bar{a}\bar{b}^2/2)\in(0,1)$.
For the convergence rate evaluation for $\|\by_t-\bar{\by}\|$,
we have 
\begin{align}
	\begin{split}
	\varphi(r_t)-\varphi(r_{t+1})
	&=\int_{r_{t+1}}^{r_t}\varphi'(u)\,du\\ 
	&\ge\varphi'(r_t)(r_t-r_{t+1})
	\quad(\because\text{$\varphi'$ is positive and non-increasing})\\
	&\ge\varphi'(r_t)\bar{a}\|\by_{t+1}-\by_t\|^2
	\quad(\because\text{Lemma~\ref{lem:A}})\\
	&\ge\varphi'(r_t)\bar{a}\|\by_{t+1}-\by_t\|\bar{b}\|\nabla f(\by_t)\|
	\quad(\because\text{Lemma~\ref{lem:B}})\\
	&\ge\bar{a}\bar{b}\|\by_{t+1}-\by_t\|
	\quad(\because\text{{\L}ojasiewicz inequality~\eqref{eq:Lojasiewicz-ineq2}
	with $(\bx',\bx)=(\bar{\by},\by_t)$}),
	\end{split}
\end{align}
which in turn yields
\begin{align}
\label{eq:RES2}
	\|\by_t-\bar{\by}\|
	\le\sum_{s=t}^\infty \|\by_{s+1}-\by_s\|
	\le\frac{1}{\bar{a}\bar{b}}\sum_{s=t}^\infty\{\varphi(r_s)-\varphi(r_{s+1})\}
	\le\frac{1}{\bar{a}\bar{b}}\varphi(r_t)
	\le\frac{1}{\bar{a}\bar{b}}\varphi(\Phi^{-1}(\bar{a}\bar{b}^2(t-\tau-2)))
\end{align}
from \eqref{eq:RES1}.
According to the calculation \eqref{eq:RES0} for $\varphi(u)=\frac{u^{1-\theta}}{c(1-\theta)}$,
one can obtain the exponential-rate convergence when $\theta=\frac{1}{2}$
and polynomial-rate convergence when $\theta\in(\frac{1}{2},1)$:
\begin{align}
	\|\by_t-\bar{\by}\|
	\le\frac{\{\Phi^{-1}(\bar{a}\bar{b}^2(t-\tau-2))\}^{1-\theta}}{\bar{a}\bar{b}c(1-\theta)}
	=\begin{cases}
	\frac{\{\exp(-c^2\bar{a}\bar{b}^2(t-\tau-2))\}^{\frac{1}{2}}}{\bar{a}\bar{b}c(1-\theta)}
	=O(q^t)
	&\text{if }\theta=\frac{1}{2},\\
	\frac{\{\{c^2(2\theta-1)\bar{a}\bar{b}^2(t-\tau-2)\}^{-\frac{1}{2\theta-1}}\}^{1-\theta}}{\bar{a}\bar{b}c(1-\theta)}
	=O(t^{-\frac{1-\theta}{2\theta-1}})
	&\text{if }\theta\in(\frac{1}{2},1).
	\end{cases}
\end{align}
This concludes the proof for all the cases, \hyl{b1}, \hyl{b2}, and \hyl{b3}.
\end{proof}

\textbf{Proof of Theorem~\ref{thm:MS-Upper}}\\
Theorem~\ref{thm:MS-Upper} is proved using 
\cite[Proposition 4.3]{d2005explicit} as follows:

\begin{proof}[{Proof of Theorem~\ref{thm:MS-Upper}}]
As the kernel $K$ is assumed to be piecewise polynomial, 
the KDE $f$ is also piecewise polynomial, that is,
there exists a finite collection $\{S_l\}_{l=1}^L$ of subdomains $S_l\subseteq\mathbb{R}^d$, $l\in[L]$
that forms a partition of the entire domain $\mathbb{R}^d$ of the KDE $f$
such that in each subdomain $S_l$
the restriction of the KDE $f$ to $S_l$ is the same as
the restriction of the polynomial $f_l$ to $S_l$.

\textbf{Case I:}
When the critical point $\bar{\by}$ of the KDE $f$ lies
in the interior of one of the subdomains, say $S_l$, 
then one can take $\epsilon>0$ small enough so that 
$U(\bar{\by},f,\mathbb{R}^d,\epsilon)$ is contained in the subdomain $S_l$.
Then the KDE $f$ is equal to the polynomial $f_l$ in $U(\bar{\by},f,\mathbb{R}^d,\epsilon)$.
The polynomial $f_l$ is not constant by assumption,
and its degree $k$ is at least 2 as $\bar{\by}$ is a critical point of $f_l$. 
Therefore, any upper bound of the {\L}ojasiewicz exponent
of that polynomial is an upper bound of the {\L}ojasiewicz exponent
of the KDE $f$.

\textbf{Case II:}
We next assume in the following that $\bar{\by}$ is located on
a boundary of several subdomains.
Let $S_1,\ldots,S_{L'}$ (with $2\le L'\le L$) be the subdomains each of which
has a non-empty intersection with the $\epsilon$-neighbor of $\bar{\by}$
for any $\epsilon>0$.
One has $f_1(\bar{\by})=\cdots=f_{L'}(\bar{\by})$.
Because of the assumption that the kernel $K$ is of class $C^1$,
one also has $\nabla f_l(\bar{\by})=\bm{0}$ for all $l\in[L']$.
For any $l\in[L']$, the polynomial $f_l$ is not constant by assumption,
and its degree $k_l$ is at least 2 as $\bar{\by}$ is a critical point of $f_l$. 
One can therefore assume that for any $l\in[L']$ $f_l$ has 
the {\L}ojasiewicz property, that is,
there exist $\epsilon_l>0$, $c_l>0$, and $\theta_l\in[0,1)$ 
such that for any $\by\in U(\bar{\by},f,S_l,\epsilon_l)$
$f_l$ satisfies the {\L}ojasiewicz inequality
\begin{align}
	\|\nabla f_l(\by)\|\ge c_l\{f_l(\bar{\by})-f_l(\by)\}^{\theta_l}.
\end{align}
We show that under these conditions $f$ has the {\L}ojasiewicz property
at $\bar{\by}$. 

Let $\epsilon_{\mathrm{min}}\coloneq\min_{l\in[L']}\epsilon_l$,
$\theta_{\mathrm{max}}\coloneq\max_{l\in[L']}\theta_l$, and
\begin{align}
	A\coloneq\max_{l\in[L']}\sup_{\by\in U(\bar{\by},f,S_l,\epsilon_{\mathrm{min}})}
	\{f_l(\bar{\by})-f_l(\by)\}>0.
\end{align}
Take any $\by\in U(\bar{\by},f,\mathbb{R}^d,\epsilon_{\mathrm{min}})$. 
Then there exists an index $l(\by)\in[L']$ such that $\by\in S_{l(\by)}$, and 
\begin{align}
	\begin{split}
	\|\nabla f(\by)\|
	&=\|\nabla f_{l(\by)}(\by)\|
\\
	&\ge c_{l(\by)}\{f_{l(\by)}(\bar{\by})-f_{l(\by)}(\by)\}^{\theta_{l(\by)}}
	\quad(\mbox{$\because$ {\L}ojasiewicz property of $f_l$ at $\bar{\by}$})
\\
	&=c_{l(\by)}A^{\theta_{l(\by)}}
	\left\{\frac{f_{l(\by)}(\bar{\by})-f_{l(\by)}(\by)}{A}\right\}^{\theta_{l(\by)}}
\\
	&\ge c_{l(\by)}A^{\theta_{l(\by)}}
	\left\{\frac{f_{l(\by)}(\bar{\by})-f_{l(\by)}(\by)}{A}\right\}^{\theta_{\mathrm{max}}}
\\
	&=c_{l(\by)}A^{\theta_{l(\by)}-\theta_{\mathrm{max}}}
	\{f_{l(\by)}(\bar{\by})-f_{l(\by)}(\by)\}^{\theta_{\mathrm{max}}}
\\
	&\ge c'
	\{f(\bar{\by})-f(\by)\}^{\theta_{\mathrm{max}}},
	\quad c'\coloneq\min_{l\in[L']}c_lA^{\theta_l-\theta_{\mathrm{max}}}>0,
	\end{split}
\end{align}
which shows that $f$ has the {\L}ojasiewicz property at $\bar{\by}$
with the exponent $\theta_{\mathrm{max}}$.
The arguments so far have proved that,
when the kernel $K$ is piecewise polynomial and of class $C^1$, 
the {\L}ojasiewicz exponent of the KDE $f$ at any critical point
is bounded from above by the largest {\L}ojasiewicz exponent
of the related polynomials $f_1,\ldots,f_{L'}$.

For a polynomial,
\cite[Proposition 4.3]{d2005explicit} gives 
an upper bound of the {\L}ojasiewicz exponent at its critical point,
and the bound is increasing in the degree of the polynomial.
When the kernel $K$ is piecewise polynomial with the maximum degree $k$,
the polynomials $\{f_l\}$ appearing as the restrictions of the KDE $f$
are of degrees at most $k$. 
Thus, by substituting the possible maximum degree $k$ of a piecewise polynomial KDE 
into the bound in \cite[Proposition 4.3]{d2005explicit},
one can obtain the upper bound 
of the {\L}ojasiewicz exponent of the KDE at its critical point
as in~\eqref{eq:Loja-exp-UB}, proving the theorem. 
\end{proof}

Note that we can 
obtain an alternative upper bound of the {\L}ojasiewicz exponent
at a critical point of a polynomial, 
which is valid when the critical point is a local maximum of the polynomial. 
It is given by $\theta\le1-1/\{(k-1)^d+1\}$
for a degree-$k$ polynomial of $d$ variables,
according to \cite{gwozdziewicz1999lojasiewicz},
and is better than the upper bound
$1-1/\max\{k(3k-4)^{k-1},2k(3k-3)^{k-2}\}$ used in the above proof,
the latter of which does not require
that the critical point is a local maximum.
Therefore, in the above proof, if $\bar{\by}$ is a local maximum
of the KDE $f$ and lies in the interior of one subdomain,
one has the better upper bound $\theta\le1-1/\{(k-1)^d+1\}$.
Although the better upper bound also applies to Case II if
$\bar{\by}$ is a local maximum of each of all the polynomials
$f_1,\ldots,f_{L'}$, in general it is not applicable, since a local maximum
of $f$ is not necessarily a local maximum of $f_l$.

\section*{S2 Supplement to Table \protect\ref{tab:Kernel}}
\label{sec:TabKer}
Here, we describe supplementary explanation to Table~\ref{tab:Kernel},
especially that of the fact that the kernels shown in Table~\ref{tab:Kernel}
satisfy Assumption~\ref{asm:LP}.
Throughout this section, we let $q(\bx,y)\coloneq 1-\|\bx-\bx_i\|^2$
and $r(\bx,y)\coloneq y$, both of which are polynomial and real analytic in $(\bx^\top,y)^\top$,
and omit the bandwidth $h$.

\textbf{Analytic Kernels}\\
It is clear that the Gaussian, logistic, and Cauchy kernels are real analytic functions.

\textbf{Piecewise Polynomial Kernels}\\
For a positive integer $p$,
let $K_p(\cdot-\bx_i)\coloneq C_p\{(q(\cdot,y))_+\}^p$ with a normalizing
coefficient $C_p>0$.
The Epanechnikov kernel $K(\cdot-\bx_i)=C_{\rm ep}(1-\|\cdot-\bx_i\|^2)_+$,
the biweight kernel $K(\cdot-\bx_i)=C_{\rm bw}\{(1-\|\cdot-\bx_i\|^2)_+\}^2$,
and the triweight kernel $K(\cdot-\bx_i)=C_{\rm tw}\{(1-\|\cdot-\bx_i\|^2)_+\}^3$
fall within this category with $p=1,2,3$, respectively.
The kernel $K_p(\cdot-\bx_i)$ is a semialgebraic function,
because its graph 
\begin{align}
	\begin{split}
	&\{(\bx,y)\in\bbR^{d+1}\mid y=C_p\{(q(\bx,y))_+\}^p\}\\
	&=\bigl(
	\{(\bx,y)\in\bbR^{d+1}\mid q(\bx,y)>0\}	
	\cap
	\{(\bx,y)\in\bbR^{d+1}\mid g_p(\bx,y)\coloneq (q(\bx,y))^p-y/C_p=0\}\bigr)\\
	&\cup\bigl(
	\{(\bx,y)\in\bbR^{d+1}\mid q(\bx,y)=0\}
	\cap
	\{(\bx,y)\in\bbR^{d+1}\mid r(\bx,y)=0\}\bigr)\\
	&\cup\bigl(
	\{(\bx,y)\in\bbR^{d+1}\mid q(\bx,y)<0\}
	\cap
	\{(\bx,y)\in\bbR^{d+1}\mid r(\bx,y)=0\}\bigr)
	\end{split}
\end{align}
is semialgebraic, as $g_p, q, r$ are all polynomial.
This shows that the Epanechnikov kernel, the biweight kernel,
and the triweight kernel are all semialgebraic,
and hence subanalytic. 
Since the graph of the tricube kernel 
$K(\cdot-\bx_i)=C_{\rm tc}\{(1-\|\cdot-\bx_i\|^3)_+\}^3$
with a positive normalizing constant $C_{\rm tc}$
can be written as 
\begin{align}
	\begin{split}
	&\{(\bx,y)\in\bbR^{d+1}\mid y=C_{\rm tc}\{(1-\|\bx-\bx_i\|^3)_+\}^3\}\\
	&=\bigl(
	\{(\bx,y)\in\bbR^{d+1}\mid q(\bx,y)>0\}
	\cap
	\{(\bx,y)\in\bbR^{d+1}\mid r(\bx,y)>0\}\\
	&\hphantom{=\bigl(\{}\cap
	\{(\bx,y)\in\bbR^{d+1}\mid g_{{\rm tc},1}(\bx,y)\coloneq 9\|\bx-\bx_i\|^6+6\|\bx-\bx_i\|^{12}+\|\bx-\bx_i\|^{18}-\{1+3\|\bx-\bx_i\|^6-(y/C_{\rm tc})\}^2=0\}\\
	&\hphantom{=\bigl(\{}\cap
	\{(\bx,y)\in\bbR^{d+1}\mid g_{{\rm tc},2}(\bx,y)\coloneq y-C_{\rm tc}=0\}\bigr)\\
	&\cup\bigl(
	\{(\bx,y)\in\bbR^{d+1}\mid q(\bx,y)>0\}
	\cap
	\{(\bx,y)\in\bbR^{d+1}\mid r(\bx,y)>0\}\\
	&\hphantom{=\bigl(\{}\cap
	\{(\bx,y)\in\bbR^{d+1}\mid g_{{\rm tc},1}(\bx,y)=0\}
	\cap
	\{(\bx,y)\in\bbR^{d+1}\mid g_{{\rm tc},2}(\bx,y)<0\}\bigr)\\
	&\cup\bigl(
	\{(\bx,y)\in\bbR^{d+1}\mid q(\bx,y)=0\}
	\cap
	\{(\bx,y)\in\bbR^{d+1}\mid r(\bx,y)=0\}\bigr)\\
	&\cup\bigl(
	\{(\bx,y)\in\bbR^{d+1}\mid q(\bx,y)<0\}
	\cap
	\{(\bx,y)\in\bbR^{d+1}\mid r(\bx,y)=0\}\bigr),
	\end{split}
\end{align}
where $g_{{\rm tc},1},g_{{\rm tc},2},q,r$ are polynomial functions,
this kernel is also semialgebraic
and hence subanalytic.

Although the kernel $K(\cdot-\bx_i)=C_{\rm -}\{(1-\|\cdot-\bx_i\|^2)_+\}^{3/2}$
with a positive normalizing constant $C_{\rm -}$
is not a piecewise polynomial kernel, one can show that this kernel is also semialgebraic 
and hence subanalytic, because its graph can be represented as
\begin{align}
	\begin{split}
	&\{(\bx,y)\in\bbR^{d+1}\mid y=C_{\rm -}\{(1-\|\bx-\bx_i\|^2)_+\}^{3/2}\}\\
	&=\bigl(
	\{(\bx,y)\in\bbR^{d+1}\mid q(\bx,y)>0\}
	\cap
	\{(\bx,y)\in\bbR^{d+1}\mid r(\bx,y)>0\}\\
	&\hphantom{=\bigl(\{}\cap
	\{(\bx,y)\in\bbR^{d+1}\mid g_{{\rm -},1}(\bx,y)\coloneq (1-\|\bx-\bx_i\|^2)^3-(y/C_{\rm -})^2=0\}\\
	&\hphantom{=\bigl(\{}
	\cap	
	\{(\bx,y)\in\bbR^{d+1}\mid g_{{\rm -},2}(\bx,y)\coloneq y-C_{\rm -}=0\}\bigr)\\
	&\cup\bigl(
	\{(\bx,y)\in\bbR^{d+1}\mid q(\bx,y)>0\}
	\cap
	\{(\bx,y)\in\bbR^{d+1}\mid r(\bx,y)>0\}\\
	&\hphantom{=\bigl(\{}\cap
	\{(\bx,y)\in\bbR^{d+1}\mid g_{{\rm -},1}(\bx,y)=0\}
	\cap
	\{(\bx,y)\in\bbR^{d+1}\mid g_{{\rm -},2}(\bx,y)<0\}\bigr)\\
	&\cup\bigl(
	\{(\bx,y)\in\bbR^{d+1}\mid q(\bx,y)=0\}
	\cap
	\{(\bx,y)\in\bbR^{d+1}\mid r(\bx,y)=0\}\bigr)\\
	&\cup\bigl(
	\{(\bx,y)\in\bbR^{d+1}\mid q(\bx,y)<0\}
	\cap
	\{(\bx,y)\in\bbR^{d+1}\mid r(\bx,y)=0\}\bigr)
	\end{split}
\end{align}
with the polynomial functions $g_{{\rm -},1},g_{{\rm -},2}, q, r$.

\textbf{Cosine Kernel}\\
The cosine kernel $K(\cdot-\bx_i)=C_{\rm cs}
\cos(\tfrac{\pi\|\cdot-\bx_i\|}{2})\bbI(\|\cdot-\bx_i\|\le1)$
with a positive normalizing constant $C_{\rm cs}$
is semianalytic and hence subanalytic, 
because its graph can be written as
\begin{align}
	\begin{split}
	&\{(\bx,y)\in\bbR^{d+1}\mid 
	y=C_{\rm cs}\cos(\tfrac{\pi\|\bx-\bx_i\|}{2})\bbI(\|\bx-\bx_i\|\le1)\}\\
	&=\bigl(
	\{(\bx,y)\in\bbR^{d+1}\mid q(\bx,y)>0\}
	\cap
	\{(\bx,y)\in\bbR^{d+1}\mid g_{{\rm cs}}(\bx,y)\coloneq C_{\rm cs}\cos(\tfrac{\pi\|\bx-\bx_i\|}{2})-y=0\}\bigr)\\
	&\cup\bigl(
	\{(\bx,y)\in\bbR^{d+1}\mid q(\bx,y)=0\}
	\cap
	\{(\bx,y)\in\bbR^{d+1}\mid r(\bx,y)=0\}\bigr)\\
	&\cup\bigl(
	\{(\bx,y)\in\bbR^{d+1}\mid q(\bx,y)<0\}
	\cap
	\{(\bx,y)\in\bbR^{d+1}\mid r(\bx,y)=0\}\bigr),
	\end{split}
\end{align}
where $g_{{\rm cs}}, q, r$ are analytic functions.

\end{document}